\documentclass{article}

\usepackage[preprint]{neurips_2026}

\usepackage[utf8]{inputenc} % allow utf-8 input
\usepackage[T1]{fontenc}    % use 8-bit T1 fonts
\usepackage[hidelinks]{hyperref}       % hyperlinks
\usepackage{url}            % simple URL typesetting
\usepackage{booktabs}       % professional-quality tables
\usepackage{amsfonts}       % blackboard math symbols
\usepackage{nicefrac}       % compact symbols for 1/2, etc.
\usepackage{microtype}      % microtypography
\usepackage{xcolor}         % colors

\usepackage{multirow}
\usepackage{amsmath}
\usepackage{amsthm}
\usepackage{mathtools}

\usepackage[capitalize,noabbrev]{cleveref} 

\usepackage{algorithm}
\usepackage{algorithmic}
\usepackage{arydshln}

\usepackage{listings}
\usepackage{xspace}
\usepackage{thmtools}
\usepackage{thm-restate}
\usepackage{subfig}

\usepackage{colortbl}       % Colorful tables
\usepackage{graphicx}
\usepackage{tabularx}
\usepackage{float}
\usepackage{enumitem}
\usepackage{xparse}
\usepackage[export]{adjustbox}

\usepackage[english]{babel}

\newtheorem{theorem}{Theorem}
\newtheorem{corollary}[theorem]{Corollary}
\newtheorem{definition}[theorem]{Definition}
\newtheorem{lemma}[theorem]{Lemma}

\theoremstyle{remark}
\newtheorem{remark}{Remark}

\newcommand{\floor}[1]{{\left \lfloor #1 \right \rfloor}}
\newcommand{\ceil}[1]{{\left \lceil #1 \right \rceil}}

\newcommand{\N}{\mathcal{N}}

\newcommand{\T}{\mathcal{T}}

\usepackage[disable]{todonotes}

\newcommand{\placeholderchristian}[1]{\todo[inline, color=gray]{TODO responsible Christian: #1}}

\newcommand{\cl}[1]{}
\newcommand{\de}[1]{}
\newcommand{\tc}[1]{}
\newcommand{\ab}[1]{}

\newcommand{\methodname}{\texttt{Lumberjack}}
\title{Lumberjack: Better Differentially Private Random Forests through Heavy Hitter Detection in Trees}

\author{%
    Christian Janos Lebeda\textsuperscript{1,$\dagger$},
	David Erb\textsuperscript{2,$\dagger$,}\thanks{Work done while visiting PreMeDICaL, Inria.}\,\,,
	Tudor Cebere\textsuperscript{1},
	Aurélien Bellet\textsuperscript{1}\\
\\
	\textsuperscript{$\dagger$}Lead authors\\
	\\
	\textsuperscript{1}PreMeDICaL, Inria, Université de Montpellier, INSERM, France\\
    \textsuperscript{2}Technical University of Munich, Germany\\
}

\begin{document}

\maketitle
% Trimming the forest:....
% Lumberjack:....

\begin{abstract}
    Random forests are widely used in fields involving sensitive tabular data, but existing approaches to enforcing differential privacy (DP) typically degrade performance to the point of impracticality. In this paper, we introduce \methodname{}, a differentially private random forest algorithm that achieves substantially higher utility by constructing large random decision trees and then applying aggressive, privacy-preserving pruning to retain only sufficiently populated nodes. A key component of our approach is a novel $(\varepsilon,\delta)$-DP heavy hitter detection algorithm for hierarchical data, whose error is $O_{\varepsilon,\delta}(\sqrt{\log h})$ for trees of height $h$ and may be of independent interest. This favorable scaling enables the use of significantly deeper trees than in prior work, leading to improved expressiveness under privacy constraints. Our empirical evaluation on benchmark datasets shows that \methodname{} consistently outperforms prior DP random forest methods, establishing a new state of the art. In particular, our approach yields substantial improvements in the privacy-utility trade-off for practical privacy budgets. Our findings suggest that carefully designed DP random forests can close much of the utility gap, highlighting a promising and underexplored direction for future research.
    % We propose a novel differentially private random forest mechanism that significantly outperforms prior work.
    % Random forests are widely used in various fields with sensitive data which makes them a natural candidate for differential privacy (DP), but previous attempts to apply DP generally offer too low utility for practical use cases.
    % Several prior work mimic traditional greedy tree building algorithms like CART or ID3, while others build completely data-independent trees loosely inspired by Extra Trees~\citep{breiman2001random}.
    % We propose a middle-ground approach to mitigate the shortcomings of both approaches.
    % We introduce \methodname{} an algorithm that builds large random decision trees and applies aggressive pruning.
    % To this end, we design a novel DP heavy hitter algorithm of independent interest.
    % The error of our heavy hitter technique scales with only $O(\sqrt{\log h})$ for trees of height $h$. 
    % This allows us to construct much deeper decision trees than prior work and more closely resemble non-private Extra Trees.
    % Experiments show that we comfortably outperform prior work.
    % Our work highlights the need for more sophisticated DP random forests algorithms and shows the promise of this somewhat neglected research direction. 
\end{abstract}

\section{Introduction}
\label{sec:intro}

Random forests (RFs) \citep{breiman2001random,geurts2006extremely} remain among the most accurate and most widely used methods for tabular prediction, often matching or outperforming more complex deep learning approaches \citep{fernandez2014we,DBLP:conf/nips/GrinsztajnOV22,kaggle,10.1371/journal.pone.0301541}. Built as ensembles of decision trees that recursively partition the feature space and aggregate predictions across leaves, RFs combine strong predictive performance with several practical advantages: they naturally accommodate heterogeneous feature types, require comparatively little hyperparameter tuning, and are often substantially more computationally efficient than deep neural networks \citep{DBLP:conf/nips/GrinsztajnOV22}. These properties have made them a central tool in many high-stakes applications involving sensitive tabular data, from healthcare to financial risk assessment.

At the same time, like other machine learning models, random forests can leak information about individual records used to train them. For instance, recent work has shown that, under certain conditions, some training examples can be reconstructed from publicly released random forest models \citep{pmlr-v235-ferry24a}, motivating the need for formal privacy protections. Differential Privacy (DP) \citep{DworkMNS06} has emerged as the standard framework for addressing such risks by providing rigorous guarantees that an algorithm's output changes only minimally when any one individual's data is added or removed from the training set.\looseness=-1

% Differential Privacy (DP) \citep{DworkMNS06} has emerged as the standard for mitigating these risks by providing a mathematical framework to assess the information leaked about any individual in the training dataset, ensuring that the model output remains indistiguishable whether a individual's data was included or not. Driven by the remarkable success of deep learning over the last decade, most differentially private machine learning research has focused on training deep neural networks using differentially private optimizers such as DP-SGD \citep{abadi_deep_2016}. However, as shown by \citet{tramer_differentially_2021}, the performance of neural networks is heavily penalized by noisy gradient methods, as the magnitude of the noise must scale with the dimensionality of the model's parameters. Consequently, neural networks trained via DP-SGD can be outperformed by much simpler and less-parametrized learning paradigms. 

Much of the literature on differentially private machine learning has focused on differentiable models, particularly deep neural networks trained with private optimization methods such as DP-SGD \citep{abadi_deep_2016,DBLP:journals/jair/PonomarevaHKXDMVCT23}. However, this paradigm can incur substantial utility costs under privacy constraints, because private gradient-based training injects noise into a high-dimensional optimization process, with costs that scale unfavorably with model dimensionality \citep{tramer_differentially_2021}. As a result, private deep models may underperform substantially simpler learning methods.\looseness=-1

% cl: I don't follow the argument below, so I replaced it. "relocating privacy costs away from high-dimensional private optimization and into tree-building mechanisms that may be more favorable under differential privacy" seems vague and unjustified. I tried to keep the message but rephrased it.
%This motivates exploring alternative model classes whose privacy-utility tradeoffs are structured differently. In particular, RFs offer the possibility of relocating privacy costs away from high-dimensional private optimization and into tree-building mechanisms that may be more favorable under differential privacy. This perspective suggests a different route for managing the challenges of private learning, and motivates renewed attention to differentially private random forests.

This motivates the exploration of alternative model classes with differently structured privacy-utility trade-offs, such as random forests. 
% Consequently, a natural candidate to study in this regime is random forests. \cl{It's not just to avoid the noise of high-parametrized settings. Forests are also preferred in practice in many non-private scenarios. Aurelien or I can add some references.}
However, constructing differentially private RFs presents several challenges.
The primary technical difficulty lies in building high-quality decision tree structures while satisfying strict privacy constraints.
%privately build effective decision trees.
%The key challenge is how to design an effective privacy-preserving decision tree building algorithm. \cl{Should be more precise. It's unclear what tree building means. It's the structure that's the primary challenge for DP, not leaves}
Decision tree algorithms recursively partition the dataset by selecting a feature and split threshold value. In standard non-private approaches \citep{breiman2001random}, these splits are chosen greedily by querying the data to optimize an impurity-based criterion.
% When a stopping condition such as maximum depth or minimum node size is met, a node is converted to a leaf which can be used for future predictions.
% Standard algorithms such as in \citep{breiman2001random} query the dataset to greedily select a good split based on minimizing an impurity measure. 
When adapted naively to the differentially private setting, such data-dependent procedures incur significant noise, leading to poor utility~(e.g. \citep{PatilS14_random_forest,FletcherI15_greedy}).\footnote{Unfortunately, the existing literature on differentially private random forests also includes several incorrect or flawed analyses. We found privacy issues in more than 10 papers, which we summarize in \Cref{app:privacy-concerns}.
% For this reason it is unclear which algorithms should be considered the prior state-of-the-art. 
Despite the fact that these methods do not satisfy differential privacy as claimed, our approach still substantially outperforms their reported performance.}
A common workaround is to use fully random trees, inspired by extremely randomized trees \citep{geurts2006extremely}. In these approaches \citep{JagannathanPW12_small_random, JagannathanMP13_semi_private_data, FletcherI15_random, fletcher2017differentially,holohan2019diffprivlibibmdifferentialprivacy}, tree structures are generated using splits that are completely independent of the data, while the data is used only to label leaves.
%structure is generated independently of the data-splitting features, and thresholds are chosen uniformly at random, the data being exclusively reserved for querying the labels of the leaf nodes. 
%\ab{we should not only focus on fully random trees: we should mention other approaches to private forests and give a clear argument/motivation in favor of focusing on fully random trees. A possible argument is that it is indeed the current SOTA despite the limitations of existing methods}
Unfortunately, fully random trees also suffer from significant limitations. 
%Performance is typically very sensitive to the depth hyperparameter. 
Their performance is highly sensitive to tree depth: shallow trees yield overly large leaves, while deep trees produce leaves with too few samples for reliable privatized predictions. They are also strongly dependent on the underlying data distribution and often fail to capture informative structure, collapsing to majority-class predictions. We illustrate this behavior on a toy example in \Cref{fig:moons-toy-example}, using an infinite privacy budget to emphasize that this is an inherent limitation of data-independent tree construction. Consequently, DP ensembles of fully random trees typically perform poorly. Overall, existing differentially private random forest methods, including both greedy and fully random variants, are of limited practical utility.
\looseness=-1

\begin{figure}[t]
    \centering
    % First minipage: 75% of the text width for the image grid
    \begin{minipage}{0.72\textwidth}
        \centering
        \setlength{\tabcolsep}{2pt} 
        \renewcommand{\arraystretch}{0}
        \begin{tabular}{ccc}
            \includegraphics[width=0.32\linewidth, valign=m]{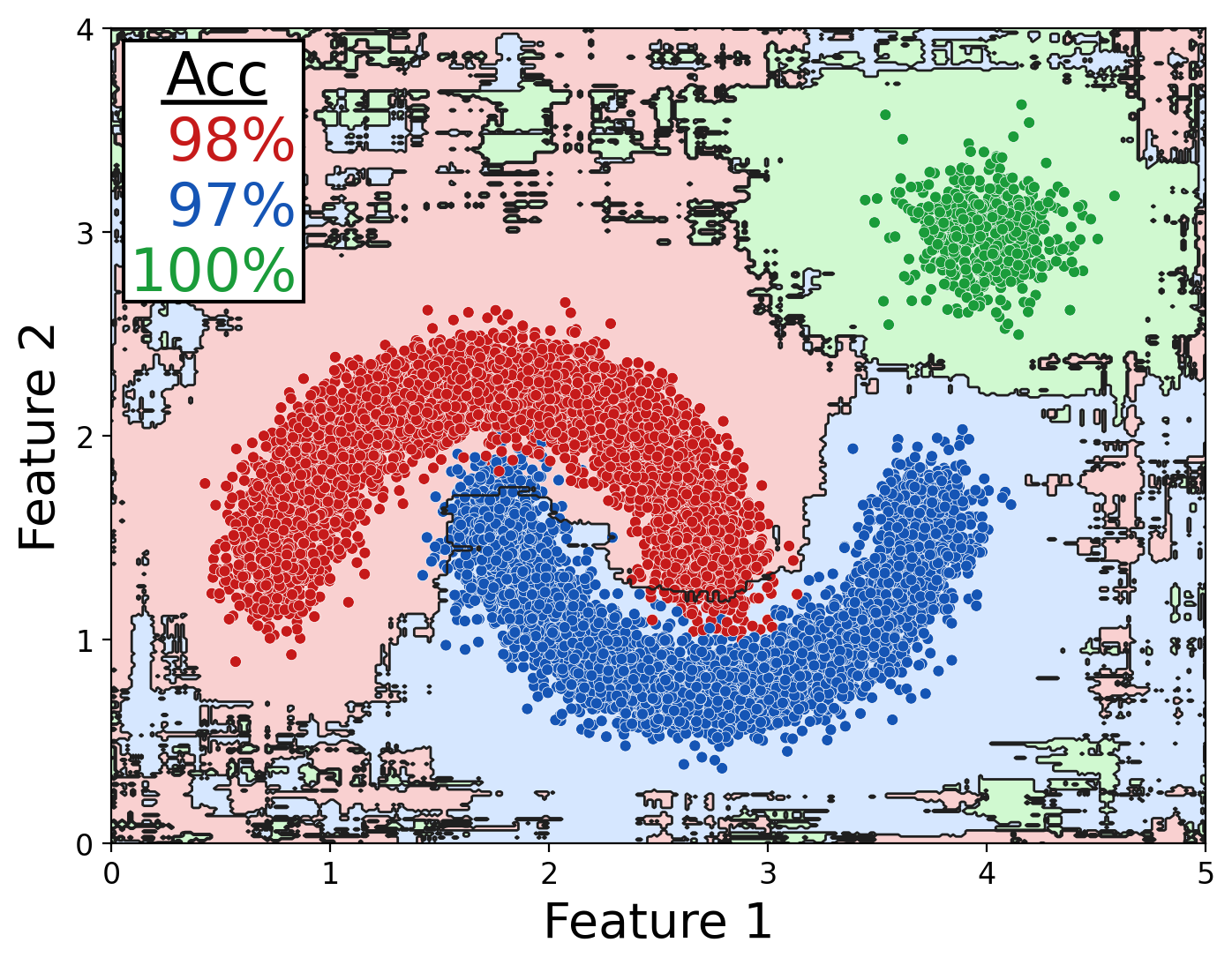} &
            \includegraphics[width=0.32\linewidth, valign=m]{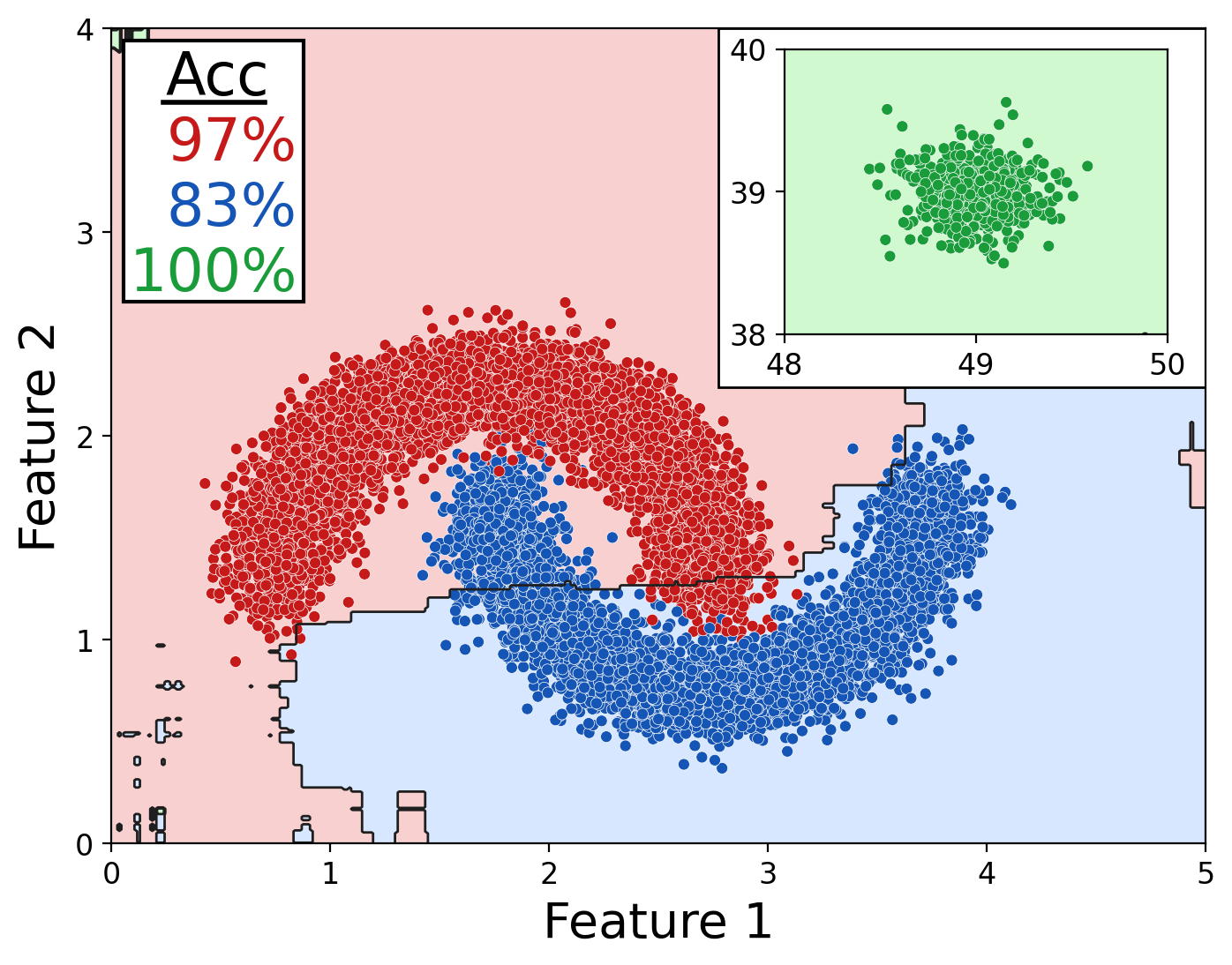} &
            \includegraphics[width=0.32\linewidth, valign=m]{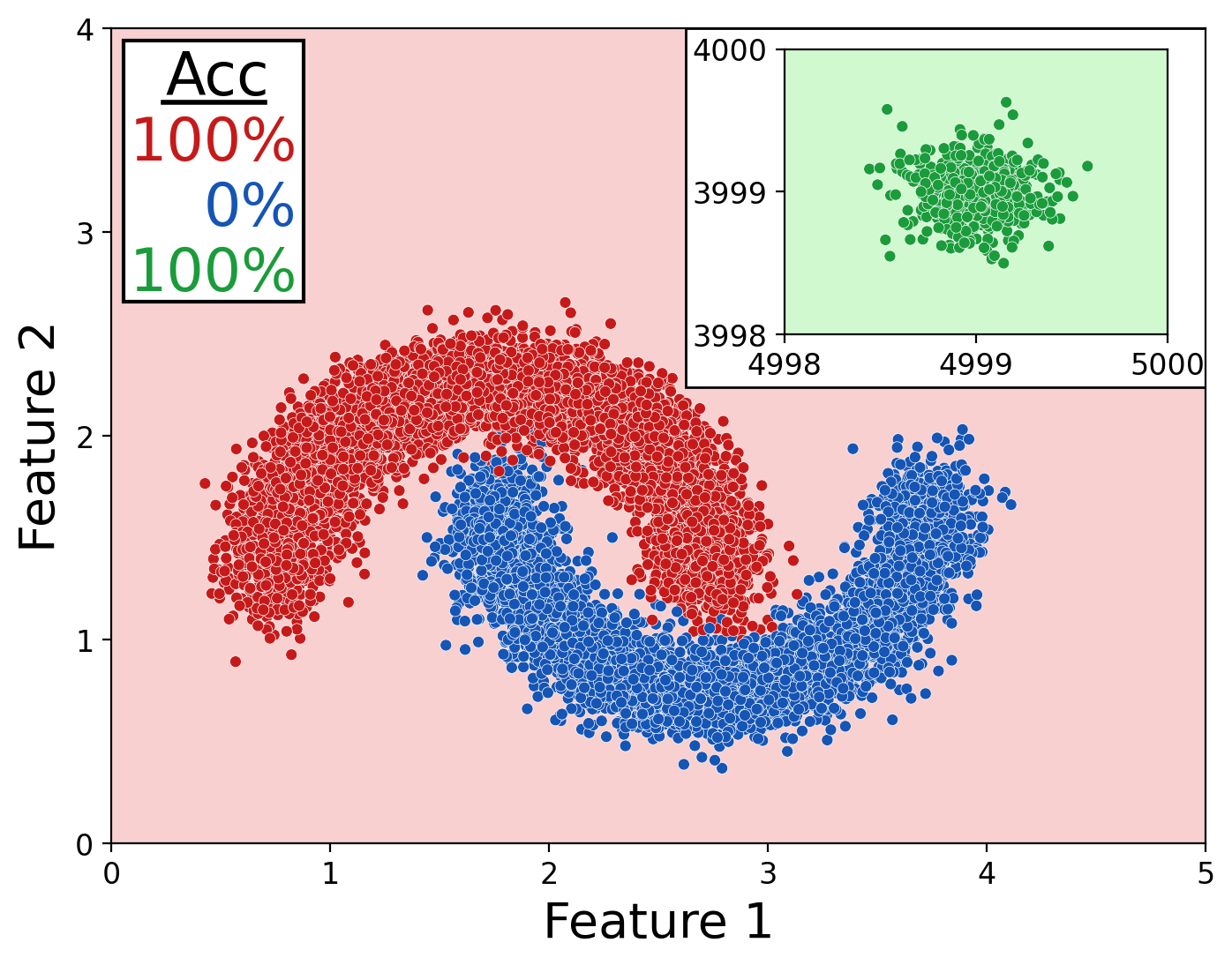} \\[4pt]

            \includegraphics[width=0.32\linewidth, valign=m]{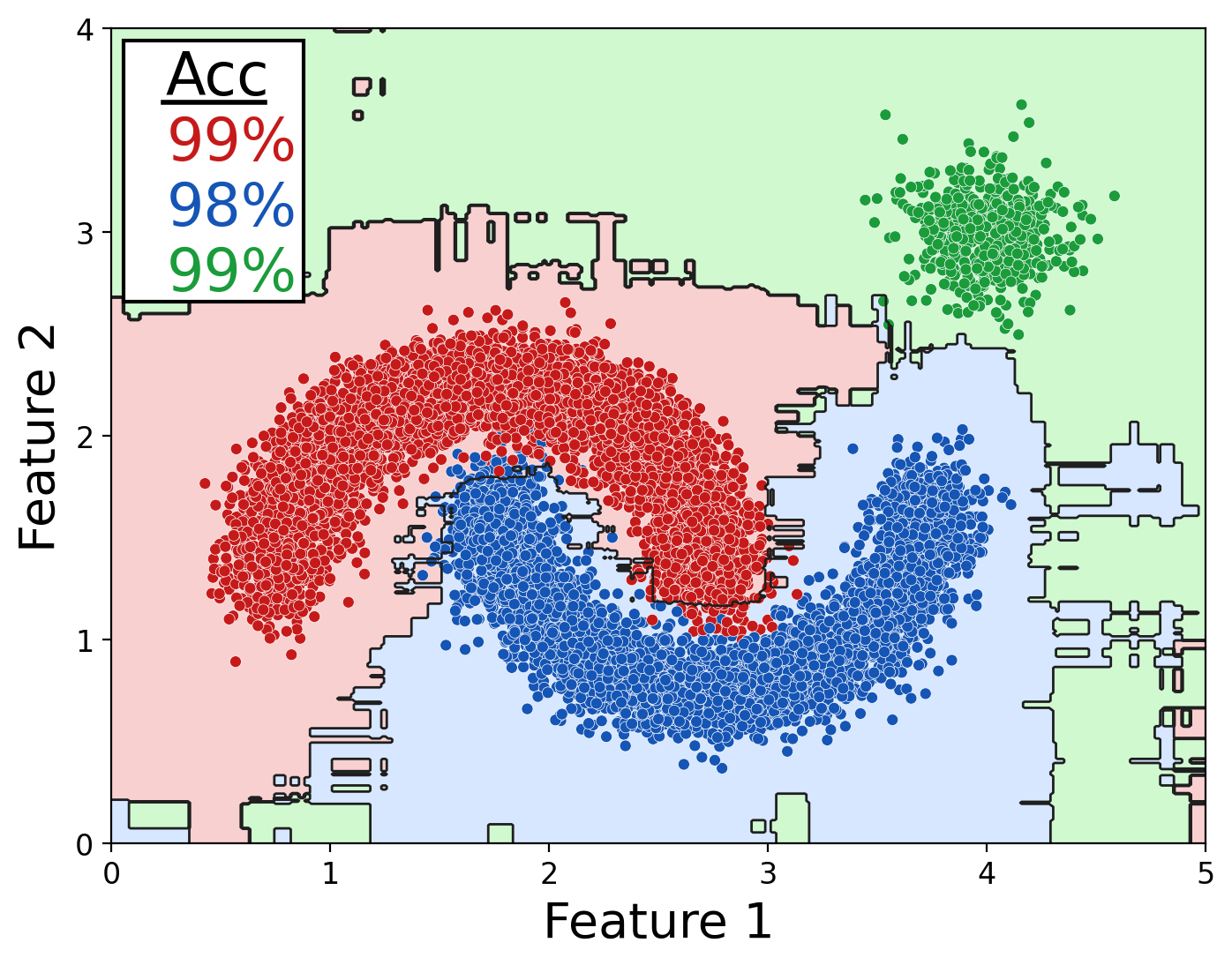} &
            \includegraphics[width=0.32\linewidth, valign=m]{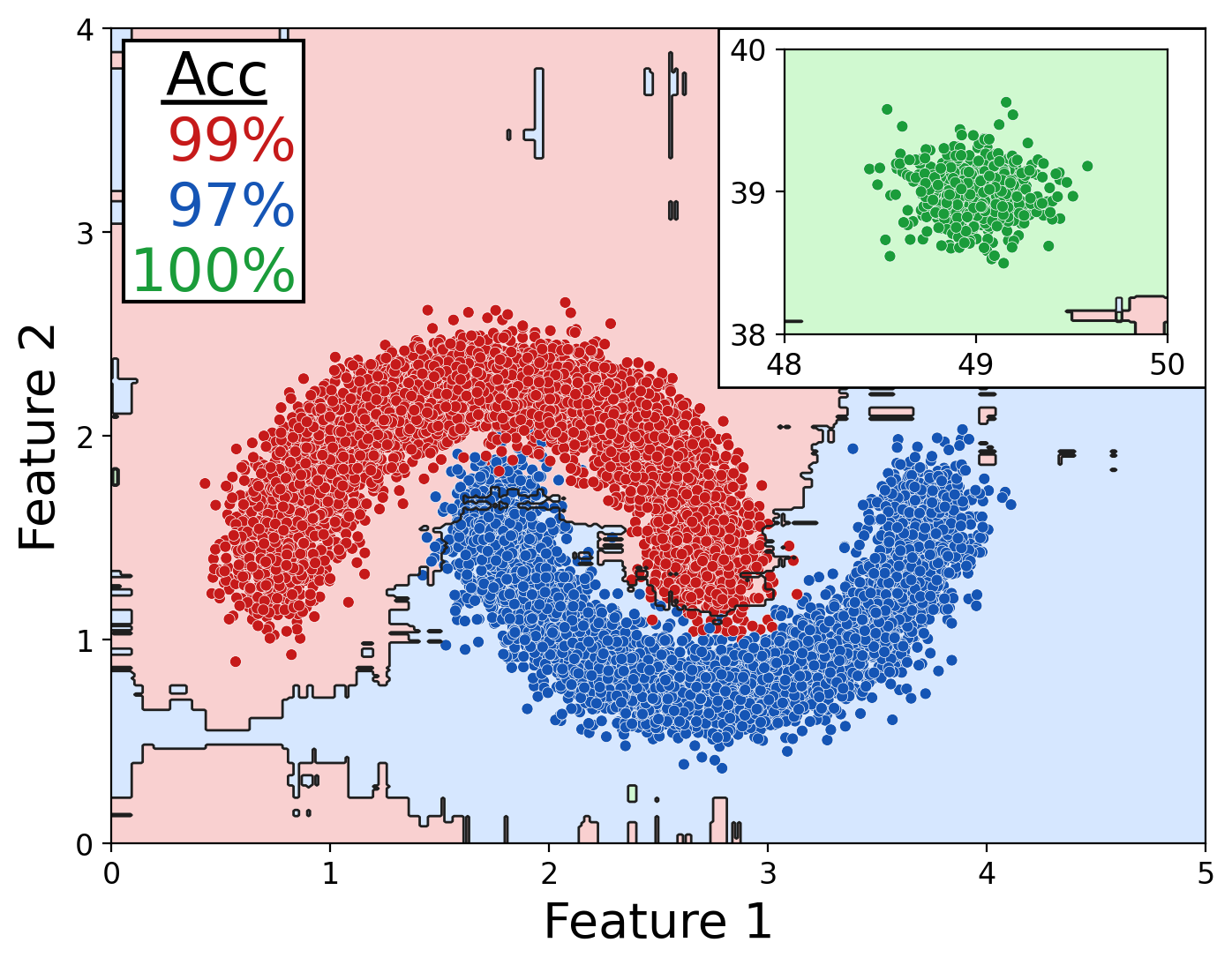} &
            \includegraphics[width=0.32\linewidth, valign=m]{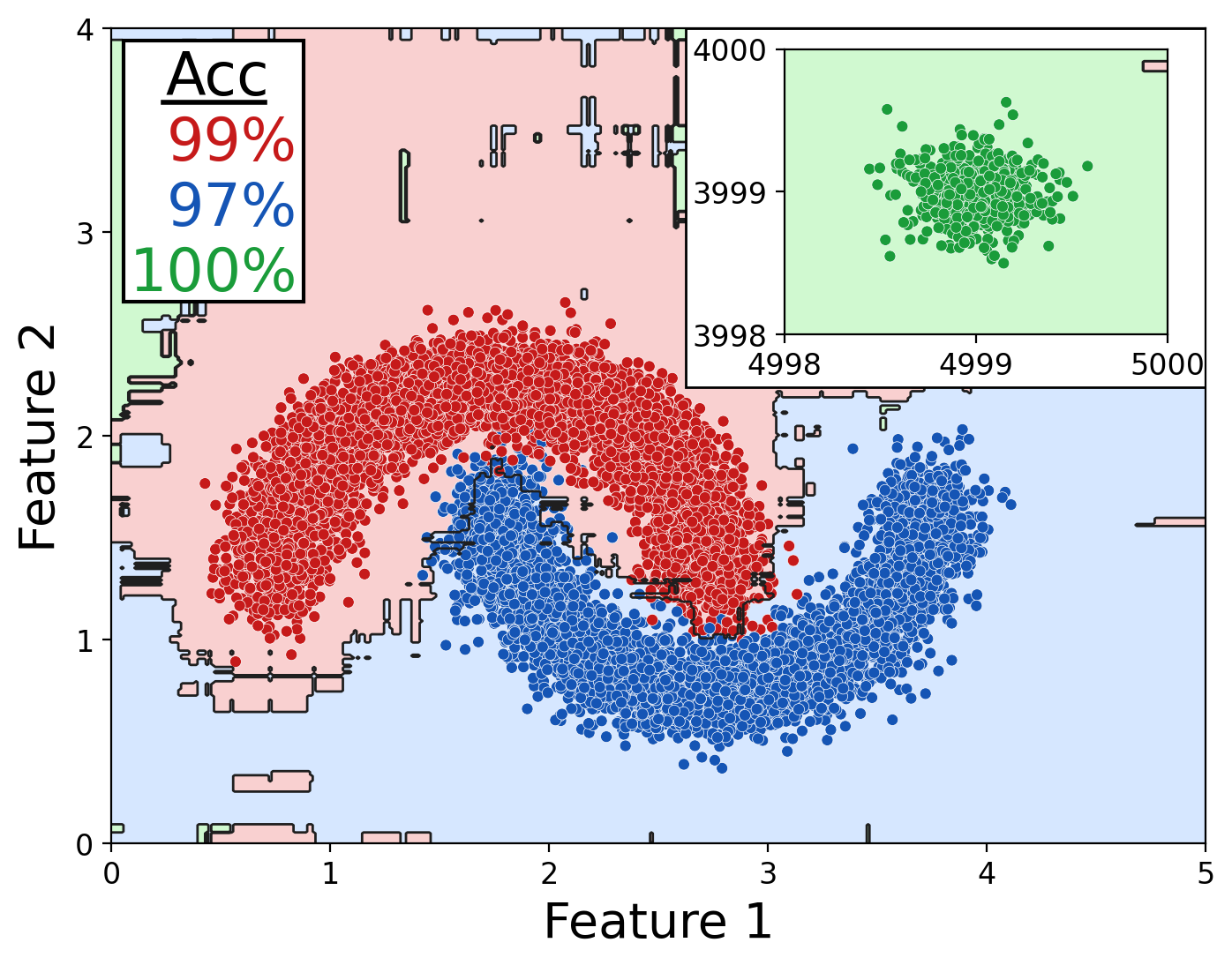} \\
        \end{tabular}
    \end{minipage}\hfill
    \begin{minipage}{0.28\textwidth}
        \vspace{0.1cm}
        \caption{Toy example. Top row: fully random trees (depth=10) under $\infty$-DP. Bottom row: \methodname{} under $(2,10^{-6})$-DP. As the outlier class boundary shifts and the feature-space range changes (left to right), fully random trees fail to capture the structure, whereas our method continues to construct accurate trees.\looseness=-1}
        \label{fig:moons-toy-example}
    \end{minipage}
\end{figure}

\paragraph{Contributions.} We introduce a novel differentially private random forest algorithm, \methodname{}. Our approach combines randomized tree construction, inspired by \citep{geurts2006extremely} and prior work, with a principled privacy-aware mechanism that determines \emph{when to stop splitting}. Concretely, we allocate part of the privacy budget to test whether a node contains a sufficient number of data points in feature space, allowing us to prune empty or low-density branches.

A key challenge is enforcing this stopping criterion under differential privacy. We cast the problem of identifying split nodes as a private heavy-hitters problem over a tree of height $h$.
While related problems have been studied by \cite{GhaziK0M023,BiswasCKSZ24}, their techniques do not directly apply. A naive approach that adds independent noise to each node count suffers from poor utility for deep trees, due to $\ell_2$ sensitivity (which determines the magnitude of noise needed for privacy) scaling as $\sqrt{h}$.
To overcome this, we exploit the monotone structure of rooted trees. Our key idea is to perform structured, highly overlapping noisy binary searches over the tree by querying nodes in a middle layer and propagating constraints upward and downward: heavy nodes imply heavy ancestors, while light nodes imply light descendants. As illustrated in \Cref{fig:algorithm-phases}, this induces consistency constraints that reduce the problem to smaller independent subproblems of size at most $h/2$, reducing sensitivity to $\sqrt{1 + \log_2 h}$. We further integrate techniques from sparse histogram estimation \citep{WilkinsKZK24}, improving both utility and computational efficiency.
% We want to support deep trees to improve applicability, since the random splitting process often need more splits than greedy random forests for strong performance\cl{We need to add a small experiment to justify this claim or rephrase if we don't have time. Probably we won't have time}.
% Our core technical insight leverages the monotonic structure of rooted trees to overcome this limitation. 
% We introduce a novel algorithm that conceptually performs highly overlapping noisy binary searches.
% We query each node in the middle layer of the tree. For any heavy node it's ancestors must also be heavy, and likewise the descendants of light nodes must be light. 
% \Cref{fig:algorithm-phases} shows how we mark implications from the middle layer. This allows us to reduce the problem to non-overlapping subproblems of size $\leq h/2$ which reduces the sensitivity to $\sqrt{1 + \log h}$.
% We combine this technique with methods from sparse histogram estimation~\citep{WilkinsKZK24} which significantly improves utility and computational performance.
Finally, we develop a unified privacy analysis that bounds the privacy loss of the entire pruning procedure jointly over the forest construction, rather than relying on standard composition across trees, yielding a tighter privacy accounting.
% Furthermore, we introduce a new approach to analyzing the privacy of the forest that operates jointly over the entire forest construction, directly bounding the privacy loss of the pruning mechanism rather than relying on composition theorems across individual trees.

Empirically, \methodname{} enables the training of deep, high-utility random forests under strict privacy constraints, achieving new state-of-the-art results and consistently outperforming existing baselines across multiple datasets and privacy regimes. %To facilitate immediate practical deployment and bridge the gap between theoretical DPML and software ecosystems, we provide a fully \texttt{scikit-learn} compatible implementation of our algorithm. \cl{Is the last sentence needed? It's nice, but it's not a main selling point of our work. It also sweeps away some technical details. E.g. the implementation is not floating-point safe as-is. Although it can easily be updated since we use standard techniques.}

\begin{figure}[t]
  \centering
  \resizebox{\linewidth}{!}{%
      \usetikzlibrary{arrows.meta, backgrounds, calc, positioning}

% ==========================================
% Colorblind-Friendly Palette (Okabe-Ito)
% ==========================================
\definecolor{cbOrange}{RGB}{230, 159, 0}
\definecolor{cbSkyBlue}{RGB}{86, 180, 233}
\definecolor{cbGreen}{RGB}{0, 158, 115}
\definecolor{cbBlue}{RGB}{0, 114, 178}
\definecolor{cbVermilion}{RGB}{213, 94, 0}
\definecolor{cbPurple}{RGB}{204, 121, 167}

% NOTE: No empty lines allowed inside the [...] style block below!
\begin{tikzpicture}[
    basic/.style={circle, draw=black, thick, minimum size=7.5mm, inner sep=0pt, font=\sffamily\bfseries\large},
    heavy/.style={basic, draw=cbBlue, fill=cbBlue!10, text=cbBlue, thick},
    light/.style={basic, draw=cbVermilion, fill=cbVermilion!10, dashed, text=cbVermilion, thick},
    unmarked/.style={basic, draw=black!40, fill=white, text=black!50, thick},
    phase3box/.style={rounded corners=2mm, draw=cbGreen!80!black, thick, fill=cbGreen, fill opacity=0.08},
    phase1box/.style={rounded corners=2mm, draw=cbOrange!80!black, thick, fill=cbOrange, fill opacity=0.12},
    phase2box/.style={rounded corners=2mm, draw=cbPurple!80!black, thick, fill=cbPurple, fill opacity=0.12},
    propup/.style={ultra thick, cbBlue, -Latex, shorten >=2pt, shorten <=2pt},
    propdown/.style={ultra thick, cbVermilion, densely dashed, -Latex, shorten >=2pt, shorten <=2pt}
]

% ==========================================
% 1. TREE NODES (8 Leaves)
% ==========================================
% Level 0
\node[heavy] (L0) at (0, 0) {H};

% Level 1
\node[heavy] (L1A) at (-3, -1.2) {H};
\node[unmarked] (L1B) at (3, -1.2) {?};

% Level 2 (Mid Layer)
\node[light] (L2A) at (-4.5, -2.4) {L};
\node[heavy] (L2B) at (-1.5, -2.4) {H};
\node[light] (L2C) at (1.5, -2.4) {L};
\node[light] (L2D) at (4.5, -2.4) {L};

% Level 3 (Leaves)
\node[light] (L3A) at (-5.25, -3.6) {L};
\node[light] (L3B) at (-3.75, -3.6) {L};
\node[unmarked] (L3C) at (-2.25, -3.6) {?};
\node[unmarked] (L3D) at (-0.75, -3.6) {?};
\node[light] (L3E) at (0.75, -3.6) {L};
\node[light] (L3F) at (2.25, -3.6) {L};
\node[light] (L3G) at (3.75, -3.6) {L};
\node[light] (L3H) at (5.25, -3.6) {L};

% ==========================================
% 2. BACKGROUND BANDS & STANDARD EDGES
% ==========================================
\pgfdeclarelayer{background}
\pgfsetlayers{background,main}

\begin{scope}[on background layer]
    % Bands extended right to host the phase explanation labels
    \filldraw[phase3box] (-5.8, 0.55) rectangle (11.5, -1.75);
    \filldraw[phase1box] (-5.8, -1.9) rectangle (11.5, -2.95);
    \filldraw[phase2box] (-5.8, -3.1) rectangle (11.5, -4.15);

    % Standard Straight Edges (Behind nodes)
    \foreach \u/\v in {L0/L1A, L0/L1B, L1A/L2A, L1A/L2B, L1B/L2C, L1B/L2D,
                       L2A/L3A, L2A/L3B, L2B/L3C, L2B/L3D,
                       L2C/L3E, L2C/L3F, L2D/L3G, L2D/L3H} {
        \draw[thick, black!30] (\u) -- (\v);
    }
\end{scope}

% ==========================================
% 3. PROPAGATING ARROWS (Visualizing Logic)
% ==========================================
\draw[propup] (L2B) to[bend right=25] (L1A);
\draw[propup] (L1A) to[bend left=25] (L0);

\draw[propdown] (L2A) to[bend right=25] (L3A);
\draw[propdown] (L2A) to[bend left=25] (L3B);
\draw[propdown] (L2C) to[bend right=25] (L3E);
\draw[propdown] (L2C) to[bend left=25] (L3F);
\draw[propdown] (L2D) to[bend right=25] (L3G);
\draw[propdown] (L2D) to[bend left=25] (L3H);

% ==========================================
% 4. LABELS AND DESCRIPTIVE TEXT
% ==========================================
% Legend
\begin{scope}[shift={(-5.5, 1.1)}]
    \node[heavy, minimum size=6mm, font=\sffamily\bfseries\small] (legH) at (0, 0) {H};
    \node[anchor=west, font=\sffamily\small, text=black!80] at (0.3, 0) {\textbf{Heavy}};
    \node[light, minimum size=6mm, font=\sffamily\bfseries\small] (legL) at (1.8, 0) {L};
    \node[anchor=west, font=\sffamily\small, text=black!80] at (2.1, 0) {\textbf{Light}};
    \node[unmarked, minimum size=6mm, font=\sffamily\bfseries\small] (legU) at (3.5, 0) {?};
    \node[anchor=west, font=\sffamily\small, text=black!80] at (3.8, 0) {\textbf{Unmarked}};
\end{scope}

% Phase Explanations (Right Margin)
\node[anchor=west, font=\sffamily, align=left] at (5.7, -0.6) {
    \textbf{\color{cbGreen!80!black}3. Phase III (Top)}\\[0.5mm]
    Recurse top. Skip \textbf{Heavy} nodes.
};

\node[anchor=west, font=\sffamily, align=left] at (5.7, -2.4) {
    \textbf{\color{cbOrange!80!black}1. Phase I (Mid Layer)}\\[0.5mm]
    Query layer. \textbf{H} marks up, \textbf{L} down.
};

\node[anchor=west, font=\sffamily, align=left] at (5.7, -3.6) {
    \textbf{\color{cbPurple!80!black}2. Phase II (Bottom)}\\[0.5mm]
    Recurse bottom. Skip \textbf{Light} nodes.
};

\end{tikzpicture}\hspace*{1.2cm}%
  }
  \caption{%
    Example of a recursive iteration of our heavy hitter detector (\Cref{alg:heavy_hitters}). 
    \textbf{Phase~I} (orange band) queries all nodes at the middle layer and
    marks them \emph{Heavy} (\textbf{H}, blue) or \emph{Light}
    (\textbf{L}, dashed orange), with implications for ancestors and descendants shown by arrows.
    \textbf{Phase~II} (purple band) recursively explores subtrees containing unmarked nodes (\textbf{?}, gray), skipping \emph{Light} nodes.
    \textbf{Phase~III} (green band) recursively marks the upper half of the tree, skipping \emph{Heavy} nodes.
    % We never relabel nodes, so we can skip nodes marked \textbf{H} in this recursive call.
    %\cl{We could add dashed lines or shading to show the 3 subtrees that we will recursively explore - 2 nodes with ? in Phase II and the 3 nodes in the top in Phase III.}
    % Three-phase tree algorithm anchored at the mid layer $L_{\mathrm{mid}}$.
    % \textbf{Phase~I} (orange band) queries every node at $L_{\mathrm{mid}}$ and
    % marks it \emph{Heavy} (\textbf{H}, blue) or \emph{Light}
    % (\textbf{L}, dashed orange).
    % \textbf{Phase~II} (purple band) propagates Light marks down: every
    % descendant of a Light node at $L_{\mathrm{mid}}$ is itself Light
    % (dashed orange arrows).
    % \textbf{Phase~III} (green band) propagates Heavy marks up: every ancestor
    % of a Heavy node is itself Heavy (solid blue arrows).
  }
  \label{fig:algorithm-phases}
\end{figure}

\section{Preliminaries}
\label{sec:preliminaries}

%\ab{I think we need basic background on RFs and especially a clear explanation of how fully random trees are built (including estimation in leafs), so the reader can understand that our problem reduces to doing this construction privately without paying too much for the height, and thus understand the related work discussion in Sec 3 and the HH algorithm in Sec 4}

\textbf{Problem setup.} We consider a dataset $D \in \mathcal{X}^n$ of $n$ data points, where each point $x \in D$ has a set of features and a label $y \in \mathcal{Y}$. Features may be numerical or categorical; for numerical features we assume known bounds, and for categorical features a known domain.
Our goal is to privately train a random forest to predict labels for unseen data. Random forests support both classification and regression; our contribution is a new differentially private tree-building algorithm applicable to both settings. We focus on classification in our presentation, as it is the primary setting in prior work.

\textbf{Random forests.} A Random Forest (RF) is an ensemble method that combines predictions from multiple decision trees. Each tree recursively partitions the feature space based on feature-value splits until a stopping criterion is reached (e.g., maximum depth).
%Standard decision tree induction algorithms (such as CART \citep{breiman1984classification}) recursively partition the feature space by selecting splits that greedily maximize an information-theoretic criterion, such as Gini impurity or information gain. 
% For each leaf, the training points that fall into the partition are used to construct a score function $s: \mathcal{Y} \rightarrow [0,1]$ used for future prediction.
Each leaf corresponds to a region of the feature space and defines a score function $s: \mathcal{Y} \rightarrow [0,1]$, typically derived from the training samples in that leaf.
To predict, each tree outputs class scores, and the forest aggregates them (e.g., by summation) to select the class with highest total score. Common choices include majority vote, where each tree assigns all weight to a single class, or soft voting, where scores reflect class proportions within each leaf.\looseness=-1

\textbf{Differential privacy.}
Differential privacy~\citep{DworkMNS06} (DP) is a framework for rigorous, quantifiable privacy guarantees. 
Two datasets $D \sim D'$ are neighbors if they differ by a single record (i.e., one addition or removal).
Differential privacy requires that an algorithm's output distribution is approximately indistinguishable on any pair of neighboring datasets.
Several variants have been proposed to suit different settings.
% have come up with a variety of different formulations of differential privacy, adapted to different use cases. %(See \cite{desfontaines2019sok} for a (slightly outdated) overview). \cl{Should we remove this reference? It seems out-of-place.}
In this work, we report privacy guarantees in $(\varepsilon, \delta)$-DP and use zero-concentrated differential privacy (zCDP) for simpler privacy accounting.

\begin{definition}[{\citep{DworkR14}} $(\varepsilon, \delta)$-Differential Privacy]
    \label{def:differential-privacy}
A randomized mechanism 
$\mathcal{M}: \mathcal{X}^{*} \rightarrow \mathcal{Z}$ 
satisfies $(\epsilon, \delta)$-DP, if for any $D \sim D'$ and every measurable set of outputs $Z \in \mathcal{Z}$ we have
\[
    \Pr[\mathcal{M}(D) \in Z] \leq e^\epsilon \Pr[\mathcal{M}(D') \in Z] + \delta \,.
\]
\end{definition}
%\cl{I changed Y to Z because Y is typically used for the labels. This must be updated throughout the paper. In particular in the appendix.}

\begin{remark}
 When $\delta = 0$, the guarantee is referred to as pure DP, or $\varepsilon$-DP.
 % Prior work on differentially private random forests has largely focused on $\varepsilon$-DP.
 % In the related work section, we argue that $\varepsilon$-DP is poorly-suited for random forests and view this a fundamental limitation of this line of work. 
\end{remark}

\begin{definition}[\citep{BunSteinke16} zero-Concentrated Differential Privacy]
\label{def:zCDP}
    A randomized mechanism $\mathcal{M}: \mathcal{X}^{*} \rightarrow \mathcal{Z}$ satisfies $\rho$-zCDP, if for any $D \sim D'$ and all $\alpha \in (1, \infty)$, it holds that
    \[
    D_\alpha(\mathcal{M}(D) \parallel \mathcal{M}(D')) \le \rho \alpha\,,
    \]
    where $D_\alpha(\cdot \parallel \cdot)$ is the $\alpha$-Rényi divergence between two distributions.
\end{definition}

We present some standard useful lemmas for $(\varepsilon,\delta)$-Differential Privacy and $\rho$-zCDP in \Cref{app:additional-preliminaries}.

\section{Related Work}
\label{sec:related}

\textbf{Differentially private random forests.}
Existing DP-RF methods fall into two main paradigms.
% \label{sec:related-tree-building}

The first paradigm \citep{PatilS14_random_forest,FletcherI15_greedy,li2017random,xin2019differentially,HouLMNCL19,GuanSSWD20,zhang2021random,consul2021differentially,vos2023differentially,Suihkonen2023DP_RF} allocates the privacy budget between private tree construction and leaf queries. Trees are built using impurity-based criteria similar to standard decision trees (e.g. \cite{breiman2001random}), typically via the exponential mechanism \citep{McSherryT07}. We omit a detailed description due to the fact that our approach aligns more closely with the second paradigm, and because we identify privacy issues in all such works (see \cref{app:privacy-concerns}).

The second paradigm, loosely inspired by extremely randomized trees ("Extra Trees") \citep{geurts2006extremely}, constructs trees via random splitting \citep{JagannathanPW12_small_random,JagannathanMP13_semi_private_data,FletcherI15_random,fletcher2017differentially,holohan2019diffprivlibibmdifferentialprivacy}. In these methods, splits are chosen independently of the data. 
%While this avoids spending privacy budget on structure learning, it introduces key limitations: (i) random thresholds can produce empty or low-quality leaves; (ii) typically only a single random split is considered per node, increasing variance; and most critically, (iii) trees are fully expanded to a fixed depth without data-dependent stopping. 
%As a result, these approaches poorly adapt to the data and often leads to degraded performance, 
A key limitation of these techniques is that trees are fully expanded to a fixed depth without data-dependent stopping. 
While this avoids spending privacy budget on structure learning, these approaches poorly adapt to the data and often leads to degraded performance,
as illustrated in \Cref{fig:moons-toy-example} and in our experiments of Section~\ref{sec:experiments}.\looseness=-1

Our work departs from both paradigms by introducing a simple intermediate idea: we retain randomized splitting but use part of the privacy budget to determine when to stop splitting. Concretely, we build deep random trees and then privately prune them by identifying nodes that are sufficiently ``heavy'' in feature space; nodes whose parent is not heavy are removed. This allows data-dependent depth control without fully reverting to greedy tree construction.

Leaf prediction is handled either by adding noise to label counts (e.g. \cite{FletcherI15_greedy}) or by private majority selection (e.g. \citep{fletcher2017differentially}). We support both variants in our implementation (see \Cref{app:implementation}).

\begin{remark}
Most of the existing DP-RF methods rely on pure $\varepsilon$-DP, which is generally suboptimal for ensemble learning where privacy must be composed across many tree operations (e.g., repeated splits and leaf queries across many trees). Indeed, privacy costs compose linearly under $\varepsilon$-DP, leading to significantly higher noise compared to the approximate or concentrated DP variants we use in this work.\looseness=-1
\end{remark}

\textbf{Differentially private heavy hitters in trees.}
% \label{sec:related-heavy-hitters}
% A key technical component of our work is a novel algorithm for heavy hitters detection in hierarchical data.
% For this problem, we are given a tree $\T$ with height $h$. Each data point increase the count of a single leaf by 1, and each internal node stores the sum of counts for all child nodes.
% We want to privately detect all nodes with count above some threshold $\tau$. Notice that the root of $\T$ is always heavy and the set of heavy hitters forms a subtree of $\T$ (except for the special case where no nodes are heavy). We exploit this structure in our algorithm.
% This problem has connections to several well-studied problems in differential privacy, such as quantile estimation and private release of sparse histograms.
A key technical component of our work is a novel algorithm for heavy-hitters detection in hierarchical data. This problem is related to several classical tasks in differential privacy, including quantile estimation \citep{kaplan2022differentially,HuangLY21,BunNSV15,Cohen0NSS23} and private release of sparse histograms \citep{Korolova09,CormodePST12,BalcerV19,WilsonZLDSG20,ALP22,LebedaR25,KerschbaumLeeWu25}.
Another related problem is count estimation in trees, where noise scales with $O(\sqrt{h})$ with $h$ the height of the tree.
Despite extensive work on heavy hitters and hierarchical data release, there is limited work at their intersection. \cite{GhaziK0M023} study a closely related problem (see Problems 4.2 and 4.4), achieving a reduced dependence of $O(\sqrt{\log h})$ at the cost of an additional $\sqrt{n}$ term, which is typically much larger than $\sqrt{h}$. Another relevant prior work is \cite{BiswasCKSZ24}, who consider private release of Hierarchical Heavy Hitters (HHH), a strict generalization of our setting; indeed, our problem can be addressed by post-processing the output for the HHH problem. While their construction would be applicable to our setting, we identify a subtle error in their privacy analysis (see \Cref{app:HHH-bug}).
%Due to space limitations, we provide an expanded discussion in \Cref{app:related-heavy}.
We provide an expanded discussion of DP heavy hitters in \Cref{app:related-heavy}.

\section{A Private Heavy Hitters Algorithm for Hierarchical Data}
\label{sec:pruning-algorithms}
%\ab{title to update: not clear why algorithms is plural, and should be more specific (it is not the regular HH problem}

%\cl{This section should be updated to better fit the remaining parts e.g. ab comments. Currently we are working on finishing the other parts of the paper}

In this section, we present our novel private heavy hitters algorithm that will form the basis of our private random forest algorithm. 
We are given a tree structure $\mathcal{T}$ of height\footnote{We define the height as the number of nodes on a root-to-leaf path, including both endpoints. An alternative convention counts edges instead and thus differs by 1. We use depth to refer to this alternative convention. Thus, $\text{height}(\mathcal{T}) = \text{depth}(\mathcal{T}) + 1$.} $h$. Each data point contributes to the count of a leaf node in $\mathcal{T}$, and internal node counts are defined as the sum of their children’s counts. Our goal is to privately identify all nodes whose count exceeds a threshold $\tau$, while ensuring that the selected nodes form a valid subtree.

% A baseline technique to keep in mind traverses the tree in top-down order. At each node, we privately compare the count with the desired threshold. 
% We keep adding nodes above the threshold to our tree and whenever a node is below the threshold we stop exploring that part of the tree.
% It is easy to see that the counts between neighboring datasets only differ on a single path down the tree. We can use the Gaussian mechanism to perform the threshold queries under zCDP by adding noise to all counts with scale $\sigma=\sqrt{h/2\rho}$.
% Note that since prior work relied on $\varepsilon$-DP the analogue baseline adds noise from the Laplace distribution with scale $h/\varepsilon$. 
% \cl{We could remove the above paragraph if we need space. We don't include experiments with the baseline for the neurips submission. But it's nice to put the improvement to $O(\sqrt{\log(h)})$ in perspective.}

\textbf{Baseline.} A simple baseline traverses the tree in a top-down manner. At each node, we privately test whether its count exceeds the threshold $\tau$, retaining nodes that pass and pruning subtrees whenever a node falls below the threshold. Since neighboring datasets affect counts along a single root-to-leaf path, the sensitivity is $\sqrt{h}$. We can thus apply the Gaussian mechanism to each count with noise scale $\sigma = \sqrt{h/(2\rho)}$ to satisfy $\rho$-zCDP. Our algorithm significantly improve upon this scaling in $h$.

\subsection{Binary Search-Inspired Heavy Hitters Detection}

\textbf{High-level presentation of the approach.} The pseudocode of our algorithm for finding heavy hitters in hierarchical data is given in \Cref{alg:heavy_hitters}. The algorithm is parameterized by a private thresholding primitive $\textsc{CheckThreshold}$, which privately tests whether a node count exceeds a threshold $\tau$ (it is "heavy"). The overall error of our method will be determined by the error of this primitive.

Conceptually, the algorithm performs $2^h$ highly overlapping noisy binary searches over the tree. To control privacy cost, nodes already identified as heavy by $\textsc{CheckThreshold}$ in one search are never queried again by another overlapping search. \Cref{fig:algorithm-phases} illustrates one iteration of the procedure. While noisy binary search is a classical tool for private quantile estimation in one dimension (e.g. the PrivQuant algorithm \citep{HuangLY21}), our main contribution is extending this idea to hierarchical tree structures and integrating it with sparse histogram techniques.

Our key observation is that each data point influences at most $1 + \floor{\log_2 h}$ threshold queries, yielding an $\ell_2$-sensitivity of $\sqrt{1+\floor{\log_2 h}}$. This is an exponential improvement over the $\sqrt{h}$ sensitivity of the standard top-down baseline. A naive implementation would still be impractical for deep trees due to the enormous number of empty nodes. To address this, we implement $\textsc{CheckThreshold}$ using techniques from the sparse histogram literature (Algorithm~\ref{alg:sparse_adaptive_queries}), allowing the algorithm to efficiently query only relevant (non-empty) nodes. This substantially improves both utility and computational efficiency. Finally, we derive tight $(\varepsilon,\delta)$-DP guarantees by analyzing the privacy loss of the pruning procedure jointly, rather than through standard composition bounds.

\begin{algorithm}[h]
\caption{\textsc{MarkHeavyHitters}($\T$, $h$)}
\label{alg:heavy_hitters}
\begin{algorithmic}[1]
\REQUIRE A tree $\T$ with height $h$ and a threshold $\tau \in \mathbb{R}$. 
%\REQUIRE Parameters $\sigma$, $\tau$ and $\Delta$.
\REQUIRE A DP algorithm $\textsc{CheckThreshold}_\tau: \mathbb{N} \rightarrow \{\top, \bot\}$.

\COMMENT{Phase 1: Query the middle layer}
\STATE Let $L_{\text{mid}}$ be the set of nodes at level $\lfloor h/2 \rfloor$ of $\T$. 
\FORALL{node $u \in L_{\text{mid}}$ where $u$ has not been marked} \label{line:condition-unmarked}
    \IF{$\textsc{CheckThreshold}_\tau(\#count(u))$ = $\top$}
        \STATE Mark $u$ and all ancestors of $u$ as \textit{Heavy}.
    \ELSE
        \STATE Mark $u$ and all descendants of $u$ as \textit{Light}.
    \ENDIF
\ENDFOR

\COMMENT{Phase 2: Recursively apply the algorithm to children that were not marked \textit{Light} in Phase 1} 
%\STATE\cl{TODO: Needs a slightly different phrasing/notation for this loop. This has an off-by-one out-out-bounds issue when h=1 or h=2. It should only be run for h>2}
\IF{$h \geq 3$}
    \STATE Let $L_{\text{mid + 1}}$ be the set of nodes at level $\lfloor h/2 \rfloor + 1$ of $\T$.
    \FORALL{node $v \in L_{\text{mid} + 1}$ where $v$ is NOT marked \textit{Light}} 
        %\IF{$v$ is NOT marked \textit{Light}} 
        \label{line:condition-recursive-nonlight}
            \STATE Let $\T_v$ be the subtree of $\T$ rooted at $v$.
            \STATE \textsc{MarkHeavyHitters}($\T_v$, $h - \lfloor h/2 \rfloor - 1$).
        %\ENDIF
    \ENDFOR
\ENDIF

\COMMENT{Phase 3: Recursive apply the algorithm in the top part of the tree}
\IF{$h \geq 2$}
    \STATE Let $\hat{\T}$ be $\T$ restricted to layers $\{0,\dots,\text{mid} - 1\}$. 
    \STATE \textsc{MarkHeavyHitters}($\hat{\T}$, $\lfloor h/2 \rfloor$).
\ENDIF

%\RETURN The subtree of all nodes marked \textit{Heavy}.
\end{algorithmic}
\end{algorithm}

\textbf{Technical details and analysis.} We begin by establishing some basic properties of \Cref{alg:heavy_hitters}. Formal proofs are deferred to \Cref{app:proofs}; here we focus on the intuition behind the main results.

\begin{lemma} \label{lem:bin_search_sensitivity}
    For any tree $\T$ with height $h$ and any $x \in \mathcal{X}$, \Cref{alg:heavy_hitters} calls $\textsc{CheckThreshold}$ with at most $1 + \floor{\log_2(h)}$ (equivalently $\ceil{\log_2(h + 1)})$ nodes that contain $x$.
\end{lemma}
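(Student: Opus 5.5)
We argue by strong induction on $h$, proving the claim $f(h) \le 1 + \floor{\log_2 h}$, where $f(h)$ is the maximum, over trees of height $h$, data points $x$ and runs of \Cref{alg:heavy_hitters}, of the number of $\textsc{CheckThreshold}$ calls made on nodes containing $x$. The nodes containing $x$ are exactly those on the root-to-leaf path $P_x$ to the leaf $\ell_x$ holding $x$. This path has exactly one node per level. For the base case $h=1$, only Phase 1 runs and it queries the root once, so $f(1)=1$.

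For the inductive step, fix $h \ge 2$ and let $m=\floor{h/2}$. In Phase 1 the only node of $L_{\text{mid}}$ lying on $P_x$ is its level-$m$ node $u_x$, so Phase 1 contributes at most one query. We then split on the outcome for $u_x$.

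\emph{Case (a): $u_x$ was marked Heavy.} This marking happens either by the query or earlier, through a descendant in a previous call; in either case all ancestors of $u_x$, i.e.\ the entire top part of $P_x$, are marked Heavy. The next step is to establish the invariant that marked nodes are never queried again: Phase 1 only queries unmarked nodes (line~\ref{line:condition-unmarked}), and marks persist across recursive calls. Consequently the Phase 3 recursion on $\hat{\T}$ issues no queries on $P_x$. Only the Phase 2 recursion on the subtree $\T_v$ with $v \in P_x$ at level $m+1$ may contribute, and this subtree has height $h-m-1=\ceil{h/2}-1$.

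\emph{Case (b): $u_x$ was marked Light.} Then all descendants of $u_x$ are Light. The Phase 2 child $v$ on $P_x$ is therefore skipped (line~\ref{line:condition-recursive-nonlight}), and any deeper nodes of $P_x$ are never queried. Only Phase 3 can contribute, on a tree of height $m$.

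Combining the two cases gives the recurrence
\[
f(h) \le 1 + \max\bigl(f(\floor{h/2}),\, f(\ceil{h/2}-1)\bigr),
\]
with the convention $f(0)=0$. Here we use that the recursions are on disjoint level ranges and that $P_x$ restricted to each range is again a root-to-leaf path of the sub-instance.

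It remains to check the arithmetic. Since $\ceil{h/2}-1 \le \floor{h/2}$ and $f$ is monotone by induction, we get $f(h) \le 1 + 1 + \floor{\log_2\floor{h/2}} = 1 + \floor{\log_2 h}$, using $\floor{\log_2\floor{h/2}} = \floor{\log_2 h}-1$ for $h \ge 2$. The equivalence with $\ceil{\log_2(h+1)}$ is a standard identity.

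The main obstacle is making Case (a) rigorous when $u_x$ was marked Heavy \emph{before} the current call. Phase 3 of a sibling recursion may already have run, or the current call may itself be nested inside a Phase 2 or Phase 3 call of an ancestor. We must argue that the order of execution (Phase 1, then 2, then 3) does not allow a node of $P_x$ to be queried in two different recursive calls. I would handle this by first proving the simpler structural fact that each node is queried at most once overall, which follows from the marking invariant. The case analysis then needs only the single observation that, at the moment Phase 3 is entered, every node of $P_x$ above level $m$ is either marked or in a state that forces the dichotomy above.
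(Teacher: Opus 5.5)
Your proof is correct and follows essentially the same route as the paper's. There is at most one Phase~1 query on $P_x$ at level $\lfloor h/2\rfloor$. A \emph{Heavy}/\emph{Light} case split then shows that only one of the Phase~2 or Phase~3 recursions can query $P_x$. This gives $Q(h)\le 1+Q(\lfloor h/2\rfloor)$ with $Q(1)=1$; your use of $\lceil h/2\rceil-1$ for the Phase~2 subtree is a harmless sharpening.

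For the pre-marked case you flag, ``each node is queried at most once'' is not what you need. The relevant facts are that marks are never removed or overwritten, and that every node already marked when a recursive call starts is \emph{Heavy} with all its ancestors marked. The induction should therefore range over calls started from such partial markings, a point the paper's definition of $Q(h)$ glosses over in the same way.
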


\begin{lemma} \label{lem:error_generic}
    The error of \Cref{alg:heavy_hitters} is bounded by the error of $\textsc{CheckThreshold}$. Specifically, if all calls to $\textsc{CheckThreshold}$ where $\#count(u) \leq \tau - \alpha$ returns $\bot$ then all nodes in $\T$ with count $\leq \tau - \alpha$ are labeled \textit{Light}. Similarly, if $\textsc{CheckThreshold}$ returns $\top$ for all calls where $\#count(u) \geq \tau + \beta$ then all nodes with count $\geq \tau + \beta$ are labeled \textit{Heavy}.
\end{lemma}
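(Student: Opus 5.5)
The plan is to prove both halves of \Cref{lem:error_generic} by strong induction on the height $h$, using one structural fact: counts are monotone along root-to-leaf paths, i.e. $\#count(\text{parent}) \geq \#count(\text{child})$, since internal counts are sums of nonnegative child counts. Two preliminary facts are needed first. (i) Every node of $\T$ is eventually marked exactly once. A node is marked only if it is currently unmarked (line~\ref{line:condition-unmarked}), and recursive calls are made only on regions whose nodes are not yet fixed. (ii) Every mark is produced in one of two ways. Either the node was queried directly, or it inherited the mark through propagation from a queried node: \emph{Heavy} propagates upward to ancestors, and \emph{Light} propagates downward to descendants.

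Coverage in (i) is shown by induction. Phase~1 marks every node in $L_{\text{mid}}$. Each descendant of a \emph{Light} mid node is marked \emph{Light}. Each remaining node below the middle lies in some $\T_v$ with $v \in L_{\text{mid}+1}$ not \emph{Light}, and these subtrees are handled by Phase~2 by induction, since they have height $h-\lfloor h/2\rfloor-1 < h$. Nodes above the middle lie in $\hat\T$, handled by Phase~3 with height $\lfloor h/2\rfloor < h$. The base cases $h=1,2$ are checked directly against the guards $h\ge 3$ and $h\ge 2$.

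For the first claim, suppose every query with $\#count(u)\le\tau-\alpha$ returns $\bot$, and let $w$ be a node with $\#count(w)\le \tau-\alpha$. Assume for contradiction that $w$ is marked \emph{Heavy}. By (ii), either $w$ was queried and returned $\top$, which contradicts the hypothesis, or $w$ is an ancestor of a queried node $u$ that returned $\top$. In the latter case monotonicity gives $\#count(u)\le\#count(w)\le\tau-\alpha$, so $u$ must have returned $\bot$, again a contradiction. Since $w$ is marked by (i) and is not \emph{Heavy}, it is \emph{Light}. The second claim is symmetric. If $\#count(w)\ge\tau+\beta$ and $w$ were marked \emph{Light}, then either $w$ was queried and returned $\bot$, or $w$ is a descendant of a queried $u$ with $\#count(u)\ge\#count(w)\ge\tau+\beta$ that returned $\bot$. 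Both contradict the hypothesis.

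The main obstacle is fact (i): showing that the recursion really touches every node and never leaves one unmarked. The delicate points are the exact level bookkeeping and the Phase~2 skipping rule. The three parts to reconcile are the mid layer at level $\lfloor h/2\rfloor$, the subtrees rooted at level $\lfloor h/2\rfloor+1$ of height $h-\lfloor h/2\rfloor-1$, and the top part, which has height $\lfloor h/2\rfloor$. I would verify that these partition all $h$ levels. I would also check that skipping a non-light $v$ is impossible, and that any skipped subtree is entirely \emph{Light}. Marks made inside recursive calls follow the same two rules, but their propagation stays within the sub-tree passed to the call. Propagation is therefore still along true ancestor/descendant relations of $\T$, so the monotonicity argument applies unchanged. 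Nodes marked \emph{Heavy} from below before Phase~3 are simply skipped there, which is consistent with (ii).
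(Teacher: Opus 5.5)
Your proposal is correct and takes the paper's approach: a \emph{Heavy} mark can only appear at a queried node that returned $\top$ or at one of its ancestors, a \emph{Light} mark only at a queried node that returned $\bot$ or at one of its descendants, and monotonicity of counts along root-to-leaf paths then rules out mislabeling; the one difference is that you explicitly prove the coverage fact (i), that every node ends up marked, which the paper takes for granted. A small imprecision is that Line~\ref{line:condition-unmarked} only restricts which mid-layer nodes are queried, so an ancestor can be marked \emph{Heavy} again by several mid-layer nodes and ``exactly once'' is not literally true, but your argument only needs every node to carry some mark with the stated origin, so nothing breaks.
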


% \begin{proof}
%     It follows from the monotonicity of the tree structure. The statement clearly holds for any nodes that are queried by $\textsc{CheckThreshold}$. 
%     We have to show that it also holds for nodes that are marked without being directly queried.
%     Nodes that are marked \textit{Light} are descendants of some node $u$ where we had $\textsc{CheckThreshold}_{\tau}(\#count(u)) = \bot$. These nodes therefore have a count $< \tau + \beta$ since $u$ cannot have a lower count than it's descendant. Similarly, nodes that are marked \textit{Heavy} are ancestors of some node where $\textsc{CheckThreshold}$ returned $\top$ and thus must have a count of $> \tau - \alpha$.
% \end{proof}

For \Cref{lem:bin_search_sensitivity}, observe that each data point affects only a single query in Phase~I and at most one recursive call of height at most $h/2$ (see \Cref{fig:algorithm-phases} for an illustration). This recursive structure directly yields the logarithmic sensitivity bound.
For \Cref{lem:error_generic}, we exploit the monotonicity of the tree. If a node is marked \textit{Heavy} due to a heavy descendant, then the node itself must indeed be heavy since counts can only increase toward the root. Conversely, descendants of a \textit{Light} node must also be light. These implications allow labels to propagate through the tree without additional queries.

% By \Cref{lem:bin_search_sensitivity}, \Cref{alg:heavy_hitters} satisfies $\rho$-zCDP if we instantiate $\textsc{CheckThreshold}$ with the Gaussian mechanism, similar to the baseline but now with $\ell_2$-sensitivity just $\sqrt{1 + \floor{\log_2 h}}$ (see \Cref{lem:gaussian-mech}).
% However, a straight-forward implementation is infeasible for deep trees since we would need to query $2^{\floor{h/2}}$ nodes in Phase 1. 
% This not only lead to extreme computational requirements in term of time and memory but also error proportional to $O(\sigma \sqrt{\ln{2^{\floor{h/2}}}}) = O_{\varepsilon,\delta}(\sqrt{h \log(h)})$. %\de{it is a bit unclear to me why rho appears here, we are not using zCDP in the splitting.}
% Most of these nodes will be empty if $h/2 > \log_2(n)$, so we adapt techniques from sparse histograms.
By \Cref{lem:bin_search_sensitivity}, \Cref{alg:heavy_hitters} satisfies $\rho$-zCDP if we instantiate $\textsc{CheckThreshold}$ with the Gaussian mechanism, analogous to the baseline but with $\ell_2$-sensitivity reduced to $\sqrt{1+\floor{\log_2 h}}$ (see \Cref{lem:gaussian-mech}). However, a direct implementation remains impractical for deep trees, since Phase~I alone queries $2^{\floor{h/2}}$ nodes. This leads not only to prohibitive computational and memory costs, but also to error scaling as $O(\sigma \sqrt{\ln(2^{\floor{h/2}})}) = O_{\varepsilon,\delta}(\sqrt{h\log h})$. Since most of these nodes are empty whenever $h/2 > \log_2 n$, we instead adapt techniques from sparse histogram estimation.
Specifically, we introduce an adaptive variant of the method from \cite{WilkinsKZK24} to implement $\textsc{CheckThreshold}$ (see Algorithm~\ref{alg:sparse_adaptive_queries}). Threshold queries with sufficiently small counts are deterministically mapped to $\bot$ (capturing all empty nodes under suitable parameters), while Gaussian noise is added only for the remaining queries before comparing against the threshold.\looseness=-1
% First, we present an adaptive variant of the algorithm from \cite{WilkinsKZK24}.
% For each threshold query, we deterministically output $\bot$ for sufficiently small values (which includes all empty nodes for appropriate parameters). 
% For all other queries, we add Gaussian noise and compare the noisy count with the threshold.

\begin{algorithm}[h]
    \caption{GaussianSparseThreshold}\label{alg:sparse_adaptive_queries}
    \begin{algorithmic}[1]
        \REQUIRE Parameters $\sigma$, $\tau$, and $\Delta$. 
        \REQUIRE Dataset $D$, adaptive counting queries $q_1, q_2, q_3, \dots$
        \FORALL{$i \in \{1,2,3,\dots\}$}
        \STATE Sample $Z_i \sim \N\left(0, \sigma^2\right)$.
        \IF{$q_i(D) > \tau - \Delta - 1$ and $q_i(D) + Z_i > \tau$} 
        \STATE \textbf{Release} $\top$. 
        \ELSE
        \STATE \textbf{Release} $\bot$.
        \ENDIF
        \ENDFOR
    \end{algorithmic}
\end{algorithm}

\placeholderchristian{We could shift $\Delta$ by 1 and update the condition in the lemma and theory to include the offset. This makes the algorihtm and discussion cleaner since we require $\tau > \Delta$ instead of $\tau > \Delta + 1$. It requires a careful pass of the full paper.}

We present parameter settings for \Cref{alg:sparse_adaptive_queries} that ensure $(\varepsilon,\delta)$-DP in \Cref{app:sparse_gaussian}, and summarize them as part of our main theoretical result below. We next analyze the accuracy of the mechanism, measuring error in terms of worst-case misclassification. The following lemma follows from standard Gaussian tail bounds.

\begin{lemma}
    \label{lem:algorithm-error}
    Let $\alpha \in \mathbb{R}$ be the smallest value such that all nodes with count at least $\tau + \alpha$ are marked \textit{Heavy} by \Cref{alg:heavy_hitters}, and all nodes with count at most $\tau - \alpha$ are marked \textit{Light}. If $\textsc{CheckThreshold}$ is implemented via \Cref{alg:sparse_adaptive_queries} with parameters $\sigma > 0$ and $\tau \geq 1 + \Delta$, then with probability at least $1 - \beta$, we have
    \[
        \alpha \leq \sigma \cdot \sqrt{2 \ln (2 n\ceil{\log_2(h+1)}/\beta)} \enspace .
    \]
\end{lemma}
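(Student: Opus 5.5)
The plan is to reduce the statement to a union bound over the Gaussian noise variables that actually matter, and then apply \Cref{lem:error_generic}. Set $\alpha^\ast = \sigma\sqrt{2\ln(2n\ceil{\log_2(h+1)}/\beta)}$. By \Cref{lem:error_generic}, it suffices to show that, with probability at least $1-\beta$, every call to $\textsc{CheckThreshold}$ on a node $u$ with $\#count(u)\le \tau-\alpha^\ast$ returns $\bot$, and every call with $\#count(u)\ge\tau+\alpha^\ast$ returns $\top$. Both properties then propagate to all nodes of $\T$, including unqueried ones, so the smallest valid $\alpha$ is at most $\alpha^\ast$.

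The first step is to identify which queries can err. A query on a node with count $0$ returns $\bot$ deterministically, because $\tau\ge 1+\Delta$ gives $q_i(D)=0\le\tau-\Delta-1$. Such queries are never wrong in the ``light'' direction and are never required to be heavy, since a node of count $0 < \tau+\alpha^\ast$ is not covered by the heavy condition. For a query with count $c\ge\tau+\alpha^\ast$, we have $c\ge \tau \ge \Delta+1 > \tau-\Delta-1$, so the deterministic filter passes, and the answer is $\top$ whenever $Z_i > -\alpha^\ast$. For a query with $c\le\tau-\alpha^\ast$, the answer is $\bot$ whenever $Z_i\le\alpha^\ast$, regardless of the filter. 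Hence an error on a queried node requires $|Z_i|\ge\alpha^\ast$ at a node with nonzero count.

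The second step counts the noise draws at nonzero-count nodes. Each such node contains at least one data point $x$. By \Cref{lem:bin_search_sensitivity}, each $x$ lies in at most $\ceil{\log_2(h+1)}$ queried nodes, so there are at most $n\ceil{\log_2(h+1)}$ queries with nonzero count. The standard Gaussian tail bound $\Pr[|Z|\ge t]\le 2e^{-t^2/(2\sigma^2)}$ at $t=\alpha^\ast$ gives a failure probability of at most $\beta/(n\ceil{\log_2(h+1)})$ per query. A union bound then yields total failure probability at most $\beta$. The factor $2$ inside the logarithm covers the two-sided tail; a one-sided bound applied to the relevant direction would also suffice.

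The main obstacle is adaptivity. Which nodes get queried depends on earlier noisy outcomes, so the union bound cannot be taken over a fixed index set. I would handle this by coupling: attach an independent Gaussian $Z_u$ to every node $u$ in advance, and let the query on $u$ (each node is queried at most once) use $Z_u$. The bad event is then contained in $\bigcup_{u:\#count(u)\ge1,\ u \text{ queried}}\{|Z_u|\ge\alpha^\ast\}$. Enumerating the at most $n\ceil{\log_2(h+1)}$ nonzero-count queries in order, each is a fresh Gaussian independent of the past, so the union bound still applies. Alternatively, one can fix the (possibly larger) set of nonzero-count nodes and note that a union bound over all of them only needs the adaptive count bound.
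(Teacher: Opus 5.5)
Your proposal is correct and follows the paper's proof. Empty nodes return $\bot$ deterministically because $\tau\ge 1+\Delta$, \Cref{lem:bin_search_sensitivity} caps the number of nonzero-count queries at $n\ceil{\log_2(h+1)}$, and a two-sided Gaussian tail bound with a union bound finishes the argument. Your explicit appeal to \Cref{lem:error_generic} for unqueried nodes and your sequential handling of adaptivity make rigorous two points the paper leaves implicit.

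One small slip: in the chain $c\ge\tau\ge\Delta+1>\tau-\Delta-1$, the last inequality is false whenever $\tau\ge 2\Delta+2$. Simply write $c\ge\tau>\tau-\Delta-1$ (using $\Delta\ge 0$).
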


% \begin{proof}
%     Since $0 \leq \tau - \Delta - 1$ %$0 < \tau - \Delta$
%     all empty nodes are correctly classified as $\textit{Light}$. 
%     Let $m$ denote the number of calls to $\textsc{CheckThreshold}$ with non-zero counts performed by \Cref{alg:heavy_hitters}. 
%     It follows from \cref{lem:bin_search_sensitivity} that $m \leq n(1 + \floor{\log_2(h)}) = n\ceil{\log_2(h+1)}$. 
%     Let $(Z_1,\dots,Z_m)$ denote the noise samples for the non-zero counts.
%     By a standard Gaussian tail bound we have $\Pr[\vert Z_i \vert \geq t \cdot \sigma] \leq 2 \exp(- t^2/2)$ and thus $\Pr[\vert Z_i \vert \geq \alpha] \leq \beta / m$.
%     Using a union bound we have $\Pr[\max_{i \in [m]}\vert Z_i \vert \geq \alpha] \leq \beta$.
% \end{proof}

% We are now ready to present our main theoretical result. 
% We note that it is natural to allocate privacy budget for each tree and apply a composition theorem to obtain privacy parameters for the full forest. This is the approach of all prior work. Instead we analyze the privacy guarantee for the entire forest directly to avoid loose composition of $(\varepsilon,\delta)$-DP. 
We are now ready to present our main theoretical result. In contrast to prior work, which applies a composition theorem across trees to obtain privacy guarantees for the full forest, we directly analyze the privacy of the entire forest construction. This avoids the loss introduced by repeated composition under $(\varepsilon,\delta)$-DP and yields a tighter overall bound.

\begin{theorem}
    \label{thm:build-forest}
    Let $\mathcal{A}$ denote the algorithm that takes as input $k$ trees with maximum height $h$, and runs \Cref{alg:heavy_hitters} where $\textsc{CheckThreshold}$ is instantiated as \Cref{alg:sparse_adaptive_queries} with parameters $\sigma$, $\tau$, and $\Delta$. %All nodes that are not heavy hitters or children of heavy hitters are pruned. 
    Then $\mathcal{A}$ satisfies $(\varepsilon, \delta)$-DP where $m = k(1 + \floor{\log_2(h)})$, $\gamma(j) = (m - j)\log \Phi \left(\frac{\Delta}{\sigma}\right)$, and
    \begin{align*}
        &\delta \geq \max \bigg[ 1 - \Phi \left( \frac{\Delta}{\sigma} \right)^{m}, \\
        &\max_{j \in [m]} 1 - \Phi \left( \frac{\Delta}{\sigma} \right)^{m - j} + \Phi \left( \frac{\Delta}{\sigma} \right)^{m - j} \left[ \Phi\left(\frac{\sqrt{j}}{2\sigma} - \frac{(\varepsilon - \gamma(j))\sigma}{\sqrt{j}}\right) - e^{\varepsilon - \gamma(j)} \Phi\left(-\frac{\sqrt{j}}{2\sigma} - \frac{\left(\varepsilon - \gamma(j)\right)\sigma}{\sqrt{j}}\right) \right], \\
        &\max_{j \in [m]} \Phi\left(\frac{\sqrt{j}}{2\sigma} - \frac{\left(\varepsilon + \gamma(j)\right)\sigma}{\sqrt{j}}\right) - e^{\varepsilon + \gamma(j)} \Phi\left(-\frac{\sqrt{j}}{2\sigma} - \frac{(\varepsilon + \gamma(j))\sigma}{\sqrt{j}}\right) \bigg]\,.
    \end{align*}
    If $\tau \geq 1 + \Delta$, then with probability at least $1 - \beta$, the algorithm correctly classifies all nodes with count $\leq \tau - \alpha$ as \textit{Light} and all nodes with count $\geq \tau + \alpha$ as \textit{Heavy} where
    \[
        \alpha \leq \sigma \cdot \sqrt{2 \ln (2 kn\ceil{\log_2(h+1)}/\beta)} \enspace .
    \]
    Furthermore, 
    a variant of the algorithm that returns all nodes marked \textit{Heavy} can be implemented in time and space $O(knh)$.
    %assume that we can access the non-private counts of each node by traversing the tree from the root. 
    %Then the algorithm that returns all nodes marked \textit{Heavy} can be implemented in time and space $O(knh)$.
\end{theorem}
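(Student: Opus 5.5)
The proof splits into three parts: privacy, accuracy, and running time. Privacy is the substantive one.

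\textbf{Privacy.} Fix neighbors $D \sim D'$ with $D' = D \cup \{x\}$. By \Cref{lem:bin_search_sensitivity}, in each of the $k$ trees the point $x$ lies in at most $1+\floor{\log_2 h}$ queried nodes. Over the whole forest, at most $m = k(1+\floor{\log_2 h})$ threshold queries have counts that differ between $D$ and $D'$, and each differs by exactly one.

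The adaptivity of \Cref{alg:heavy_hitters} is handled by coupling the two executions query by query. While all previous answers agree, both runs issue the same next query, since the tree traversal depends only on past outputs. So the privacy loss random variable is a sum of per-query losses over at most $m$ queries with differing counts. All other queries contribute zero loss.

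For a differing query with counts $c$ on $D$ and $c+1$ on $D'$ there are two regimes in \Cref{alg:sparse_adaptive_queries}:
\begin{itemize}
\item If $c > \tau-\Delta-1$, both counts pass the deterministic filter, and the query is a Gaussian threshold query with sensitivity $1$.
\item If $c = \tau - \Delta - 1$, the query is deterministic $\bot$ on $D$. On $D'$ it outputs $\top$ with probability $1-\Phi(\Delta/\sigma)$, since $c+1+Z>\tau$ iff $Z>\Delta$. This is the "sparse" failure event.
\end{itemize}
Let $j$ be the number of differing queries in the Gaussian regime, so the remaining $m-j$ (at most) are in the filtered regime. On the filtered ones, the event that every answer is $\bot$ on $D'$ has probability $\Phi(\Delta/\sigma)^{m-j}$. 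Conditioned on it, each such query contributes loss $-\log\Phi(\Delta/\sigma)$ in absolute value, and these add up to $\gamma(j)$. Outside this event the loss is unbounded, which produces the additive $1-\Phi(\Delta/\sigma)^{m-j}$ term.

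The $j$ Gaussian queries are post-processings of a Gaussian mechanism with $\ell_2$-sensitivity $\sqrt{j}$. Their tight $(\varepsilon',\delta)$ profile is the analytic Gaussian formula $\Phi(\tfrac{\sqrt j}{2\sigma}-\tfrac{\varepsilon'\sigma}{\sqrt j})-e^{\varepsilon'}\Phi(-\tfrac{\sqrt j}{2\sigma}-\tfrac{\varepsilon'\sigma}{\sqrt j})$, taken from the appendix lemma on the Gaussian mechanism. Substituting $\varepsilon'=\varepsilon\mp\gamma(j)$ absorbs the bounded shift from the filtered queries, and the two directions $D\to D'$ and $D'\to D$ give the $-\gamma$ and $+\gamma$ terms. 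In the reverse direction no bad event occurs, since $\bot$ is always possible on $D'$.

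The case $j=0$ gives the first term $1-\Phi(\Delta/\sigma)^m$. Taking the maximum over $j\in[m]$ gives the stated bound.

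\textbf{Main obstacle.} I expect the hard step to be making this decomposition rigorous under adaptivity. The split into $j$ Gaussian and $m-j$ filtered queries is itself random, because it depends on earlier answers. My plan is to argue that the worst case over adaptive adversaries is attained by a fixed $j$. I would try the standard approach: bound the privacy loss distribution conditionally at each step, then observe that the per-query contributions are dominated by the non-adaptive worst case, using the fact that the Gaussian loss is a martingale sum. Then I would take the maximum over $j$. Padding $m$ up to the maximum is harmless, because fewer queries give a smaller $\delta$ (monotonicity in $j$ and $m$).

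\textbf{Accuracy and running time.} Accuracy follows by repeating \Cref{lem:algorithm-error}. Since $\tau\ge 1+\Delta$, empty nodes are always marked $\bot$. By \Cref{lem:bin_search_sensitivity}, each point is in at most $\ceil{\log_2(h+1)}$ queried nodes per tree, so there are at most $kn\ceil{\log_2(h+1)}$ non-empty queries. A Gaussian tail bound with a union bound over them, combined with \Cref{lem:error_generic}, gives the stated $\alpha$.

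For time and space, store only non-empty nodes: there are at most $nh$ per tree, found by routing each point down its tree. Run the recursion only on non-empty nodes, treating empty nodes implicitly as \textit{Light} without sampling noise. This is equivalent in distribution to the full algorithm and costs $O(nh)$ per tree, hence $O(knh)$ in total.
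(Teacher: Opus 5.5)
Your route is the paper's. \Cref{lem:bin_search_sensitivity} caps the affected threshold queries at $m$ over the forest. The privacy bound is the exact Gaussian-sparse-histogram analysis of Wilkins et al.; the paper simply invokes it through \Cref{lem:privacy-adaptive} and asserts that it transfers to adaptive queries. Accuracy is \Cref{lem:algorithm-error} plus a union bound, and the running time comes from never expanding empty subtrees.

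The step you add is where the plan fails: the claim that the worst case over adaptive analysts is attained at a fixed $j$ is false in general. Work in the bad-event direction, bounding $\Pr[\mathcal{A}(D')\in S]$ by $e^{\varepsilon}\Pr[\mathcal{A}(D)\in S]+\delta$. Let $r$ be the likelihood ratio of the history so far and $p=\Phi(\Delta/\sigma)$. Normalized by the $D$-probability of that history:
\begin{itemize}
\item a further boundary-regime query adds $r(1-p)+(rp-e^{\varepsilon})_+$ to the hockey-stick divergence;
\item a further Gaussian-regime query adds $\sup_T\,(rP'_{\mathrm{G}}(T)-e^{\varepsilon}P_{\mathrm{G}}(T))$, where $P_{\mathrm{G}},P'_{\mathrm{G}}$ are its answer distributions under $D$ and $D'$.
\end{itemize}
For $rp\ge e^{\varepsilon}$ the first equals $r-e^{\varepsilon}$ and the second is strictly larger. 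As $r\to 0$ the first is $r(1-p)$ while the second is $o(r)$. So an analyst who asks a Gaussian-regime query after a high-loss answer and a boundary-regime query after a low-loss answer strictly beats both fixed continuations.

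Concretely, take $m=2$, a Gaussian-regime first query whose noisy value is observed (your view of these queries as a Gaussian mechanism), and $\Phi(\Delta/\sigma)$ close to $1$. Write $\delta_{\mathrm{G}}(j,\varepsilon')=\Phi(\tfrac{\sqrt{j}}{2\sigma}-\tfrac{\varepsilon'\sigma}{\sqrt{j}})-e^{\varepsilon'}\Phi(-\tfrac{\sqrt{j}}{2\sigma}-\tfrac{\varepsilon'\sigma}{\sqrt{j}})$. Then the largest term of the stated $\delta$ is $\delta_{\mathrm{G}}(2,\varepsilon)$, and this analyst exceeds it. A martingale argument on the Gaussian losses cannot see this, because the analyst is adapting precisely the choice of regime.

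A proof along your lines would have to exploit the routing specific to \Cref{alg:heavy_hitters}:
\begin{itemize}
\item Along $P_x$ counts are non-increasing downward, so boundary-regime nodes lie below all Gaussian-regime nodes.
\item The search descends below a Gaussian-regime node only after a $\top$. After a $\bot$ it queries only ancestors, all in the Gaussian regime.
\item In the bad-event direction $\top$ is the high-loss answer, so the algorithm reaches boundary-regime nodes only in the pattern opposite to the violating one. The same holds in the other direction, where $\bot$ is the high-loss answer and the violating pattern is reversed.
\item Answers to queries not containing $x$ also affect which nodes of $P_x$ are queried, via \textit{Heavy} marks, but they carry zero loss.
\end{itemize}
An argument built on this pattern, for instance an exchange argument, is what is missing. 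It is missing from your plan and also from the paper, which offers only the assertion. If you just need a valid guarantee, the hybrid-input argument of \Cref{lem:privacy_double_threshold} combined with fully adaptive Gaussian composition is robust to adaptivity. It only gives the weaker $\delta=1-\Phi(\Delta/\sigma)^m+\delta_{\mathrm{G}}(m,\varepsilon)$, though.

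A smaller point concerns the bad-event direction. There the boundary queries' conditional loss is $\log P\le 0$, with $P=\prod_i p_i\ge\Phi(\Delta/\sigma)^{m-j}$. Absorbing the shift therefore gives $1-P+P\,\delta_{\mathrm{G}}(j,\varepsilon-\log P)$. To substitute $P=\Phi(\Delta/\sigma)^{m-j}$ you must check that this expression is decreasing in $P$; its derivative is $-\Pr[L\le\varepsilon-\log P]$ for the Gaussian loss $L$.
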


\begin{proof}
    By \Cref{lem:bin_search_sensitivity} adding or removing a data point affects at most $1 + \floor{\log(h)}$ queries for each tree. The privacy guarantees thus follow from \Cref{lem:privacy-adaptive} (see \Cref{app:sparse_gaussian}). 
    The error bound holds for each tree with probability at least $1 - \beta/k$ by \Cref{lem:algorithm-error}, and thus with probability at least $1 - \beta$ it holds for the entire forest simultaneously.
    For any $\tau \geq 1 + \Delta$, \Cref{alg:sparse_adaptive_queries} returns $\bot$ for all empty nodes. We may therefore avoid inspecting subtrees of empty nodes as we know these nodes would all be marked as $\textit{Light}$. This does not affect the output distribution and thus preserves the DP guarantee. 
    It is easy to see that there are at most $O(nh)$ non-empty nodes in each tree. We discuss implementation details to achieve space and running time linear in the number of non-empty nodes in \Cref{app:implementation}. \looseness=-1
\end{proof}

We use the parameters presented in \Cref{thm:build-forest} in our implementation, although the resulting privacy expressions are not very intuitive. For intuition, we provide below a simpler but looser closed-form bound in the regime $\varepsilon < 1$, which makes explicit the dependence on $h$, $k$, $\sigma$, and $\Delta$.
\begin{lemma}
    \label{lem:theorem-simpler}
    Define $\mathcal{A}$ as in \Cref{thm:build-forest}. If $\varepsilon < 1$, the parameters 
    \[
        \sigma = \frac{\sqrt{2m\ln(2.5/\delta)}}{\varepsilon} \quad \quad \quad
        \Delta = \sigma \cdot \sqrt{2\ln(2 m/\delta)} 
    \]
    satisfy the condition in \Cref{thm:build-forest} for $(\varepsilon, \delta)$-DP where $m = k(1 + \floor{\log_2 h})$.
\end{lemma}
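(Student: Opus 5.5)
We have to verify that, with the stated $\sigma$ and $\Delta$, each of the three terms in the maximum of \Cref{thm:build-forest} is at most $\delta$. Write $p = \Phi(\Delta/\sigma)$. The plan is to first control $p^{m}$ and $\gamma(j)$, and then reduce each Gaussian-mechanism term to the standard analytic bound for the Gaussian mechanism with sensitivity $\sqrt{j}\le\sqrt m$, using a slightly reduced budget.

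\textbf{Step 1 (the first term).} From the Gaussian tail bound $1-\Phi(t)\le e^{-t^2/2}/2$ and $\Delta/\sigma = \sqrt{2\ln(2m/\delta)}$, we get $1-p \le \delta/(4m)$. By the union bound (Bernoulli's inequality),
\[
1-p^{m} \le m(1-p) \le \delta/4.
\]
This handles the first term. It also gives, for all $j$,
\[
0 \le -\gamma(j) = (m-j)\ln(1/p) \le 2m(1-p) \le \delta/2,
\]
using $\ln(1/p)\le 2(1-p)$ for $p\ge 1/2$. So $|\gamma(j)|\le \delta/2$, which is tiny.

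\textbf{Step 2 (the Gaussian terms).} Let $g_j(\varepsilon') = \Phi\big(\tfrac{\sqrt j}{2\sigma}-\tfrac{\varepsilon'\sigma}{\sqrt j}\big) - e^{\varepsilon'}\Phi\big(-\tfrac{\sqrt j}{2\sigma}-\tfrac{\varepsilon'\sigma}{\sqrt j}\big)$. This is exactly the tight $\delta(\varepsilon')$ of the Gaussian mechanism with sensitivity $\sqrt j$ and noise $\sigma$. It is decreasing in $\varepsilon'$ and increasing in the sensitivity, so $g_j(\varepsilon')\le g_m(\varepsilon')$.

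Since $\gamma(j)\le 0$, the third term is $g_j(\varepsilon+\gamma(j))$ and the second involves $g_j(\varepsilon-\gamma(j))\le g_j(\varepsilon)$. For the third term we need the classical bound for $\varepsilon'' = \varepsilon - \delta/2$. Here I would use the classical Dwork–Roth analysis: if $\sigma \ge \sqrt{2\ln(1.25/\delta')}\sqrt m/\varepsilon''$, then $g_m(\varepsilon'')\le \delta'$.

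\textbf{Step 3 (the main obstacle).} The hard step is absorbing the shift by $\delta/2$ and the extra slack into the constant $2.5$ rather than $1.25$. Our $\sigma$ equals the Dwork–Roth value with $\delta' = \delta/2$ at budget $\varepsilon$. I would do a direct first-order argument. Write $g_m(\varepsilon-\eta) \le g_m(\varepsilon) + \eta\, e^{\varepsilon}\Phi(\cdot)$. To justify this, note that the derivative of $g$ in $\varepsilon'$ is $-e^{\varepsilon'}\Phi(-\cdot)$, so its magnitude is at most $e\cdot$(a small tail). With $\eta=\delta/2$ and the tail at most $\delta$, the extra term is $O(\delta^2)$. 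This bounds the third term by $\delta/2 + O(\delta^2)\le\delta$; the $O(\delta^2)$ claim needs care, and if necessary I would simply use the crude derivative bound $e\cdot\tfrac12$ together with the remaining slack.

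\textbf{Step 4 (combining).} The second term is at most $(1-p^{m-j}) + g_j(\varepsilon) \le \delta/4 + \delta/2 \le \delta$. The third term is bounded by Step 3. This completes the check of all three conditions.
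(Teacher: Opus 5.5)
\textbf{There is a gap in Step 3: the third term is never actually bounded.} Your Steps 1 and 2 and your bound on the second term are correct. The problem is how you control $g_m(\varepsilon-\delta/2)$. Since $g_m'(s)=-e^{s}\Phi\bigl(-\tfrac{\sqrt m}{2\sigma}-\tfrac{s\sigma}{\sqrt m}\bigr)$ and this tail decreases in $s$, the increment $g_m(\varepsilon-\delta/2)-g_m(\varepsilon)$ is at most $\tfrac{\delta}{2}\,e^{\varepsilon}$ times the tail evaluated at the \emph{shifted} budget $s=\varepsilon-\delta/2$. For your $\sigma$, the argument of $\Phi$ there is essentially $-(1-\tfrac{\delta}{2\varepsilon})\sqrt{2\ln(2.5/\delta)}$. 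So the claim ``tail at most $\delta$'' holds only when $\delta$ is small relative to $\varepsilon$ (roughly $\delta\ln(1/\delta)\lesssim\varepsilon$), and the lemma assumes only $\varepsilon<1$. The fallback does not close either. Dwork--Roth certifies only $g_m(\varepsilon)\le\delta/2$, with no quantified slack. The crude derivative bound $e/2$ then gives $\delta/2+e\delta/4\approx1.18\,\delta>\delta$.

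\textbf{The fix needs no expansion in $\varepsilon$.} For any distributions $P,Q$, any event $S$ and any $\eta\ge0$,
\[
P(S)-e^{\varepsilon-\eta}Q(S)=e^{-\eta}\bigl(P(S)-e^{\varepsilon}Q(S)\bigr)+\bigl(1-e^{-\eta}\bigr)P(S),
\]
so $g_j(\varepsilon-\eta)\le e^{-\eta}g_j(\varepsilon)+1-e^{-\eta}$. Take $\eta=-\gamma(j)$, so that $e^{-\eta}=\Phi(\Delta/\sigma)^{m-j}$. Then the third term is at most
\[
1-\Phi(\Delta/\sigma)^{m-j}+\Phi(\Delta/\sigma)^{m-j}g_j(\varepsilon)\le\delta/4+\delta/2
\]
by your Steps 1--2. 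This is exactly the form of the second term, and it requires no relation between $\delta$ and $\varepsilon$.

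\textbf{Comparison with the paper.} This identity is the divergence-level form of what the paper does. The paper never touches the three-term expression. It uses the add-the-deltas hybrid, as in the proof of \Cref{lem:privacy_double_threshold}: the Gaussian part is calibrated via Dwork--Roth to $(\varepsilon,\delta/2)$-DP for sensitivity $\sqrt m$, and a union bound shows that all $m$ noise samples stay below $\Delta$ except with probability $\delta/2$. There the deterministic-cutoff discrepancy enters purely additively, so $\varepsilon$ is never shifted. Once patched, your direct route has the merit of verifying the stated condition of \Cref{thm:build-forest} literally, rather than through the looser add-the-deltas bound.
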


% \begin{proof}
%     Here we use the looser add-the-deltas approach discussed in \cite{WilkinsKZK24}. The Gaussian noise is calibrated to $(\varepsilon, \delta/2)$-DP for $\ell_2$-sensitivity $\sqrt{m} = \sqrt{k(1 + \floor{\log_2 h})}$ using~\citep[Appendix~A]{DworkR14}. 
%     We have that $\Pr[\mathcal N(0, \sigma^2) \geq \Delta] \leq \delta/(2m)$ which implies that the maximum noise for $m$ queries is less than $\Delta$ with probability at least $1 - \delta/2$. 
% \end{proof}

\begin{remark}
    We give high probability bounds for the maximum error in \Cref{lem:algorithm-error} and \Cref{thm:build-forest}. However, notice that the error for nodes with small counts is deterministically bounded by $\Delta$. 
    We can achieve a deterministic error bound using a variant of \Cref{alg:sparse_adaptive_queries} as discussed in \Cref{app:double_threshold}.
\end{remark}

% \begin{remark}
%     \tc{Might be trivial, but it could be worth stating explicitly that what you rely on is the monotonicity of your hierarchical structure, i.e. $\texttt{count(parent)} \geq \texttt{count(child)}$. Essentially this approach shouldn't break if the tree structure ends up satisfying this (i am not sure it would make sense tho!} \de{That maybe relates to Aurélien's comments on properly defining the problem of defining heavy hitters. Then we can list the assumptions we actually need more generally and then argue that decision trees fulfill this. }
% \end{remark}
%\cl{Commented out for now. I'm not sure how to make this observation clear. I mentioned it in the related work section.}

\subsection{Lower Bound for Deep Trees}
\label{sec:lower-bound}

By \Cref{thm:build-forest}, we should set the threshold $\tau$ to at least $1 + \Delta$.
This ensures an efficient implementation for deep trees and improves utility by allowing us to deterministically label empty nodes.
As a result, the threshold may be larger than what would be chosen in the non-private setting (e.g., in random forests), but we show below that this is unavoidable for any private method that supports deep trees. 

We now outline structural properties of heavy hitters algorithms that lead to our lower bound. 
First, observe that \Cref{alg:heavy_hitters} outputs at most $n$ nodes per layer when $\tau \geq 1 + \Delta$. 
% This is desirable, but we might accept an algorithm with slightly higher memory usage if it leads to better performance, so we expand the class of algorithms to those that always output at most $n^2$ nodes for each layer.
% Any algorithm exceeding this bound quickly becomes impractical for medium to large datasets, but the lower bound still applies for e.g. $n^3$ nodes with slightly different constants.
% We should also output all sufficiently heavy hitters with high probability. 
% These two desiderata lead to the following lower bound. \looseness=-1
While this is desirable, we allow for a broader class of algorithms that may output up to $n^2$ nodes per layer, since slightly larger memory usage can still be practical. Any method exceeding this regime quickly becomes infeasible for moderate or large datasets, though the lower bound still applies for e.g. $n^3$ nodes with slightly different constants. We additionally require that all sufficiently heavy nodes are recovered with high probability. Together, these conditions yield the following lower bound.

\begin{lemma}[Informal]
    \label{lem:informal-lower-bound}
    Let $\mathcal{M}$ denote an $(\varepsilon, \delta)$-DP mechanism (for any $\varepsilon > \delta^2$) that takes as input a tree $\T$ of height $h > 4 \log(n)$ and a dataset of size $n$, and outputs a set of at most $n^2$ nodes per layer of $\T$. Suppose that, with high probability, $\mathcal{M}$ outputs all nodes in $\T$ with count above some value $\hat\tau \in \mathbb{R}$. 
    Then $\hat\tau \geq \Omega\left(\frac{\min(h, \log(1/\delta))}{\varepsilon}\right)$.
\end{lemma}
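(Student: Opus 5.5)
We argue by a packing argument combined with group privacy. The target is $\hat\tau \geq \Omega(\min(h,\log(1/\delta))/\varepsilon)$. Set $t = c\cdot\min(h,\log(1/\delta))/\varepsilon$ for a small constant $c$, and suppose for contradiction that $\hat\tau < t$.

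\textbf{Construction.} Start from the empty dataset $D_0$, padded with a fixed set of $n - \lceil t\rceil$ points that all sit in a single leaf; these padding points are irrelevant to everything below. Pick a layer $\ell$ of $\T$ with $\ell \leq h$, chosen so that $2^{\ell} \geq n^{4}$; this is possible because $h > 4\log n$. For each node $u$ in layer $\ell$ other than the padding path, let $D_u$ be $D_0$ with $\lceil t \rceil$ copies of a point that lands in a leaf below $u$. Then $D_u$ and $D_0$ differ in $k=\lceil t\rceil$ records, and $u$ has count at least $t > \hat\tau$ in $D_u$. The correctness hypothesis therefore gives $\Pr[u \in \mathcal{M}(D_u)] \geq 1-\eta$, where $\eta$ is a small constant such as $1/2$.

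\textbf{Group privacy.} An $(\varepsilon,\delta)$-DP mechanism is $(k\varepsilon, k e^{k\varepsilon}\delta)$-DP for datasets at distance $k$. Applying this to $D_u$ versus $D_0$ gives
\[
\Pr[u\in\mathcal{M}(D_0)] \geq e^{-k\varepsilon}\bigl(1-\eta - k e^{k\varepsilon}\delta\bigr).
\]
By the choice of $c$, $k\varepsilon \leq c'\log(1/\delta)$. Taking $c'$ small, for instance $k\varepsilon \leq \tfrac12\ln(1/\delta)$, makes $k e^{k\varepsilon}\delta \leq k\sqrt{\delta}$ small; this is where a mild relation between $\varepsilon$ and $\delta$, such as $\varepsilon>\delta^2$, is used to control the factor $k$. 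Hence $\Pr[u\in\mathcal{M}(D_0)] \geq \tfrac14 e^{-k\varepsilon}$ for every $u$ in layer $\ell$.

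\textbf{Counting.} Summing over layer $\ell$, the expected number of layer-$\ell$ nodes output on $D_0$ is at least $\tfrac14 2^{\ell} e^{-k\varepsilon}$. The output constraint says this is at most $n^2$, so $e^{k\varepsilon} \geq 2^{\ell}/(4n^2)$. Two cases finish the argument.
\begin{itemize}
\item If $\log(1/\delta) \gtrsim h$, choose $\ell = h$. This gives $k\varepsilon \geq h\ln 2 - 2\ln n - O(1) = \Omega(h)$, using $h>4\log n$.
\item Otherwise choose $\ell \approx \log(1/\delta)$, provided this still exceeds $4\log n$. The same bound gives $k\varepsilon = \Omega(\log(1/\delta))$.
\end{itemize}
In both cases, taking $c$ small enough contradicts $k\varepsilon \leq c\min(h,\log(1/\delta))$.

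\textbf{Main obstacle.} The hard step is the regime bookkeeping. The $\delta$ term in group privacy must stay negligible, which forces $k\varepsilon \lesssim \log(1/\delta)$, while $\ell$ must also exceed roughly $2\log n$ for the packing to bite. When $\log(1/\delta) < 4\log n$, I would instead lower-bound via the standard single-query argument: distinguishing count $0$ from count $k$ already needs $k \gtrsim \log(1/\delta)/\varepsilon$, using the fact that on $D_0$ the node must be output with probability at most about $n^2/2^{\ell}$. I expect the informal statement's conditions, namely $h>4\log n$ and $\varepsilon>\delta^2$, are exactly what make these cases glue together.
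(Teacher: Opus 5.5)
Your argument is correct in substance, but it takes a genuinely different route from the paper. The paper does not argue from scratch. It embeds a sparse histogram over $d=2^{h-1}$ bins in the leaf layer and converts the add/remove guarantee into a replacement-DP guarantee (\Cref{lem:add-remove-to-replacement}). It then composes $\mathcal{M}$ with a geometric-noise release on the selected leaves, reporting $0$ elsewhere. Finally it applies Balcer and Vadhan's per-query lower bound (\Cref{app:lem:BV19lower}) as a black box, with $\hat\tau$ playing the role of the error on unselected leaves. Your packing-plus-group-privacy argument is essentially the proof of that black box, specialized directly to the heavy-hitter output. This buys a self-contained proof that works natively under add/remove DP. It also avoids the extra noise budget and the somewhat informal ``unselected leaves must carry the error'' step, and it gives better constants. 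The paper's route is shorter given the cited theorem and makes the link to the sparse-histogram literature explicit.

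Three pieces of bookkeeping need fixing.

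\textbf{No case split is needed.} If you always count at the leaf layer $\ell=h-1$, the case split and your ``main obstacle'' disappear. There $\tfrac14(2^{h-1}-1)e^{-k\varepsilon}\le n^2$ gives $k\varepsilon\ge\ln\frac{2^{h-1}-1}{4n^2}=\Omega(h)$ from $h>4\log_2 n$ alone. The parameter $\delta$ enters only through the group-privacy slack.

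\textbf{The group-privacy slack is mis-controlled.} The slack is $\delta\sum_{i<k}e^{i\varepsilon}\le\delta e^{k\varepsilon}/\varepsilon$, so the constraint you need is $k\varepsilon\le\ln\frac{\varepsilon}{4\delta}$. Your claim that $\varepsilon>\delta^2$ makes $k\sqrt{\delta}$ small is false; that requires roughly $\varepsilon\ge\sqrt{\delta}\log(1/\delta)$. Take $k=\lfloor\hat\tau\rfloor+1$ and use the dichotomy: either $\delta e^{k\varepsilon}/\varepsilon\ge 1/4$, or every leaf is output on $D_0$ with probability at least $e^{-k\varepsilon}/4$. This yields
\[
\hat\tau \;\ge\; \frac{1}{\varepsilon}\min\Bigl(\ln\frac{2^{h-1}-1}{4n^2},\ \ln\frac{\varepsilon}{4\delta}\Bigr)-1,
\]
which has the same shape as the formal \Cref{app:lem:lower-bound}. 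Its $\delta$-term, like the paper's $\frac{1}{4\varepsilon}\ln\frac{\varepsilon}{4\delta}$, is $\Omega(\log(1/\delta)/\varepsilon)$ only when $\varepsilon\ge\delta^{1-c}$ for some constant $c>0$. So the hypothesis $\varepsilon>\delta^2$ is a looseness of the informal statement, not something your proof can lean on.

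\textbf{The bound needs a $\min(\cdot,n)$ term.} Your construction needs $\lceil t\rceil\le n$. The correct conclusion therefore carries the $\min(\cdot,n)$ term of the formal version, since outputting the empty set is $(0,0)$-DP and vacuously correct for $\hat\tau=n$.
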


The lemma follows from a lower bound for sparse histograms \cite[Theorem~7.2]{BalcerV19}, which (informally) states that any $(\varepsilon,\delta)$-DP mechanism that outputs few nonzero bins must incur error $\Omega\left(\frac{\min(\log d, \log(1/\delta))}{\varepsilon}\right)$ for $d$-dimensional histograms. We can encode such a histogram over any $d \leq 2^{h-1}$ by using the leaf level of the tree. Technical details are deferred to \Cref{app:lower_bound}.

Note that our algorithm is at most a factor $O(\sqrt{\log h})$ from optimal for $\varepsilon < 1$ (see \cref{cor:deterministic-simpler,cor:deterministic-tree-error}). We emphasize that tight lower bounds are not the main focus of this work, and leave closing this gap as an open problem.

% Notice that for $\varepsilon < 1$, our algorithm is at most a factor $O(\sqrt{\log h})$ from optimal~(see \cref{cor:deterministic-simpler,cor:deterministic-tree-error}).
% We note that lower bounds are not a main focus of our work. 
% %In the proof we do not utilize any of the hierarchical structure between queries. 
% %\de{it wouldn't make sense, but using hierarchical structure in the proof sounds like the lower bound is actually too harsh and the gap should be bigger. Maybe there is a way to make this more clear/intuitive. The lower bound technically considers only one the "bottom layer", right? So for me, intuitively, it was like we query log h layers and that is where the factor probably comes from, but we don't want to constrain other algorithms to query in layers. Not sure if this makes sense.} \cl{It does. I'll try to be a bit more precise}
% We leave it as an open problem to close the gap between the upper and lower bound.
% %\cl{To David: Maybe we just remove the sentence like this? I'm not sure if we should claim that we believe the lower bound can be stronger. I go back and forth on that thought.}

\section{\methodname{}: Our Differentially Private Random Forest Method}
\label{sec:algorithm}

In this section, we describe how we use the heavy hitters algorithm from the previous section to instantiate our random forest method, \methodname{}.
We split the algorithm into two components: one for privately constructing the tree structures and one for privatizing the leaf predictions.
We ensure that constructing all trees together satisfies $(\varepsilon_1, \delta_1)$-DP and that privatizing all leaf nodes across all trees satisfies $(\varepsilon_2, \delta_2)$-DP. By basic composition, \methodname{} therefore satisfies $(\varepsilon_1 + \varepsilon_2, \delta_1 + \delta_2)$-DP.

\textbf{Tree construction.}
Given a maximum depth $d$, we recursively partition the feature space until this depth is reached.
We then pass the resulting tree, restricted to nodes with depth $\{0,\dots,d-1\}$ (corresponding to height $h=d$) to our heavy hitters algorithm.
The heavy hitters returned by the algorithm are retained as split nodes in the final tree. Any node whose parent is labeled $\textit{Light}$ is pruned, and the remaining $\textit{Light}$ nodes become the leaf nodes.\looseness=-1

To split the feature space, we select a feature uniformly at random.
For numerical features, we assume access to a range $[l,u]$ approximating the support of that feature (this is a standard assumption in DP).
We then sample a threshold uniformly at random, $t \in [l,u]$, and split the node accordingly.
The ranges for the left and right child nodes are updated to $[l,t]$ and $[t,u]$, respectively.
Categorical features are encoded using one-hot encoding, following common practice in non-private implementations (e.g., \texttt{sklearn} \citep{pedregosa2011scikit}). 
When splitting on a categorical feature, we select a category that has not already been used in an ancestor node.
Importantly, the privacy guarantees of \methodname{} are independent of the specific strategy used to generate random splits.
Exploring alternative splitting schemes together with (privacy-preserving) data preprocessing is an interesting direction for improving utility in future work.

% Note that many random forest algorithm sample data to train each tree using bootstrap or sampling without replacement \citep{pedregosa2011scikit, athey2019generalized}.
% These techniques are useful in avoiding overfitting and increasing the diversity of trees. 
% \methodname{} fits each tree using the complete training data set.
% This is common practice for Extra Trees where the diversity of trees arises from the randomness in the split \citep{geurts2006extremely}.

\begin{remark}
    Many non-private random forest algorithms train each tree on a subsample of the data using bootstrap sampling or sampling without replacement \citep{pedregosa2011scikit, athey2019generalized}, which improves diversity and reduces overfitting. In contrast, \methodname{} uses the full dataset for each tree, with randomness naturally coming from the splitting procedure, as in non-private Extra Trees \citep{geurts2006extremely}.
\end{remark}

\textbf{Leaf predictions.} The last step is to privatize predictions in leaf nodes. Here, we focus the presentation on classification for simplicity, but the extension to regression is straightforward.\footnote{In this case, privatizing leaf predictions amounts to a set of private averaging queries.}
We consider two mechanisms for privatizing leaf predictions.
First, we support the exponential mechanism to select a single class label for each leaf. The prediction for the entire forest then consists of a majority vote across trees.
Second, we support adding noise to class counts to obtain a private approximation of the class proportions, and average these proportions across trees.
We did not observe significant performance differences between the two approaches in initial exploratory experiments; majority voting is used in our experiments. We perform privacy accounting in zCDP.
Specifically, we compute the largest $\rho$ such that any $\rho$-zCDP algorithm satisfies $(\varepsilon_2, \delta_2)$-DP (see \Cref{lem:zCDP-conversion}).
We then apply a $(\rho/k)$-zCDP mechanism independently to each leaf, which yields an overall $\rho$-zCDP guarantee for the forest because each data point influences only a single leaf per tree. Additional implementation details are provided in \Cref{app:implementation}.

% For the leaf prediction we perform privacy accounting in zCDP. 
% We compute the largest $\rho$ such that any $\rho$-zCDP algorithm satisfies $(\varepsilon_2, \delta_2)$-DP (see \Cref{lem:zCDP-conversion}).
% We then apply a $\rho/k$-zCDP algorithm to each leaf, which implies $\rho$-zCDP for the forest since each data point affects only a single leaf in each tree. 
% We considered two mechanisms for privatizing leaves. 
% We support the exponential mechanism for selecting a single label for each leaf, and we support adding noise to class counts to approximate a probability distribution over class labels.
% We discuss technical details further in \Cref{app:implementation}.
% We did not observe significant performance differences between the two approaches in initial exploratory experiments. 
% We use the majority vote implementation for the experiments in the next section.

\section{Experiments}
\label{sec:experiments}

\textbf{Baselines.} We compare \methodname{} against several differentially private random forest methods, including three randomized tree-based approaches (IBM's Differential Privacy Library \texttt{DiffPrivLib}~\citep{holohan2019diffprivlibibmdifferentialprivacy}, \texttt{SNR}~\citep{FletcherI15_random} and \texttt{Smooth Sensitivity}~\citep{fletcher2017differentially}), and one greedy tree method (\texttt{Suihkonen}~\citep{Suihkonen2023DP_RF}). 
As non-private baselines, we use \texttt{sklearn}'s Extra Trees and Decision Tree classifiers \citep{pedregosa2011scikit}. 
%For a non-tree based baseline, we implement a Logistic Regression that satisfies $(\varepsilon, \delta)$-DP, trained via DP-SGD \cite{abadi_deep_2016}. For accounting, we use the numerical accountant of \citet{doroshenko_connect_2022}, implemented in the \texttt{dp-accounting} library \cite{noauthor_differential-privacypythondp_accounting_nodate}. 
We evaluate all methods across privacy budgets $\varepsilon \in \{0.5, 1, 2, 4, 8\}$ with $\delta = 10^{-6}$ for \methodname{}.
We note that some baselines have privacy issues, leading to weaker-than-claimed guarantees; this appears to be a recurring issue in prior work on DP random forests~(see \Cref{app:privacy-concerns}).\looseness=-1

\textbf{Datasets.} We evaluate performance on the \texttt{Adult} dataset~\citep{adult} and four binary classification tasks from \texttt{Folktables} (California 2018)~\citep{ding2021retiring}.
% While \texttt{Adult} is widely used, it has been criticized as overly standard~\cite{ding2021retiring}, motivating our additional benchmarks.
We exclude ACSMobility due to uniformly poor performance across all methods, including non-private baselines. Dataset statistics are summarized in \Cref{tab:dataset-stats}. Each experiment is repeated 5 times and results are averaged.

\begin{table}[h]
\centering
\caption{Summary statistics for the Adult and Folktables (CA) datasets.}
\label{tab:dataset-stats}
\begin{tabular}{lccccc}
\hline
\textbf{Dataset} & \textbf{Features} & \textbf{Categorical} & \textbf{Classes} & \textbf{Majority} & \textbf{Datapoints} \\ \hline
\texttt{Adult} & 14 & 8 & 2 & 76.1 \% & 48,842 \\
\texttt{ACSIncome (CA)} & 10 & 4 & 2 & 58.9 \% & 195,665 \\
\texttt{ACSEmployment (CA)} & 16 & 10 & 2 & 54.4 \% & 378,817 \\
\texttt{ACSPublicCoverage (CA)} & 19 & 10 & 2 & 63.1 \% & 138,554 \\
\texttt{ACSTravelTime (CA)} & 16 & 7 & 2 & 51.5 \% & 172,508 \\ \hline
\end{tabular}
\end{table}

\textbf{Hyperparameters.}  We use an 80-10-10 train/validation/test split. Hyperparameters for \texttt{DiffPrivLib} and non-private baselines are tuned on the validation set, without accounting for additional privacy cost. The performance of \texttt{DiffPrivLib} is sensitive to these choices. For \texttt{SNR}, \texttt{Smooth Sensitivity}, and \texttt{Suihkonen}, we use the settings recommended in the original papers.
For \methodname{}, to demonstrate its robustness, we fix a maximum tree depth of 100 across all experiments. 
We train 30 trees on \texttt{Adult} and 50 trees on all \texttt{Folktables} tasks. We allocate 75\% of the privacy budget to tree construction $(\varepsilon_1 = 0.75\varepsilon)$.
We set $\tau = 1 + \Delta$ in all experiments. 

%To showcase that our algorithm is not sensitive to its hyperparameters, we use 75\% of the privacy budget for splitting and a maximum depth of 100 across all benchmarks. We trained 30 trees for \texttt{adult} and 50 trees for all the \texttt{folktables} prediction tasks. We tuned the hyperparameters for DiffPrivLib.% and for the other baselines, we used the configurations given in their papers, because tuning was infeasible due to their runtime. 
%For \texttt{SNR}, \texttt{Smooth Sensitivity}, and \texttt{Suihkonen} we use the hyperparameters suggested in the papers. 
%We refer to \Cref{app:additional-experiments} for more details. % \cl{TODO: Add more details in appendix}
%\de{please double check if this is accurate, TODO: put the hyperparameter configs of the baselines in the appendix (?)}

%\paragraph{Hyperparameter tuning} For our datasets, we use a $90\%-10\%$ deterministic train-test split. Out of the train split, we take $10\%$ of the train data for the validation set to perform hyperparameter tuning based on the accuracy on the validation set. After we have finished hyperparameter tuning, we take the best candidate set of hyperparameters and we report the performance on the test set averaged across 20 times.

\textbf{Results.} As shown in \Cref{fig:experiment-results} (and \Cref{fig:appendix-experiments} in the appendix), \methodname{} consistently and significantly outperforms prior work across all evaluated privacy budgets. 
\texttt{DiffPrivLib} and \texttt{Smooth Sensitivity} achieve reasonable performance at small $\varepsilon$ but do not improve as privacy relaxes.
This is partly due to their reliance on data partitioning for leaf predictions, which allows them to perform accurate leaf predictions at small privacy budgets but shifts the burden to random tree construction.
We also note that \texttt{Smooth Sensitivity} handles categorical features differently than \texttt{DiffPrivLib}: it splits on \emph{all} values of a categorical feature instead of $1$, which provides an advantage on \texttt{Adult} but does not generalize well to the other datasets.
%\texttt{Smooth sensitivity} achieves decent performance on \texttt{Adult} already with low privacy budget, but does not improve for higher $\varepsilon$. 
%This can be explained by their handling of categorical variables where they split on every possible value at once and sacrificing diversity across different trees \cite{fletcher2017differentially}.  %\de{it seems like we have the same citation twice in the .bib file for this}. \cl{Fixed}
% The accuracy of \texttt{SNR} increase for higher $\varepsilon$ but remains much lower than \methodname{}.
%The accuracy of \texttt{SNR} tends to increase with higher $\varepsilon$, while \texttt{Suihkonen} and \texttt{Diffprivlib} both neither achieve good performance nor improve with less privacy. 
%\texttt{SNR} improves with increasing $\varepsilon$ but remains substantially below \methodname{}. \cl{TODO: Update plots and SNR conclusion here}
\texttt{SNR} outperforms the other baselines for some datasets but remains substantially below \methodname{}.
\texttt{Suihkonen} matches previously reported results on \texttt{Adult}~\citep{Suihkonen2023DP_RF} but performs poorly on the \texttt{Folktables} tasks.

While a gap remains to non-private \texttt{Extra Trees}, \methodname{} often outperforms a non-private greedy decision tree even at moderate privacy levels. Overall, \methodname{} establishes a new state-of-the-art among DP random forest methods. Additional results are provided in \Cref{app:additional-experiments}.
Source code is available at \url{https://github.com/daviderb/Lumberjack}.

% The performance of \texttt{Suihkonen} on \texttt{Adult} matches the plot from \cite{Suihkonen2023DP_RF}. However, we found the utility extremely low in our experiments on the \texttt{Folktable} benchmarks.
% We note that while there is still a gap to the non-private \texttt{Sklearn Extra Trees}, \methodname{} outperforms a single scikit-learn greedy decision tree for reasonable $\varepsilon$ on several datasets.
%We note that while there is still a gap to the non private \texttt{Sklearn Extra Trees} for reasonable $\varepsilon$, \methodname{} approaches its non-private counterpart for $\varepsilon \to \infty$ and even outperforms a single scikit-learn greedy decision tree. 
% \methodname{} therefore constitutes a new state-of-the-art in this line of work. 
%To the best of our knowledge, it is the first random forest with practical utility, partly due to privacy violations in prior work as described in \Cref{app:privacy-concerns}. 
% The remaining experiments are deferred to \Cref{app:additional-experiments}.

\begin{figure}
    \centering
    \includegraphics[width=1\linewidth]{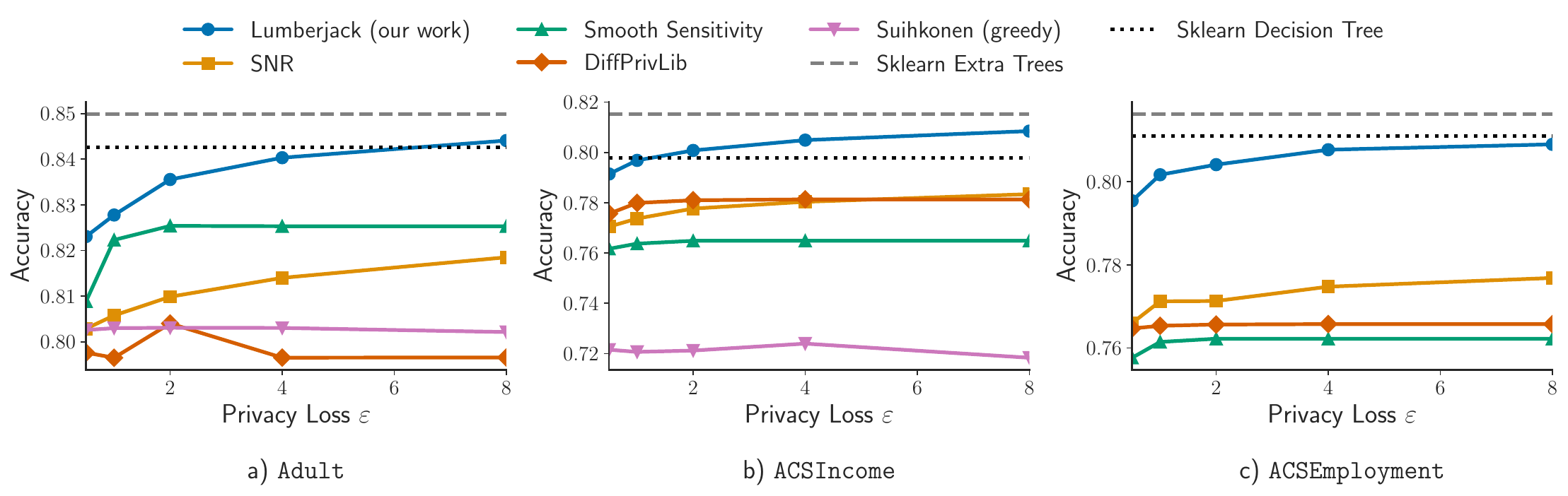}
    \caption{Test accuracy for the Adult and Folktables ACSIncome and ACSEmployment datasets. We exclude the Suihkonen baseline from ACSEmployment because its runtime was too high.} % with $\delta=10^{-6}$.}
    \label{fig:experiment-results}
\end{figure}

%\section{Conclusion, Limitations, and Open Problems}
\section{Conclusion}
\label{sec:conclusion}

Despite the popularity of random forests in non-private machine learning, their differentially private counterparts remain relatively underdeveloped and, in many cases, problematic in both correctness and utility. Our work takes a step toward addressing these issues.
% We have highlighted serious issues with prior work on differentially private random forests both in terms of correctness~(\Cref{app:privacy-concerns}) and utility.
We introduce \methodname{}, a new DP random forest method based on a combination of randomized tree construction and data-dependent pruning %structured splitting 
based on a novel heavy hitters algorithm which may be of independent interest. Empirically, \methodname{} sets a new state-of-the-art in DP random forest methods.

%While our experiments show that \methodname{} significantly outperforms prior work, we note that our focus has been to design an effective DP variant of Extra Trees \citep{geurts2006extremely}. 
%It is likely that techniques for greedy tree building algorithms similar to \cite{breiman2001random} can also be improved. % \cl{I removed this. We needed to discuss limitations, which is why is added it. But I realized that we already discuss a limitation for the lower bound}
% We believe that DP random forests is an underexplored area for privacy-preserving machine learning, and 
% we hope that our work will inspire further research in this direction.

We believe that designing high-utility differentially private greedy random forests remains an open and promising direction. Achieving this goal will likely require more careful integration of advanced tools from the DP literature, such as refined composition techniques, tighter privacy accounting, and problem-specific mechanisms that better exploit the structure of greedy splitting procedures.

Overall, we hope our work helps clarify the design space and current status of DP random forests, and encourages further research on methods that more closely match the performance of their non-private counterparts while maintaining rigorous privacy guarantees.

%\placeholder{We improve one kind of DP forest. We (hopefully) show that DP random forests can be more useful than shown in prior work. Clear future directions such as better splits, but we don't need to be too specific here. Open problem: stronger lower bound for heavy hitters in trees - can we get matching upper and lower bounds?}

\section*{Acknowledgements}
The work of David Erb was carried out while visiting the PreMeDICaL Inria team. 
The work of Christian Janos Lebeda, Tudor Cebere and Aurélien Bellet is supported by grant ANR-20-CE23-0015 (Project PRIDE) and the ANR 22-PECY-0002 IPOP (Interdisciplinary Project on Privacy) project of the Cybersecurity PEPR. The work of Tudor Cebere is also supported by a Google PhD Fellowship in Privacy, Safety, and Security. This work was performed using HPC resources from GENCI–IDRIS (Grant
2023-AD011014018R2).

\bibliographystyle{plainnat}
\bibliography{bibliography} 

\appendix

\section{Additional Preliminaries}
\label{app:additional-preliminaries}

Here we present standard technical results from differential privacy.

Any $\rho$-zCDP guarantee can be converted to $(\varepsilon, \delta)$-DP guarantees as follows.

\begin{lemma}[{\citep{BunSteinke16}}]
\label{lem:zCDP-conversion}
    Let $\mathcal{M}: \mathcal{X}^{*} \rightarrow \mathcal{Z}$ be a randomized mechanism that satisfies $\rho$-zCDP. Then, $\mathcal{M}$ satisfies $(\varepsilon, \delta)$-DP. for all $\varepsilon \ge \rho$ and
    \begin{align*}
        \delta = \frac{2e^{-(\varepsilon - \rho)^2 / 4\rho}}{1 + \frac{\varepsilon - \rho}{2\rho} + \sqrt{(1 + \frac{\varepsilon - \rho}{2\rho})^2 + \frac{4}{\pi \rho}}} \,.
    \end{align*}
\end{lemma}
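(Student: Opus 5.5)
The plan is to follow the standard privacy-loss-random-variable route, then evaluate the resulting Gaussian-type integral in closed form. Fix neighbours $D\sim D'$, write $P=\mathcal{M}(D)$, $Q=\mathcal{M}(D')$, and let $Z=\log\frac{dP}{dQ}(Y)$ with $Y\sim P$ be the privacy loss random variable.

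\textbf{Step 1 (reduce $\delta$ to a tail integral).} For any measurable $S$ we have $P(S)-e^{\varepsilon}Q(S)\le \mathbb{E}_{Y\sim P}\big[(1-e^{\varepsilon-Z})_+\big]$. Writing $(1-e^{\varepsilon-Z})_+=\int_{\varepsilon}^{\infty}e^{\varepsilon-z}\mathbf{1}[Z>z]\,dz$ and applying Fubini gives
\[
\delta\le \int_{\varepsilon}^{\infty} e^{\varepsilon-z}\,\Pr[Z>z]\,dz .
\]

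\textbf{Step 2 (Chernoff tail from zCDP).} The $\rho$-zCDP condition of Definition~\ref{def:zCDP} says $\mathbb{E}[e^{(\alpha-1)Z}]=e^{(\alpha-1)D_\alpha(P\|Q)}\le e^{(\alpha-1)\alpha\rho}$ for every $\alpha>1$. Markov's inequality then gives $\Pr[Z>z]\le e^{(\alpha-1)(\alpha\rho-z)}$. The natural choice is $\alpha=(z+\rho)/(2\rho)$, which is admissible ($\alpha\ge1$) exactly when $z\ge\rho$; this holds on the whole integration range because $\varepsilon\ge\rho$. With this $\alpha$ we get $\Pr[Z>z]\le e^{-(z-\rho)^2/(4\rho)}$.

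\textbf{Step 3 (complete the square).} The exponent of the integrand becomes
\[
\varepsilon-z-\frac{(z-\rho)^2}{4\rho}=\varepsilon-\frac{(z+\rho)^2}{4\rho}.
\]
Substituting $u=(z+\rho)/\sqrt{2\rho}$ turns the integral into
\[
\delta\le e^{\varepsilon}\sqrt{4\pi\rho}\;\bar\Phi(x),\qquad x=\frac{\varepsilon+\rho}{\sqrt{2\rho}},
\]
where $\bar\Phi=1-\Phi$ is the standard Gaussian tail.

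\textbf{Step 4 (Mills-ratio bound).} This is the step I expect to be the main obstacle, since the exact form of the constant is dictated by which Gaussian tail inequality is used. I would apply the Birnbaum-type bound
\[
\bar\Phi(x)\le \frac{2\varphi(x)}{x+\sqrt{x^2+8/\pi}},\qquad x\ge0,
\]
where $\varphi(x)=e^{-x^2/2}/\sqrt{2\pi}$. To justify it, I would show that the function $x\mapsto 2\varphi(x)/(x+\sqrt{x^2+8/\pi})-\bar\Phi(x)$ tends to $0$ at infinity and has derivative of a fixed sign.

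\textbf{Step 5 (check the algebra).} It remains to verify that the bound from Step~4 reproduces the stated formula.
\begin{itemize}
\item For the exponential factor, $x^2/2=(\varepsilon+\rho)^2/(4\rho)$, so $e^{\varepsilon-x^2/2}=e^{-(\varepsilon-\rho)^2/(4\rho)}$.
\item For the prefactor, $\sqrt{4\pi\rho}\cdot 2/\sqrt{2\pi}=2\sqrt{2\rho}$.
\item Dividing by $\sqrt{2\rho}$ inside the denominator gives $x/\sqrt{2\rho}=(\varepsilon+\rho)/(2\rho)=1+\frac{\varepsilon-\rho}{2\rho}$ and $8/(\pi\cdot 2\rho)=4/(\pi\rho)$.
\end{itemize}
Combining these yields exactly the expression for $\delta$ in the lemma.
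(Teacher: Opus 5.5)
Steps 1--3 and 5 are correct and follow the standard Bun--Steinke argument, which the paper relies on by citation without giving its own proof: tail integral of the privacy loss, Chernoff bound with $\alpha=(z+\rho)/(2\rho)$, completing the square, then a Gaussian tail bound. Your algebra in Step 5 reproduces the stated $\delta$ exactly.

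The gap is in Step 4: the verification you propose for the Mills-ratio bound cannot succeed. Write $c=8/\pi$, $s=\sqrt{x^2+c}$ and $h(x)=2\varphi(x)/(x+s)-\bar\Phi(x)$. The constant $8/\pi$ is exactly the one that makes the bound tight at the origin, since $h(0)=2\varphi(0)/\sqrt{c}-\tfrac12=0$. Also $h(x)\to0$ as $x\to\infty$. A function that vanishes at both ends and has a derivative of fixed sign would be identically zero, which $h$ is not. Concretely, using $\varphi'=-x\varphi$,
\[
h'(x)=\frac{\varphi(x)}{x+s}\Bigl(s-x-\frac{2}{s}\Bigr),
\]
and, since $c>2$, $s^2-xs-2\ge0$ if and only if $(c-2)^2\ge(4-c)x^2$. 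So $h'>0$ on $[0,x_0)$ and $h'<0$ on $(x_0,\infty)$, with $x_0=(c-2)/\sqrt{4-c}\approx0.45$. You cannot avoid the region near $0$ either. In your application $x=(\varepsilon+\rho)/\sqrt{2\rho}$ is only guaranteed to satisfy $x\ge\sqrt{2\rho}$, which is small when $\rho$ is small.

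The repair is short. On $[0,x_0]$, $h$ increases from $h(0)=0$; on $[x_0,\infty)$ it decreases to its limit $0$. Hence $h\ge0$ on all of $[0,\infty)$. Alternatively, you can simply cite this known sharp upper bound on the Gaussian Mills ratio. With Step 4 justified this way, your proof is complete.
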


The consecutive execution of differentially private mechanisms yields another differentially private mechanism. We use the following composition theorems.

\begin{lemma}[{\citep{DworkR14}} Basic Composition]
\label{lem:basic-composition}
    Let $\mathcal{M}_1: \mathcal{X}^{*} \rightarrow \mathcal{Z}_1, \mathcal{M}_2: \mathcal{X}^{*} \times \mathcal{Z}_1 \rightarrow~\mathcal{Z}_2$ be randomized mechanisms that satisfy $(\varepsilon_1, \delta_1)$-DP and $(\varepsilon_2, \delta_2)$-DP respectively, then the composed mechanism $\mathcal{M}(D) = \mathcal{M}_2(D, \mathcal{M}_1(D))$ satisfies $(\varepsilon_1 + \varepsilon_2, \delta_1 + \delta_2)$-DP.
\end{lemma}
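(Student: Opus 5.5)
The statement immediately above is Basic Composition (\Cref{lem:basic-composition}). I would prove it directly from \Cref{def:differential-privacy} by conditioning on the output of $\mathcal{M}_1$.

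Fix neighbors $D \sim D'$ and a measurable set $Z \subseteq \mathcal{Z}_1 \times \mathcal{Z}_2$. (I take the composed output to be the pair $(z_1, z_2)$; the case where only $z_2$ is released follows by post-processing.) For each $z_1$, write $Z_{z_1} = \{z_2 : (z_1,z_2) \in Z\}$ and $f(z_1) = \Pr[\mathcal{M}_2(D,z_1) \in Z_{z_1}]$, with $f'$ the same quantity on $D'$. Then
\[
\Pr[\mathcal{M}(D) \in Z] = \mathbb{E}_{z_1 \sim \mathcal{M}_1(D)}[f(z_1)].
\]

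The privacy of $\mathcal{M}_2$ for each fixed auxiliary input $z_1$ gives $f(z_1) \le \min(1, e^{\varepsilon_2} f'(z_1) + \delta_2)$. Taking expectations yields
\[
\Pr[\mathcal{M}(D)\in Z] \le \delta_2 + \mathbb{E}_{\mathcal{M}_1(D)}[g(z_1)], \qquad g = \min(1, e^{\varepsilon_2} f'),
\]
which is where the additive $\delta_2$ comes from.

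It remains to move the expectation from $\mathcal{M}_1(D)$ to $\mathcal{M}_1(D')$. Since $g$ is measurable with values in $[0,1]$, the layer-cake formula gives
\[
\mathbb{E}[g] = \int_0^1 \Pr[g(z_1) > t]\,dt.
\]
Applying the $(\varepsilon_1,\delta_1)$ guarantee of $\mathcal{M}_1$ to each level set $\{g > t\}$ and integrating over $t \in [0,1]$ gives
\[
\mathbb{E}_{\mathcal{M}_1(D)}[g] \le e^{\varepsilon_1}\,\mathbb{E}_{\mathcal{M}_1(D')}[g] + \delta_1.
\]
Bounding $g \le e^{\varepsilon_2} f'$ then gives
\[
\Pr[\mathcal{M}(D)\in Z] \le e^{\varepsilon_1+\varepsilon_2}\,\mathbb{E}_{\mathcal{M}_1(D')}[f'] + \delta_1 + \delta_2 = e^{\varepsilon_1+\varepsilon_2}\Pr[\mathcal{M}(D')\in Z] + \delta_1+\delta_2,
\]
which is the claim.

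The main obstacle is the step that transfers the bound through $\mathcal{M}_1$. A naive density-ratio argument fails when $\delta_1 > 0$, so the truncation $\min(1,\cdot)$ is essential. It is what keeps $g \in [0,1]$, so that $\delta_1$ enters only once after the level-set integration. Without it the test function is unbounded and the argument breaks. The remaining details are only measurability of $f$ and $g$, which I would treat as standard.
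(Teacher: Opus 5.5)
Your proof is correct. It is essentially the standard conditioning-plus-truncation argument behind the Dwork--Roth result, which the paper cites without reproducing: condition on $z_1$, cap the $\mathcal{M}_2$ bound at $1$, peel off $\delta_2$, then push the $[0,1]$-valued test function $g$ through the guarantee of $\mathcal{M}_1$. Your layer-cake step is a clean, rigorous way to do that last transfer, which is often written informally as a pointwise bound on the density of $\mathcal{M}_1(D)$.

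One small correction to your closing remark: without the truncation the test function $e^{\varepsilon_2} f'$ is still bounded by $e^{\varepsilon_2}$ (since $f' \le 1$). So the argument does not break; it only yields the weaker additive term $e^{\varepsilon_2}\delta_1 + \delta_2$.
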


\begin{lemma}[{\citep{BunSteinke16}} zCDP Composition]
\label{lem:zCDP-composition}
    Let $\mathcal{M}_1: \mathcal{X}^{*} \rightarrow \mathcal{Z}_1, \mathcal{M}_2: \mathcal{X}^{*} \times~\mathcal{Z}_1 \rightarrow~\mathcal{Z}_2$ be randomized mechanisms that satisfy $\rho_1$-zCDP and $\rho_2$-zCDP respectively, then the composed mechanism $\mathcal{M}(D) = \mathcal{M}_1(D, \mathcal{M}_2(D))$ satisfies $\rho_1 + \rho_2$-zCDP.
\end{lemma}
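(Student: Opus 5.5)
Since the two displayed formulas should be read with $\mathcal{M}_1$ run first and its output fed into $\mathcal{M}_2$ (i.e.\ $\mathcal{M}(D) = \mathcal{M}_2(D, \mathcal{M}_1(D))$, as in \Cref{lem:basic-composition}), I would prove the claim for the joint output $(Z_1, Z_2)$. The final output $Z_2$ is a post-processing of this pair, so the bound carries over to it. Post-processing is immediate for Rényi divergence by the data-processing inequality.

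The approach is the chain rule for Rényi divergence. Fix neighbours $D \sim D'$ and $\alpha \in (1,\infty)$. Let $P$ and $Q$ be the laws of $(Z_1,Z_2)$ under $D$ and $D'$. Work with densities with respect to a common dominating measure, e.g.\ $\mu = P + Q$ on each coordinate, assuming standard Borel output spaces so that regular conditional distributions exist. Write $p_1, q_1$ for the densities of $\mathcal{M}_1(D)$ and $\mathcal{M}_1(D')$. Write $p_2(\cdot \mid z_1), q_2(\cdot \mid z_1)$ for the densities of $\mathcal{M}_2(D,z_1)$ and $\mathcal{M}_2(D',z_1)$. Then
\[
e^{(\alpha-1)D_\alpha(P\parallel Q)} = \int p_1(z_1)^\alpha q_1(z_1)^{1-\alpha} \left( \int p_2(z_2\mid z_1)^\alpha q_2(z_2\mid z_1)^{1-\alpha}\, dz_2 \right) dz_1 .
\]
The inner integral equals $e^{(\alpha-1)D_\alpha(\mathcal{M}_2(D,z_1)\parallel \mathcal{M}_2(D',z_1))}$.

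Since $\mathcal{M}_2(\cdot, z_1)$ is $\rho_2$-zCDP for every fixed auxiliary input $z_1$, the inner integral is at most $e^{(\alpha-1)\rho_2\alpha}$, uniformly in $z_1$. Pulling this constant out, what remains is exactly $e^{(\alpha-1)D_\alpha(\mathcal{M}_1(D)\parallel\mathcal{M}_1(D'))} \le e^{(\alpha-1)\rho_1\alpha}$. Taking logarithms and dividing by $\alpha-1>0$ gives $D_\alpha(P\parallel Q)\le(\rho_1+\rho_2)\alpha$. This holds for all $\alpha>1$ and all neighbouring pairs, which is $(\rho_1+\rho_2)$-zCDP.

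The only delicate step is the measure-theoretic justification of the factorization, that is, writing the joint density as $p_1(z_1)\,p_2(z_2\mid z_1)$ and applying Tonelli. I would handle it by treating $\mathcal{M}_2$ as a Markov kernel on standard Borel spaces. The degenerate case where $P \not\ll Q$ needs a separate check. There, the absolute-continuity failure must occur either in the first coordinate or in the conditional kernel on a set of positive $P_1$-measure. In either case one of the component divergences is infinite, contradicting the zCDP hypotheses, so this case cannot arise. Everything else is routine algebra.
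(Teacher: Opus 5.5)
Your proof is correct and is the standard argument behind the cited result of \citet{BunSteinke16}; the paper gives no proof of its own. That argument is the chain rule $D_\alpha(P\parallel Q)\le D_\alpha(P_1\parallel Q_1)+\sup_{z_1}D_\alpha(\mathcal{M}_2(D,z_1)\parallel\mathcal{M}_2(D',z_1))$ for R\'enyi divergence plus post-processing, and you rightly read the composed mechanism as $\mathcal{M}_2(D,\mathcal{M}_1(D))$. One detail to fix is your suggested reference measure: the second-coordinate marginal of $P+Q$ need not dominate the conditional laws (e.g.\ $\mathcal{M}_2(D,z_1)=z_1$ after a Gaussian $\mathcal{M}_1$), so use the kernel $\mathcal{M}_2(D,z_1)+\mathcal{M}_2(D',z_1)$ as the reference measure for the inner integral, which your Markov-kernel formulation already provides.
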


A key property of differential privacy is that the usage of the output of a private mechanism in any computation cannot make the output less private.

\begin{definition}[{\citep{DworkR14}} Post-Processing] A randomized mechanism $\mathcal{M}: \mathcal{X}^{*} \rightarrow \mathcal{Z}$ with privacy guarantee $g$ is immune to post-processing if for any randomized mapping $f: \mathcal{Z} \rightarrow \mathcal{W}$, $f \circ \mathcal{M}: \mathcal{X}^* \rightarrow \mathcal{W}$ satisfies $g$. 
\end{definition}

\begin{lemma}[{\citep{DworkR14, BunSteinke16}}] Every mechanism that satisfies a $(\varepsilon,\delta)$-DP or $\rho$-zCDP guarantee is immune to post-processing.
\end{lemma}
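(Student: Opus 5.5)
The plan is to treat the two privacy notions separately. In each case I first handle a deterministic post-processing map $f$, then reduce a randomized $f$ to a mixture of deterministic maps.

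\emph{$(\varepsilon,\delta)$-DP.} Let $f:\mathcal{Z}\to\mathcal{W}$ be deterministic and measurable, and fix neighbours $D\sim D'$ and a measurable $W\subseteq\mathcal{W}$. The preimage $Z=f^{-1}(W)$ is a measurable subset of $\mathcal{Z}$. Applying \Cref{def:differential-privacy} to $Z$ gives
\[
\Pr[f(\mathcal{M}(D))\in W]=\Pr[\mathcal{M}(D)\in Z]\le e^{\varepsilon}\Pr[\mathcal{M}(D')\in Z]+\delta=e^{\varepsilon}\Pr[f(\mathcal{M}(D'))\in W]+\delta .
\]
For a randomized $f$, write $f(z)=g(z,R)$, where $g$ is deterministic and $R\sim\mu$ is independent of $\mathcal{M}$. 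For each fixed $r$, the deterministic case applies to $g(\cdot,r)$. Integrating the resulting inequality over $r\sim\mu$ (Fubini, using independence of $R$) gives the claim, because both $e^{\varepsilon}$ and $\delta$ are constants that pass through the integral.

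\emph{$\rho$-zCDP.} Here the key step is the data-processing inequality for Rényi divergence: for any Markov kernel $K$ and distributions $P,Q$,
\[
D_\alpha(KP\,\|\,KQ)\le D_\alpha(P\,\|\,Q)\qquad\text{for all }\alpha\in(1,\infty).
\]
Given this, $D_\alpha(f(\mathcal{M}(D))\|f(\mathcal{M}(D')))\le D_\alpha(\mathcal{M}(D)\|\mathcal{M}(D'))\le\rho\alpha$, which is exactly \Cref{def:zCDP} for $f\circ\mathcal{M}$.

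To prove the data-processing inequality, I would write $D_\alpha(P\|Q)=\frac{1}{\alpha-1}\log \mathbb{E}_Q[(dP/dQ)^\alpha]$. Since $\frac{1}{\alpha-1}\log(\cdot)$ is increasing for $\alpha>1$, it suffices to show $\mathbb{E}_{KQ}[(d(KP)/d(KQ))^\alpha]\le \mathbb{E}_Q[(dP/dQ)^\alpha]$. Consider the joint law of $(Z,W)$ with $Z\sim Q$ and $W\sim K(Z,\cdot)$. The density ratio of the output marginals is the conditional expectation $\frac{d(KP)}{d(KQ)}(W)=\mathbb{E}[\frac{dP}{dQ}(Z)\mid W]$. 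Conditional Jensen with the convex map $t\mapsto t^\alpha$ then gives the bound after taking expectations. If $P\not\ll Q$, the right-hand side is $+\infty$ and there is nothing to prove.

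I expect the main obstacle to be this measure-theoretic step: identifying the output density ratio as a conditional expectation for general kernels. I would handle it by verifying the defining property directly: for measurable $A\subseteq\mathcal{W}$, $\mathbb{E}_Q[\frac{dP}{dQ}(Z)\mathbf{1}\{W\in A\}]=(KP)(A)$. The rest is routine.
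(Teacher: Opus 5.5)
Your proposal is correct. It follows the standard arguments behind the paper's citation; the paper itself gives no proof. Dwork and Roth prove the $(\varepsilon,\delta)$ case for deterministic maps via preimages and then handle randomized maps as mixtures of deterministic ones, as you do. Bun and Steinke derive the zCDP case from the data-processing inequality for R\'enyi divergence, and your conditional-Jensen proof of that inequality is sound.

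One small technical point: writing $f(z)=g(z,R)$ with a jointly measurable $g$ needs mild regularity of $\mathcal{W}$ (e.g.\ standard Borel). You can avoid this by working with the kernel $K$ directly, as your zCDP part already does. Apply the DP inequality to the $[0,1]$-valued function $z\mapsto K(z,W)$, using $\mathbb{E}_P[\varphi]=\int_0^1 P(\varphi>t)\,dt$.
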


The Gaussian mechanism as presented below is one of the most used DP techniques.

\begin{lemma}[\citep{BunSteinke16,Balle18}, The Gaussian mechanism]
    \label{lem:gaussian-mech}
    Let $q \colon \mathcal{X}^* \rightarrow \mathbb{R}^d$ be a set of queries with $\ell_2$ sensitivity $\Delta q \coloneq \max_{D \sim D'} \|q(D) - q(D')\|_2$. 
    Then the mechanism that outputs $\tilde q(D) = q(D) + Z$ where $Z \sim \mathcal{N}\left(0, (\Delta q)^2 \sigma^2 I_d\right)$ satisfies $(2/\sigma^2)$-zCDP and $(\varepsilon,\delta)$-DP for 
    \[
    \delta \geq \Phi\left( \tfrac{1}{2\sigma} - \varepsilon\sigma \right) - e^\varepsilon \Phi\left( - \tfrac{1}{2\sigma} - \varepsilon\sigma \right) \,.
    \]
\end{lemma}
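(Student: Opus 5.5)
The lemma has two parts, a zCDP bound and an $(\varepsilon,\delta)$ bound. I would prove both by reducing to a one-dimensional comparison of two Gaussians. Fix neighbours $D \sim D'$ and write $\mu = q(D)$, $\mu' = q(D')$ and $s = \Delta q \cdot \sigma$, so $\|\mu - \mu'\|_2 \le \Delta q$. The mechanism outputs $\mathcal{N}(\mu, s^2 I_d)$ versus $\mathcal{N}(\mu', s^2 I_d)$. The isotropic Gaussian is invariant under rotations and translations, and applying a bijection to both outputs changes no divergence or hypothesis-testing quantity. So I would map $\mu$ to $0$ and $\mu - \mu'$ onto the first coordinate axis. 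The remaining $d-1$ coordinates are then identically distributed under both hypotheses and independent of the first. By additivity of Rényi divergence over product measures, and by the fact that appending a common independent component leaves the privacy-loss distribution unchanged, we are left with $P=\mathcal{N}(0,s^2)$ versus $Q=\mathcal{N}(r s, s^2)$, where $r = \|\mu-\mu'\|_2/s \le 1/\sigma$.

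\textbf{zCDP part.} The Rényi divergence between two equal-variance Gaussians has the standard closed form $D_\alpha(P\|Q) = \alpha (rs)^2/(2s^2) = \alpha r^2/2$. This is obtained by completing the square in $\int p^\alpha q^{1-\alpha}$. Using $r \le 1/\sigma$ gives $D_\alpha \le \alpha/(2\sigma^2)$, which is $1/(2\sigma^2)$-zCDP and in particular the stated (weaker) $2/\sigma^2$-zCDP.

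\textbf{$(\varepsilon,\delta)$ part.} Here I would follow the analytic Gaussian argument of \citep{Balle18}. The tight $\delta$ for a pair $(P,Q)$ is
\[
\delta(\varepsilon) = \Pr_{P}[L > \varepsilon] - e^{\varepsilon}\Pr_{Q}[L > \varepsilon],
\]
where $L = \log(p/q)$ is the privacy loss. For $Z \sim P$ one computes $L = r^2/2 - rZ/s$, so $L \sim \mathcal{N}(r^2/2, r^2)$ under $P$. The same calculation gives $L \sim \mathcal{N}(-r^2/2, r^2)$ under $Q$. Standardizing yields
\[
\delta(\varepsilon) = \Phi\!\left(\tfrac{r}{2} - \tfrac{\varepsilon}{r}\right) - e^{\varepsilon}\Phi\!\left(-\tfrac{r}{2} - \tfrac{\varepsilon}{r}\right).
\]
The same expression arises with $P$ and $Q$ swapped, by symmetry.

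The remaining step, which I expect to be the only real obstacle, is to replace $r$ by its worst case $1/\sigma$. This requires showing that the right-hand side above is nondecreasing in $r$. I would try the cleanest route first, which is a post-processing argument. If $r_1 \le r_2$, then $\mathcal{N}(0,s^2)$ versus $\mathcal{N}(r_1 s, s^2)$ is obtained from $\mathcal{N}(0,s'^2)$ versus $\mathcal{N}(r_2 s', s'^2)$ by adding independent noise of suitable variance and rescaling. This is a Markov kernel applied to both distributions, and the hockey-stick divergence cannot increase under a common Markov kernel. Failing that, I would differentiate in $r$ directly: the identity $e^{\varepsilon}\varphi(-r/2-\varepsilon/r)=\varphi(r/2-\varepsilon/r)$ makes the two derivative terms combine into $\varphi(r/2-\varepsilon/r)\ge 0$. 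Substituting $r = 1/\sigma$ gives exactly the stated threshold $\Phi\!\left(\tfrac{1}{2\sigma}-\varepsilon\sigma\right) - e^\varepsilon\Phi\!\left(-\tfrac{1}{2\sigma}-\varepsilon\sigma\right)$. Any $\delta$ at least this value therefore certifies $(\varepsilon,\delta)$-DP for every neighbouring pair.
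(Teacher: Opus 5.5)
Your proposal is correct and is essentially the standard argument from the works the paper cites. The paper gives no proof of its own: it defers to Bun--Steinke for the R\'enyi closed form, and to Balle--Wang for the Gaussian privacy-loss computation and the monotonicity of $\Phi(r/2-\varepsilon/r)-e^{\varepsilon}\Phi(-r/2-\varepsilon/r)$ in $r$, which your derivative identity settles cleanly. You are also right that the computation actually yields $1/(2\sigma^2)$-zCDP, so the stated $2/\sigma^2$ is loose by a factor of four (likely a typo), though it still follows a fortiori.
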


\section{More Details of Related Work for Differentially Private Heavy Hitters}
\label{app:related-heavy}

Here we present an expanded discussion of the related work on differentially private heavy hitters. % \Cref{sec:related-heavy-hitters}.

A key technical component of our work is a novel algorithm for the heavy hitters problem for hierarchical data.
For this problem, we are given a tree $\T$ with height $h$. Each data point increase the count of a single leaf by 1, and each internal node stores the sum of counts for all child nodes.\footnote{We focus on binary trees in the presentation but our algorithm works with any branching factor. More generally, our heavy hitters algorithm is applicable to any setting where adding a data point only changes the value of nodes that are all part of the same path from root to leaf and the value of a parent node is never smaller than any child value.}
We want to privately detect all nodes with count above some threshold $\tau$. Notice that the root of $\T$ is always heavy and the set of heavy hitters form a tree with the same root (except for the special case where no nodes are heavy). We exploit this structure in our algorithm.

This problem has connections both to differentially private quantile estimation and sparse histograms. 
If we restrict our view to a single path from the root to the leaf, we want to find the first node with count below the threshold. 
This is equivalent to a quantile query (also known as threshold queries).
In the (discrete) quantile problem, we have $n$ data points $x_i \in \{1,\dots,h\}$.
The goal is to find the largest $q$ such that 
$\vert\{i \in [n]: x_i \leq q\}\vert \leq \tau$ for some threshold $\tau$.
Standard approaches include private selection using the exponential mechanism which has rank error $O(\log h/\varepsilon)$ (or $O(\log h /\sqrt{\rho})$ under $\rho$-zCDP)~\citep{kaplan2022differentially} and noisy binary search which has rank error $O(\sqrt{\log h \log\log h /\rho})$ for $\rho$-zCDP~\citep{HuangLY21}. Other techniques achieve better dependence on $h$~(e.g. \cite{BunNSV15,Cohen0NSS23}), but have larger constants and are only preferred for very large $h$. %\looseness=-1

If we instead restrict out view to a single layer of the tree, the problem is equivalent to detecting heavy hitters in a histogram. 
For deeper layers in the tree, this histogram must be sparse, since the number of nodes exceeds the number of data points. 
In the sparse histogram problem, we have $n$ data points $x_i \in [d]$ for some $d \gg n$. 
We want to privately estimate $\vert \{ i \in [n] : x_i = j\} \vert$ for all $j \in [d]$.
Variants of this problem have been thoroughly studied in the differential privacy literature~\citep{Korolova09,CormodePST12,Gotz2012,BalcerV19,WilsonZLDSG20,ALP22,WilkinsKZK24,LebedaR25,KerschbaumLeeWu25}. 
The optimal error for this problem is $O(\min(\log(d), \log(1/\delta))/\varepsilon)$. We rely on a lower bound from this literature to derive a lower bound for our setting. 
As such, the heavy hitters problem we consider generalizes both quantile queries and sparse histograms. In fact, we leverage and adapt techniques from both problems in our algorithm.

A related problem is the task of releasing counts for hierarchical data. 
The straightforward approach samples independent Gaussian noise scaled by $\sqrt{h}$ to each count (here $h$ is the maximum height of a tree and thus the $\ell_2$ sensitivity is $\sqrt{h}$). 
Several papers \citep{honaker2015efficient, optimizingAttribution2023, ghazi2026denoising} leverage redundancy in the tree structure to achieve better utility by post-processing the noisy queries.
An alternative approach to exploiting redundancy in queries is to directly add correlated noise using the factorization mechanism framework~\citep{LiMHMR15,EdmondsNU20}.
Assuming that all nodes on the same layer in $\T$ have the same branching factor, the tree structure can be encoded as $h$ marginal queries. Marginal queries have received significant study in differential privacy and recent results provide optimal factorizations \citep{XiaoHZK23, LNT25, he2026accurate}.
Both the postprocessing techniques and the factorization framework lead to useful constant factor improvements, but the noise still scales with $\sqrt{h}$.
Since there are $2^h-1$ noise samples, one for each node, the expected largest noise sample has magnitude $O_{\varepsilon,\delta}(\sigma\sqrt{\log(2^h)}) = O_{\varepsilon,\delta}(h)$. 
In many use cases, $h$ is relatively small; for instance, the data used for the US 2020 Decennial Census~\citep{censusTopDown, he2026accurate} has 6 levels (Country, State, County, Tract, Block Group, and Block), so multiplying noise by a factor $\sqrt{h}=\sqrt{6}\approx2.45$ is acceptable for these problems.
We aim to support significantly deeper trees, and thus require a much lower dependency on the tree height.\looseness=-1

Despite a lot of prior work in differential privacy both on heavy hitters estimation and release of hierarchical data, there is limited work on the intersection. 
\cite{GhaziK0M023} consider the same heavy hitters problem as us as part of their work (see Problems 4.2 and 4.4 therein). They reduce the dependency on the height to $O(\sqrt{\log h})$. However, they introduce a $\sqrt{n}$ term, which is typically much higher than $\sqrt{h}$.
The most relevant related work is by \cite{BiswasCKSZ24} who consider the task of privately releasing Hierarchical Heavy Hitters (HHH), a generalization of the heavy hitters problem. 
In the HHH problem, the count of a hierarchical heavy hitter is not included in the sum for the parent node. 
Note that this is a strictly harder problem, since we can recover the tree of heavy hitters by post-processing the HHH set.
\cite{BiswasCKSZ24} study the problem in both the static and the streaming settings. 
In the static setting they propose an algorithm inspired by the Sparse Vector Technique (SVT)~\citep{DworkR14,lyu2016understanding} with noise scale independent of the height of the tree.
The error scales only with $O(\max(\log(nh),\log(h/\delta))/\varepsilon)$. 
Since their setting is strictly harder than our heavy hitters problem, we would ideally use their algorithm for detecting heavy hitter.
Unfortunately, we have discovered a subtle mistake in their privacy proof, as we discuss in \Cref{app:HHH-bug}. 

Two recent papers \citep{BernardiniBGS25, GuoHollandWu26} study the problem of privately identifying frequent substrings. 
Their techniques relies on a tree-based data structure. 
Although the setting is not directly comparable to the problem we consider in this paper, we believe (adaptations of) our algorithm may be relevant for identifying frequent substrings under approximate DP.

\section{Additional Experimental Results and Details}
\label{app:additional-experiments}

\paragraph{Results on additional datasets.}
\Cref{fig:appendix-experiments} shows results for datasets that were left out of the main text. 
Similar to \Cref{fig:experiment-results}, we outperform all the private competitors.

\begin{figure}[h]
    \centering
    \includegraphics[width=0.95\linewidth]{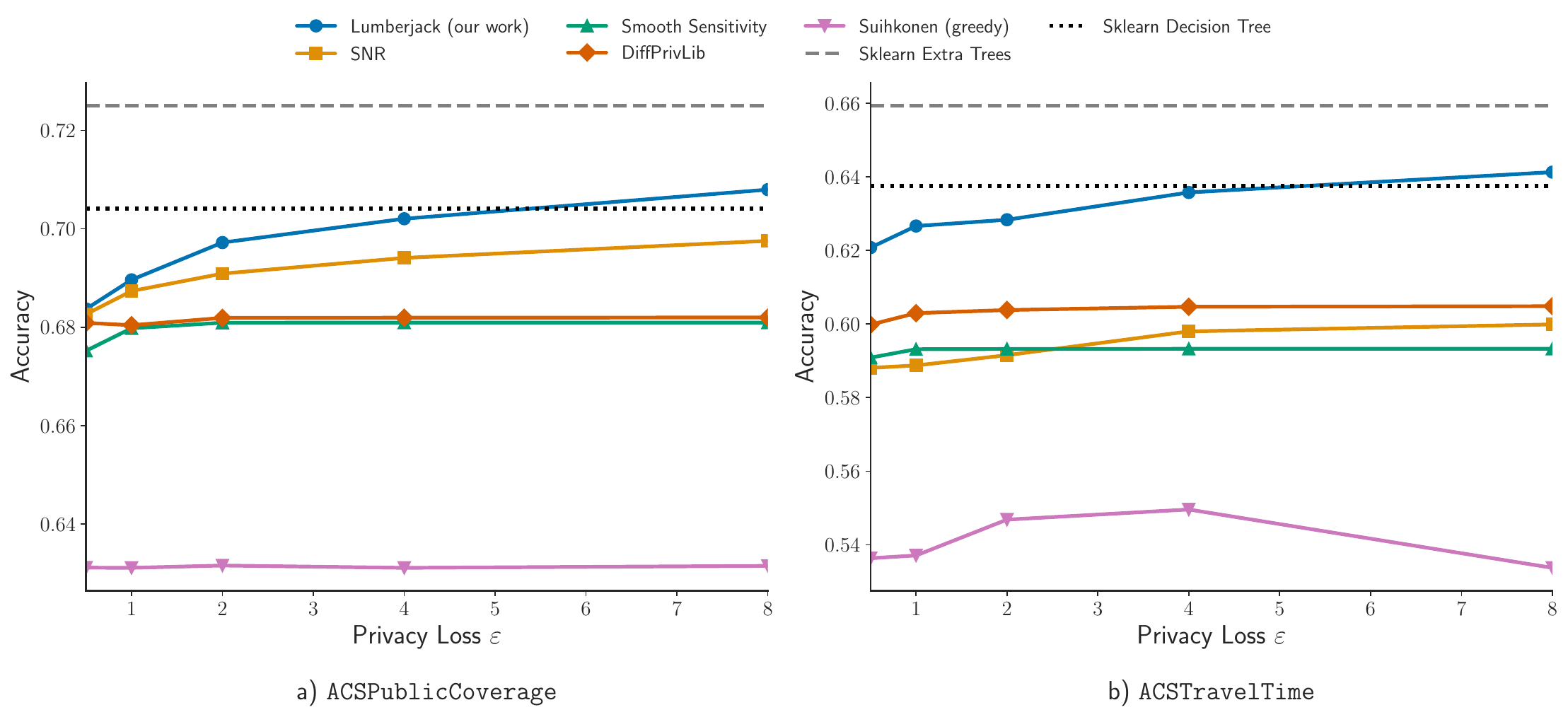}
    \caption{Experimental results for two additional Folktables classification tasks. %We exclude the Suihkonen baseline from ACSEmployment due to technical issues.
    }
    \label{fig:appendix-experiments}
\end{figure}

\paragraph{Hyperparameters for baselines.} The \texttt{SNR} implementation computes most hyperparameters based on the input. 
They do not support continuous features directly, so the domain has to be discretized. We set the number of bins to 5 reflecting parameters used in the paper~\citep{FletcherI15_random}.

For \texttt{Smooth Sensitivity}, we only set the number of trees since the depth is computed using the number of features. We train 30 trees for all benchmarks because they used 30 trees for their experiments with \texttt{Adult}~\citep{fletcher2017differentially}.
Note that we used the original implementation which contains a known privacy bug. The bug is fixed in the \texttt{DiffPrivLib} variant of the algorithm, but we include the original source code as a baseline because the implementations handle categorical features in 2 different ways.

For \texttt{DiffPrivLib}, we tuned hyperparameters using the validation set. 
Strictly speaking, one should spend part of the privacy budget for this tuning, since the choice of hyperparameters can reveal information about the dataset~(See \cite{Papernot022}).
However, we do not include this in the privacy cost to mimic the best case scenario where an analyst picked good hyperparameters.
This implementation often predicts the majority class for all data points if we pick bad parameters. 
We consider forest sizes in increments of 5 trees and explore tree depth up to 19. The typical selected parameters were 10-20 trees and depth 15-17. 
Note that increasing the search to deeper trees quickly becomes infeasible due to memory usage since \texttt{DiffPrivLib}, unlike \methodname{}, stores all empty nodes in the tree. For example, at depth 20, there are more than a million nodes in a tree so most of the memory usage is dedicated to nodes that contain no data points.
For \texttt{DiffPrivLib}, we did not use one-hot encoding for categorical features. The implementation does not support weighted feature sampling, and the classifier performs much worse with one-hot encoding. Instead, we set the value to the index in the order of the original data.

For \texttt{Suihkonen}, we used the same hyperparameters that the original work used for \texttt{Adult}, which is a forest of 15 trees with depth 7. 
When splitting on categorical featuresn they split on a set rather than a single category. 
They use a parameter to determine the size of subsets, which we set to 8.
%They use a parameter to determine subsets of categories to explore when splitting categorical attributes.
%We set this parameter to 8.
The accuracy on \texttt{Adult} roughly matches the plot from~\citep{Suihkonen2023DP_RF}. 
We did not change the hyperparameters for the \texttt{Folktables} tasks except for only considering "one vs. all" splits on categorical attributes. 
This was required for runtime purposes. 
The choice of this parameter affects the performance on these datasets, but extensive hyperparameter tuning for this algorithm was impractical since certain choices are computationally expensive.
Note that we did not include the implementation of \cite{Suihkonen2023DP_RF} for ACSEmployment because the runtime was too high for this dataset.\looseness=-1
% Nevertheless, based on the other experiments we are confident that we would still outperform the approach if those issues were fixed.

We found privacy concerns in all papers from the greedy decision tree paradigm (see Appendix~\ref{app:privacy-concerns}), making it unclear which technique should be regarded as the prior state of the art.
We use \texttt{Suihkonen} as a representative technique from this paradigm. As it performs very poorly on folktables experiments, we considered including an additional baseline based on greedy decision trees. 
However, we did not find any suitable candidate among existing literature. 
Early works such as \cite{PatilS14_random_forest,FletcherI15_greedy} are known to perform poorly in general, and they do not directly support continuous features. 
The runtime of \cite{consul2021differentially} is exponential in the size of categorical features, which is infeasible for several of the datasets we consider. 
Many of the other papers do not provide any source code.
% , and anyway contain clear privacy issues (see Appendix~\ref{app:privacy-concerns}).
% We decided not to implement those algorithms from scratch, since they contain clear privacy issues.
% We included \texttt{Smooth Sensitivity} and \texttt{Suihkonen} despite privacy concerns to increase the number of baselines.
% We identified privacy concerns in all papers based on the greedy decision tree paradigm, making it unclear which technique should be regarded as the prior state of the art (see Appendix~\ref{app:privacy-concerns}).
% Moreover, many of these works do not provide source code.
% We decided not to implement those algorithms from scratch, since they contain clear privacy issues.
% Nevertheless, we included \texttt{Smooth Sensitivity} and \texttt{Suihkonen} to increase the number of baselines despite their privacy concerns.

\paragraph{Toy Example in \Cref{fig:moons-toy-example}.} The dataset is constructed by combining the \texttt{make\_moons} and \texttt{make\_blobs} dataset generators from scikit-learn. We create 10000 data points in total, 5500 red points, 4000 blue points, and 500 green points. 
We add an offset to the moons such that the feature bounds are non-negative.
The center of the green outlier blob is distance 1 from the upper bound of both features.
We set the bounds to $[0,5]$, $[0,50]$, and $[0,5000]$ for Feature 1 and $[0,4]$, $[0,40]$, and $[0,4000]$ for Feature 2.
We train 25 trees and use majority voting for classification. 
We tested various depths for the fully random trees for the $[0,5] \times [0,4]$ setting and found that depth 10 splits the classes fairly well. When the bounds increase, most of the feature space is empty. As a result, many of the random splits create empty nodes, and the fully random trees completely fail to separate the moons.
For \methodname{}, we set $\varepsilon_1=0.75\varepsilon$ and use a max depth of 100. 
Although many of the early splits in each trees create empty leaves for the extreme $[0,5000] \times [0,4000]$ setting, our algorithm eventually locates the relevant part of the feature space because it adapts to the number of data points.

\section{Technical Issue with SVT-based Argument by \cite{BiswasCKSZ24}}
\label{app:HHH-bug}

% \fbox{%
% \parbox{.98\linewidth}{%
% \color{red}
% \textbf{Disclaimer to NeurIPS reviewers:} We have notified the authors of \cite{BiswasCKSZ24} about the error. They acknowledged the problem and are exploring possible variations to the algorithm to resolve the issue. As of the NeurIPS submission deadline, no solution is known.
% }%
% }

%\placeholderdavid{TLDR: SVT used the fact that queries are correlated, for the unrelated nodes they use what is essentially parallel composition. The arguments are correct in isolation but incompabitel }

%\placeholder{TODO for Christian/David: Here we describe the issue we discovered with \cite{BiswasCKSZ24}. We should not post the paper on arxiv before we have a final resolution from the discussion with Graham.}

In this section, we describe a subtle mistake that we discovered in the privacy analysis of an algorithm designed by \cite{BiswasCKSZ24} to find Hierarchical Heavy Hitters.
At a high level, their algorithm traverses the tree in bottom-up order and privately compares the count of each node against a noisy threshold. 
Crucially, the noise for this comparison does not scale with $h$ as the algorithm use a technique inspired by the popular Sparse-Vector-Technique (SVT) \citep{DworkR14, lyu2016understanding}.
\cite{BiswasCKSZ24} consider a more general version of the problem that we address in \Cref{sec:pruning-algorithms}, their technique could in principle be used to implement a variant of \methodname{}. However, as we show below, the strong privacy properties of SVT do not appear to transfer directly to trees. 
Despite our efforts, we have not been able to resolve the issue with their algorithm (apart from the trivial fix of increasing the  noise scale by a factor $O(\sqrt{h})$).

%In this section, we describe a subtle mistake in the privacy analysis of an algorithm designed by \cite{BiswasCKSZ24} to find Hierarchical Heavy Hitters. Without this error, we would have been able to improve our tree construction by using their algorithm that adds noise which scales independently of the height of the tree.

%\cl{David: A general writing tip - you need to introduce a section. You dive straight into the technical details here. The first line references Algorithm 1 and Theorem 3.1. But it's not clear that this is an algorithm and theorem from their paper. Remember, we cannot assume that the reader is familiar with \cite{BiswasCKSZ24}. Clear writing is difficult. You have to include all details but remain concise.}

At a high level, the proof of the $(\varepsilon, \delta)$-DP privacy guarantee for Algorithm 1 of the paper \citep[Theorem~3.1]{BiswasCKSZ24} proceeds as follows for any two neighboring datasets $X$ and $X' = X \cup x'$: 
%For any fixed output, the probability of the output occurring under any two neighboring datasets $X$ and $X' = X \cup x'$, respectively.
%Using the structure of the problem, the proof separates the tree into 
Separate the nodes of the tree into 
(i) the path that is affected by adding or removing the element $x'$ and 
(ii) the remaining part of the tree. 
For the former, they apply an argument similar to the proof of the Sparse-Vector-Technique (SVT) \citep{DworkR14, lyu2016understanding}. Since they add a noise sample to the threshold which \emph{correlates} all these queries, they can output any number of "negative" comparisons ($\bot$) on the path at no additional privacy cost. They ensure that each data point is only part in one "positive" query ($\top$), which ultimately achieves a noise scale that is independent of the height of the tree.
For the latter, they point out that the output distributions for these nodes are unaffected by the change in datasets.

We argue that each analysis on it's own is correct, but the two arguments are not compatible and therefore break when combined. 
On a high level the issue is that the first argument utilizes the fact that the entire output sequence of their algorithm is slightly correlated to achieve stronger privacy, while the second argument relies on parallel composition which requires separate parts of the output to be completely independent. Thus we cannot benefit from the first privacy properties of SVT while simultaneously ignoring the effect on the other part of the output. 
We point out concrete parts of the proof that breaks in more details below.

The proof in the paper divides the nodes into three groups, but, to point out the issue in the proof, it is sufficiently to only consider the partition into the affected ("Active" nodes in the paper) and the unaffected ("Unrelated" and "After" nodes in the paper) nodes. Equation 9 in the paper then writes the probability of observing any output $\vec a = (a_1,\dots,a_m)$ as
\begin{align*}
    \prod_{i=1}^m p(\beta_i, a_i) = \prod_{i \in I_{\text{affected}}} p(\beta_i, a_i) \cdot \prod_{i \in I_{\text{unaffected}}} p(\beta_i, a_i) \,,
\end{align*}
where $m$ is the number of nodes in the tree, $\beta_i$ is the random variable of the value of node $i$ in the output of the mechanism using input $X$ and $a_i$ is the observed output. $p(\beta_i, a_i)$ is shorthand notation for $\Pr\left[\beta_i = a_i \vert \vec \beta_{-i} = \vec a_{-i} \right]$ where $\vec \beta_{-i} = (\beta_1, \dots, \beta_{i-1})$ and $\vec a_{-i} = (a_1, \dots, a_{i-1})$. The nodes are indexed in a bottom-up level order, and $\beta_i$ takes on the value $\bot$ or $\top$. $\beta'_i$ will denote the corresponding random variable when the mechanism is run on $X'$. 

For the unaffected nodes, the proof in \citep{BiswasCKSZ24} states (page 20 and 21, paragraphs \emph{Unrelated Nodes} and \emph{After Nodes}) that for any $i \in I_{\text{unaffected}}$
\begin{align*}
    p(\beta_i, a_i) = p(\beta'_i, a_i).
\end{align*}
However, $\vec \beta_{-i}$ can also include $\beta_j$ with $j \in I_\text{affected}$. This means that even though $\beta_i$ and $\beta'_i$ would have the same distribution unconditionally ($\Pr[\beta_i = a_i] = \Pr[\beta'_i = a_i]$), the equality does not hold, because the conditions of the probabilities are not identical.
The noise sample added to the threshold is denoted by $\gamma$, and we have $\Pr[\beta_j = a_j \vert \gamma] \neq \Pr[\beta'_j = a_j \vert \gamma]$ for any $j \in I_{\text{affected}}$.
The value of $\gamma$ affects all queries and observing $\beta_j$ and $\beta'_j$ reveal different information about $\gamma$ which breaks the equality.

Regarding the affected nodes, the proof applies an SVT-style argument (page 22 and 23, paragraph \emph{Active Nodes}), in which an essential part is conditioning on the threshold noise $\gamma$ \citep{lyu2016understanding}. 
The proof of \cite{lyu2016understanding} leverages the fact that the noisy threshold is never revealed to avoid the cost of composition. 
Unfortunately, in this setting, conditioning on $\gamma$ in this way is not as straight-forward, since the conditions can be affected by observations $\beta_j$ with $j \in I_\text{unaffected}$. Intuitively, we learn information about $\gamma$ from all the outputs of the unaffected nodes. In the SVT argument, we do not pay the privacy cost for additional queries that return $\bot$.
However, queries for $I_\text{unaffected}$ can contain both $\bot$ and $\top$, so we cannot ignore their impact in the privacy argument.
In extreme cases for huge worst-case input we could infer the approximate value of $\gamma$ from just the unaffected queries. 
%In the SVT argument, the threshold noise follows a Laplace distribution, but after conditioning on $\beta_j$, which are dependent on $\gamma$, $\gamma \vert \beta_j = a_j$ does not follow the Laplace distribution.

It becomes clear that both directions of the privacy proof break due to the shared dependence on $\gamma$. 
It is not possible to index the nodes in a way such that this does not occur, which shows the incompatibility between the two arguments. 
% We have unfortunately not found a solution to this subtle issue. 

%It becomes clear that both directions of the proof break because conditions might include random variables belonging to the other group. It is not possible to index the random variables in a way such that this does not occur, which shows how the two argument are not compatible.

\section{Privacy Concerns in Prior Work on Differentially Private Random Forests}
\label{app:privacy-concerns}

We identified privacy concerns in many published papers on differentially private random forests. In this section, we summarize a selection of these issues. Our goal is not to provide a comprehensive review of all privacy proofs in the literature; rather, we aim to highlight recurring pitfalls. While differential privacy algorithms and implementations are known to be error-prone in general~(see, e.g., \citep{CasacubertaSVW22,CebereEDBF}), this line of work nonetheless concentrates a number of mistakes that should be readily apparent to experts in differential privacy. Some of these issues directly affect the core claims of the corresponding works, while others could likely be resolved through relatively minor modifications to the algorithms. We do not attempt to categorize the severity of the identified problems.
We also note that several papers are written in a way that makes their technical claims difficult to verify.
We hope that this discussion will help improve the rigor, reproducibility, and comparability of future work in this area.

An additional purpose of this section is to justify our decision not to include several prior works as baselines in \Cref{sec:experiments}. Due to the identified privacy concerns, it is unclear which papers should be regarded as the prior state of the art for differentially private random forests. In particular, we found at least one privacy issue in every technique based on greedy random forests. Nevertheless, our method matches or exceeds the performance claimed by these works (e.g., those using Adult typically report an accuracy of around 79-82\%).

% One of the purposes of this section is to justify our decision not to include several papers as baselines in \Cref{sec:experiments}. 
% Due to the privacy concerns, it is unclear which papers should be considered the prior state of the art for differentially private random forests. 
% In particular, we discovered at least one bug in all techniques for greedy random forest. 
% Nevertheless, we note that we beat the claimed performance. 
% Several papers only experiment with tiny datasets while those that use Adult typically report an accuracy of around 79-82\%. 

%The concerns listed below are often clear concerns that were easy to spot. This section is meant to justify our decision not to include several papers as baselines. However, we point out that we beat their claimed performance anyway so we still improve over these techniques if they addressed the privacy concerns.

\paragraph{Recurring problems.}
Several types of differential privacy violations appear repeatedly throughout this line of work. One common issue concerns the stopping criterion used during tree construction. In non-private decision trees, it is standard practice to stop splitting when all data points in a node share the same label. Under differential privacy, however, this condition cannot be checked deterministically, since doing so directly reveals information about the underlying data. Unfortunately, we found this issue to be widespread in the literature
\citep{PatilS14_random_forest, xin2019differentially,HouLMNCL19,GuanSSWD20, zhang2021random, LiuLWLLW23, vos2023differentially,Suihkonen2023DP_RF}.

Another recurring problem arises in methods that use the exponential mechanism to select split points for continuous features. The candidate split set itself must be chosen in a privacy-preserving manner. In particular, using the observed feature values directly as the only candidate split points reveals information about the exact locations of data points and therefore violates differential privacy. One possible remedy is to sample intervals between adjacent data points proportionally to their length and utility, and then select a point uniformly within the chosen interval, as proposed by \cite{kaplan2022differentially}. Nevertheless, several works instead construct a data-dependent discrete candidate set, which violates differential privacy~\citep{li2017random, GuanSSWD20}.

Finally, \cite{PatilS14_random_forest, li2017random} employ bootstrapping without properly accounting for its privacy cost, resulting in incorrect privacy guarantees.

%Some privacy violations appear multiple times in this line of work with the first one being non-private data pre-processing. Many approaches are based on the exponential mechanism selecting the best splitpoint, which requires a finite set of candidates.\cl{Not true, you can use the exponential mechanism over real values. You just loose worst-case error guarantees but it often works well in practice.} Several proposed DP random forest algorithms \citep{li2017random, GuanSSWD20, xin2019differentially, zhang2021random, LiuLWLLW23} discretize or handle the continuous attributes in a way that violates DP.\cl{Do they all claim to handle continuous values? We should be a bit more precise. If they claim to focus on discrete data and this step is only for the experiments it would be fine.} A crucial part in constructing decision trees is when to stop splitting nodes. Under DP, it is not feasible to access private data in the stopping condition. Unfortunately, we have found that this issue is wide-spread \citep{PatilS14_random_forest, HouLMNCL19, GuanSSWD20, xin2019differentially, zhang2021random, LiuLWLLW23, vos2023differentially}\cl{These all stop based on nodes with a single class? Or multiple different stopping conditions?}. \cite{li2017random, PatilS14_random_forest} use bootstrapping, without properly accounting for it, giving inaccurate privacy guarantees.

\paragraph{Specific issues in each paper.} Below we point to the location of specific issues in each paper.

\begin{enumerate}

\item \cite{PatilS14_random_forest} deterministically stop splitting when all samples in a node belong to the same class (Section IV, subsection D, point 2). This data-dependent control flow in the construction of the tree violates DP. Additionally, they do not properly account for bootstrapping which results in an inaccurate privacy guarantee (Algorithm 1, step 3). 

\item \cite{FletcherI15_greedy} calculate the local sensitivity based on noisy values instead of using data-independent global sensitivity (Algorithm 1, line 17). 

\item \cite{li2017random} use the midpoint between adjacent data points as candidates when splitting continuous features (Section III, Subsection D, Equation 5). This is heuristically better than using the data points themselves, but it still breaks DP. They also do not properly account for bootstrapping (Step 2a of the algorithm in Section III, Subsection B).

%\cite{li2017random} do not account for their non-private pre-processing (Section III, subsection D, equation 5 - they use the midpoint between features are split candidates) and also do not properly account for bootstrapping (Step 2a of the algorithm in section III).

\item \cite{HouLMNCL19} use a deterministic data-dependent stopping criterion in the splitting algorithm (Algorithm 1, Line 5). 
Quoting directly from the paper: "The condition that the decision tree stops growing
includes: 1) All samples in the node have the same classification result".
%The random forest proposed by \cite{HouLMNCL19} is not differentially private, as it uses a data-dependent deterministic stopping criterion in the splitting algorithm of decision trees (Algorithm 1, line 5 - text from paper "The condition that the decision tree stops growing
%includes:1) All samples in the node have the same classification result").

\item \cite{xin2019differentially} have data-dependent stopping conditions in their tree-building process. They do not split nodes with 10 or fewer data points (Algorithm 2, Line 1), and they deterministically stop splitting pure nodes (Algorithm 2, line 5).

\item \cite{GuanSSWD20} sort values of continuous attributes and use the average of buckets with 5 data points as candidates for the splitpoint (Algorithm 1, Line 10). The candidate set differs between neighboring datasets which breaks DP. Their implementation also deterministically does not split on pure nodes (Algorithm 1, line 4).

\item \cite{zhang2021random} deterministically stop splitting when all samples in a node belong to the same class (Algorithms 1, 2, and 3, see termination condition). % Note: Very hard to read, likely several other problems that are hard to pintpoint.

\item \cite{consul2021differentially} use data-dependent sensitivity in both the splits (Equation 6) and the leaf nodes in the case of regression trees (Section 3, paragraph \textit{Estimating leaf node parameters}).\footnote{Incidentally, we note that we could not use this algorithm as a baseline for computational reasons. They inspect all possible subsets when splitting categorical features. This is infeasible for features with many values. The native-country feature from the Adult dataset has 41 possible values so they should evaluate $2^{40}$ possible splits.}

%This algorithm is also slow for categorical features with many values. They split up the features in two subsets. They consider all subsets, so we cannot run it for e.g. adult $>2^{40}$.

\item \cite{LiuLWLLW23} deterministically stop splitting on pure nodes (Algorithm 2, Line 4).
In the second part of their algorithm they sample points with weights based on misclassification without accounting for it (Algorithm 3, Line 9).
This leads to a higher privacy loss for outlier data points.
%and reweight points based on misclassification without accounting for it (Algorithm 3, Line 9). % they also use pre-processing

\item \cite{vos2023differentially} (who focus on the setting of a single decision tree rather than building a forest) deterministically stop splitting at pure nodes (Algorithm 1, line 7). 

\item While we use \cite{Suihkonen2023DP_RF} as a baseline for greedy forests in our experiments, the code does contain some privacy violations including data-dependent stopping criteria for splitting (line 496 in the implementation), deterministically labeling pure leaf nodes (line 492 in the implementation) and sensitivity miscalibration (line 242 in the implementation).

\end{enumerate}

Finally, we note that \cite{fletcher2017differentially} originally contained a privacy error that was later corrected. The original algorithm used a variant of the exponential mechanism calibrated with \emph{smooth sensitivity}. Their random forest method was subsequently implemented in the popular open-source library DiffPrivLib~\citep{holohan2019diffprivlibibmdifferentialprivacy}.
During the development, Naoise Holohan identified a mistake in the privacy proof for the smooth sensitivity approach. As a result, the DiffPrivLib implementation was updated to use the Permute-and-Flip algorithm~\citep{McKennaS20PermuteFlip} instead, and Fletcher \& Islam later added an addendum to an updated arXiv version of the paper describing the issue.
We commend Fletcher \& Islam, and the DiffPrivLib developers, for publicly documenting and correcting the issue.

\section{Missing Proofs from Main Body}
\label{app:proofs}

Here we present proofs of technical results from the main body of the paper.

%\placeholder{TODO: Improve formatting here}

\paragraph{Proof of \Cref{lem:bin_search_sensitivity}.}

\begin{proof}
    The element $x$ is associated with a unique path $P_x$ from the root to a leaf. We want to show that at most $1 + \floor{\log_2(h)}$ nodes along $P_x$ are queried. 
    In Phase 1, the algorithm queries one node $u \in P_x$ at level $\text{mid} = \lfloor h/2 \rfloor$ (if it is not already marked).
    We show in the following that $P_x$ is queried in at most one recursive call of the algorithm, which implies the desired bound. 
    The mark of $u$ determines which recursive phase may subsequently query nodes on $P_x$:
    
    \begin{itemize}
        \item \textbf{Case 1: $u$ is marked \textit{Light}.} 
        All descendants of $u$ (the bottom half of $P_x$) are marked \textit{Light}. 
        Consequently, the recursive step containing $P_x$ is skipped in Phase 2 by the condition on Line \ref{line:condition-recursive-nonlight}. Any further queries to $P_x$ occur in the recursive call on the top part of the tree in Phase 3.
        \item \textbf{Case 2: $u$ is marked \textit{Heavy}.} 
        All ancestors of $u$ (the top half of $P_x$) are marked \textit{Heavy}.
        Because marked nodes are skipped by Line~\ref{line:condition-unmarked}, Phase 3 never queries any nodes in $P_x$. When $h > 2$, $P_x$ is part of exactly one recursive call in Phase 2 via the subtree $\T_v$ that contains $P_x$.  
    \end{itemize}  
    
    In both cases, $x$ is involved in exactly one query and at most one recursive call on a tree of height $\leq \lfloor h/2 \rfloor$. Let $Q(h)$ be the maximum number of queries for $x$ in a tree of height $h$. We have the recurrence: $Q(h) \leq 1 + Q(\lfloor h/2 \rfloor)$. With the base case $Q(1) = 1$, this yields $Q(h) \leq 1 + \floor{\log_2(h)}$ as desired.
\end{proof}

\paragraph{Proof of \Cref{lem:error_generic}.}

\begin{proof}
    It follows from the monotonicity of the tree structure. The statement clearly holds for any nodes that are queried by $\textsc{CheckThreshold}$. 
    We have to show that it also holds for nodes that are marked without being directly queried.
    Such nodes that are marked \textit{Light} are descendants of some node $u$ where we had $\textsc{CheckThreshold}_{\tau}(\#count(u)) = \bot$. These nodes therefore have a count $< \tau + \beta$ since $u$ cannot have a lower count than any of it's descendant. Similarly, nodes that are marked \textit{Heavy} are ancestors of some node where $\textsc{CheckThreshold}$ returned $\top$ and thus must have a count of $> \tau - \alpha$.
\end{proof}

\paragraph{Proof of \Cref{lem:algorithm-error}.}

\begin{proof}
    Since $0 \leq \tau - \Delta - 1$ %$0 < \tau - \Delta$
    all empty nodes are correctly classified as $\textit{Light}$. 
    Let $m$ denote the number of calls to $\textsc{CheckThreshold}$ with non-zero counts performed by \Cref{alg:heavy_hitters}. 
    It follows from \cref{lem:bin_search_sensitivity} that $m \leq n\ceil{\log_2(h+1)}$. %$m \leq n(1 + \floor{\log_2(h)}) = n\ceil{\log_2(h+1)}$. 
    Let $(Z_1,\dots,Z_m)$ denote the noise samples for these queries.
    By a standard Gaussian tail bound we have $\Pr[\vert Z_i \vert \geq t \cdot \sigma] \leq 2 \exp(- t^2/2)$ and thus $\Pr[\vert Z_i \vert \geq \alpha] \leq \beta / m$.
    By applying a union bound we thus have $\Pr[\max_{i \in [m]}\vert Z_i \vert \geq \alpha] \leq \beta$.
\end{proof}

\paragraph{Proof of \Cref{lem:theorem-simpler}.}

\begin{proof}
    Here we use the looser add-the-deltas approach discussed in \cite{WilkinsKZK24}. The Gaussian noise is calibrated to satisfy $(\varepsilon, \delta/2)$-DP for $\ell_2$-sensitivity $\sqrt{m} = \sqrt{k(1 + \floor{\log_2 h})}$ using~\citep[Appendix~A]{DworkR14}. 
    We have that $\Pr[\mathcal N(0, \sigma^2) \geq \Delta] \leq \delta/(2m)$ which implies that the maximum noise for $m$ queries is less than $\Delta$ with probability at least $1 - \delta/2$. 
\end{proof}

\section{Privacy Guarantees of \Cref{alg:sparse_adaptive_queries}}
\label{app:sparse_gaussian}

\Cref{alg:sparse_adaptive_queries} is an adaptive variant of the Gaussian Sparse Histogram Mechanism from \citep{WilkinsKZK24}. The privacy properties from their work apply to this adaptive setting when the number of queries affected by any data point is bounded.

\begin{lemma}
    \label{lem:privacy-adaptive}
    Let $(q_1, \dots,q_d)$ be (adaptively chosen) counting queries such that for any $x \in \mathcal{X}$, we have $q_i(x) = 1$ for at most $m$ values of $i \in [d]$ and $q_j(x) = 0$ for all other queries. Then \cref{alg:sparse_adaptive_queries} with parameters $\sigma$ and $\Delta$ satisfies $(\varepsilon, \delta)$-differential privacy where $\gamma(j) = (m - j)\log \Phi \left(\frac{\Delta}{\sigma}\right)$ and
    \begin{align*}
        &\delta \geq \max \bigg[ 1 - \Phi \left( \frac{\Delta}{\sigma} \right)^m, \\
        &\max_{j \in [m]} 1 - \Phi \left( \frac{\Delta}{\sigma} \right)^{m - j} + \Phi \left( \frac{\Delta}{\sigma} \right)^{m - j} \left[ \Phi\left(\frac{\sqrt{j}}{2\sigma} - \frac{(\varepsilon - \gamma(j))\sigma}{\sqrt{j}}\right) - e^{\varepsilon - \gamma(j)} \Phi\left(-\frac{\sqrt{j}}{2\sigma} - \frac{\left(\varepsilon - \gamma(j)\right)\sigma}{\sqrt{j}}\right) \right], \\
        &\max_{j \in [m]} \Phi\left(\frac{\sqrt{j}}{2\sigma} - \frac{\left(\varepsilon + \gamma(j)\right)\sigma}{\sqrt{j}}\right) - e^{\varepsilon + \gamma(j)} \Phi\left(-\frac{\sqrt{j}}{2\sigma} - \frac{(\varepsilon + \gamma(j))\sigma}{\sqrt{j}}\right) \bigg]\,.
    \end{align*}
\end{lemma}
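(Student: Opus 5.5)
The plan is to reduce the adaptive mechanism, for a fixed pair of neighbours $D$ and $D' = D \cup \{x\}$, to a worst-case non-adaptive product of simple one-dimensional mechanisms, and then evaluate the hockey-stick divergence of that product explicitly. Fix the interaction transcript $a_1, a_2, \dots \in \{\top,\bot\}$. Query $q_{i}$ is chosen as a function of $a_1,\dots,a_{i-1}$ only. Queries with $q_i(x)=0$ give identical output distributions under $D$ and $D'$ given the past, so they contribute a likelihood ratio of exactly $1$. Hence the privacy loss is a sum over at most $m$ affected rounds, and we may treat the mechanism as an adaptive composition of at most $m$ per-round mechanisms.

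\emph{Step 1: classify affected rounds.} Let $c = q_i(D)$, so that $q_i(D') = c+1$. Three cases arise.
\begin{itemize}
\item If $c \geq \tau-\Delta$, both counts pass the filter. The round is a post-processing of a Gaussian mechanism with sensitivity $1$ and noise $\sigma$.
\item If $c+1 \leq \tau-\Delta-1$, both outputs are deterministically $\bot$, so there is no loss.
\item If $c = \tau-\Delta-1$ (the boundary case), the output under $D$ is $\bot$ with probability $1$. Under $D'$ it is $\top$ with probability $\Pr[Z > \Delta] = 1-\Phi(\Delta/\sigma)$.
\end{itemize}
In the boundary case the log-ratio on output $\bot$ equals $\log\Phi(\Delta/\sigma)$, and the output $\top$ has infinite loss. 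Each affected round is therefore dominated, in the sense of dominating pairs of Zhu--Dong--Wang, either by the pair $(\N(0,\sigma^2),\N(1,\sigma^2))$ or by this ``boundary'' randomized-response-type pair.

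\emph{Step 2: adaptive composition.} I will invoke the fact that dominating pairs compose under adaptive composition: the product of per-round dominating pairs dominates the composed mechanism, uniformly over adaptive choices. The adversary may choose, round by round, which type each affected query has. I therefore take a maximum over the number $j\in\{0,\dots,m\}$ of Gaussian-type rounds, with the remaining $m-j$ rounds of boundary type. Extra boundary rounds only help the adversary, as does using all $m$ rounds, so I take exactly $m$. Justifying that this adaptively chosen mixture of types is still dominated by the worst fixed $j$ is the main obstacle. I would handle it by noting that the set of possible per-round pairs is finite (two types). A convexity / tradeoff-function argument then shows the adaptive mixture is dominated by one of the pure products. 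Failing that, one can bound the adaptive composition directly through the privacy loss random variable, since the boundary rounds contribute deterministic loss on $\bot$.

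\emph{Step 3: compute $\delta$ for the product pair with fixed $j$.} In the product, the $j$ Gaussian rounds combine into a Gaussian mechanism with $\ell_2$-sensitivity $\sqrt{j}$. On the event that all boundary rounds output $\bot$, which has probability $\Phi(\Delta/\sigma)^{m-j}$ under $D'$, the privacy loss equals the Gaussian loss shifted by $\gamma(j)=(m-j)\log\Phi(\Delta/\sigma)$. On the complement the loss is $+\infty$.
\begin{itemize}
\item For the direction $D'$ versus $D$, the hockey-stick divergence is $1-\Phi(\Delta/\sigma)^{m-j}$ plus $\Phi(\Delta/\sigma)^{m-j}$ times the Gaussian $\delta$ at the effective level $\varepsilon-\gamma(j)$. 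The Gaussian $\delta$ is the formula of \Cref{lem:gaussian-mech} with $\sigma$ replaced by $\sigma/\sqrt{j}$. This gives the second term.
\item For the reverse direction, the infinite-loss event has probability $0$ under $D$. The loss shift appears with the opposite sign, giving the Gaussian $\delta$ at level $\varepsilon+\gamma(j)$, which is the third term.
\item The case $j=0$ gives the first term $1-\Phi(\Delta/\sigma)^m$.
\end{itemize}
Taking the maximum over $j$ and over both directions yields the stated condition on $\delta$.
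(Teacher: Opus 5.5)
\textbf{The gap is in Step 2, and it cannot be closed.} Your Steps 1 and 3 are correct for a \emph{fixed} pattern. If it is decided in advance which $j$ affected queries are interior (above the filter under both datasets) and which $m-j$ are boundary, the product pair gives exactly the three stated terms. This is the non-adaptive analysis of Wilkins et al.\ that the paper cites. (Writing $c=\tau-\Delta-1$ presumes integrality; in general the boundary case is $c\le\tau-\Delta-1<c+1$, and its pair is dominated by the integral one, so this is harmless.)

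Step 2 is a different matter. An adaptively chosen mixture of the two types is in general \emph{not} dominated by the worst fixed-$j$ product. Hence neither a convexity/trade-off-function argument nor a direct privacy-loss computation can give the stated $\delta$. Convexity handles type choices driven by data-independent coins. Here the analyst picks the type of the next query containing $x$ after seeing the previous answer, i.e., as a function of the realized loss. The one-round hockey-stick curves of the two pairs cross: the boundary curve stays at $1-\Phi(\Delta/\sigma)$ for large arguments, where the Gaussian curve tends to $0$, but lies below the Gaussian curve for very negative arguments. So switching type depending on the answer can beat every fixed choice.

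Concretely, write $M$ for \Cref{alg:sparse_adaptive_queries} and take $\sigma=1$, $\Phi(\Delta)=0.7$, $\tau=1+\Delta$ (a count passes the filter iff it is $\ge 1$), $m=2$, and $D'=D\cup\{x\}$. First ask a query containing $x$ with $q_1(D)=1$. On $\top$, ask one containing $x$ with count $2$ in $D$; on $\bot$, ask one containing $x$ with count $0$ in $D$. On every transcript, $x$ lies in exactly two queries. For $S=\{(\top,\top),(\bot,\top)\}$,
\[
\Pr[M(D')\in S]\approx 0.6828\cdot 0.9300+0.3172\cdot 0.3\approx 0.7302,\qquad \Pr[M(D)\in S]=0.3\cdot 0.6828\approx 0.2048 .
\]
So at $\varepsilon=0.05$ the hockey-stick divergence is at least $0.7302-e^{0.05}\cdot 0.2048\approx 0.515$. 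The stated maximum is only $1-0.7^2=0.51$: the $j=2$ term is $\approx 0.508$, and the $j=1$ terms are $\approx 0.484$ and $\approx 0.479$.

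Thus, for adaptively chosen queries under the pathwise reading of the hypothesis (the reading \Cref{thm:build-forest} needs), the claimed $\delta$ is not achievable by any argument. The paper's proof does not fill this either. It cites Wilkins et al.\ and asserts that, because the noise samples are independent, the non-adaptive analysis carries over. That assertion is exactly your Step 2. Independent noise does not prevent an output-dependent choice between boundary and interior queries, and \Cref{alg:heavy_hitters} does make such a choice: after \textit{Heavy}, the next query on $P_x$ is a descendant; after \textit{Light}, an ancestor.

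What your argument does establish is the formula for non-adaptive (more generally, type-oblivious) sequences. For adaptive sequences, a sound route is the add-the-deltas hybrid behind \Cref{lem:privacy_double_threshold}:
\begin{itemize}
\item Compare $M(D)$ with the mechanism that runs on $q(D)$ but does not filter boundary queries.
\item Compare $M(D')$ with the mechanism that forces $\bot$ on boundary queries.
\item Coupling the noise shows each hybrid is within total variation $1-\Phi(\Delta/\sigma)^m$ of the original.
\item Each hybrid differs from the other dataset's mechanism only by an adaptive composition of at most $m$ sensitivity-one Gaussian rounds. The remaining rounds can be simulated by the analyst, so this part is $\sqrt m/\sigma$-GDP.
\end{itemize}
This yields $\delta=\delta_{\text{Gauss}}+1-\Phi(\Delta/\sigma)^m$, which is weaker than the stated maximum.
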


\begin{proof}
    It follows from the proof of \cite[Theorem~5.4]{WilkinsKZK24}. 
    \citet{WilkinsKZK24} study an algorithm in the non-adaptive setting, but the algorithm does not rely on any correlation between queries so the same analysis can be applied for adaptive queries. 
    The privacy guarantees follow from standard composition properties of the Gaussian mechanism combined with a bound of the probability of infinite privacy loss events for queries that have a value $< \tau - \Delta - 1$ for one dataset and $\geq \tau - \Delta - 1$ for the other.
    The proof implicitly requires that the queries are monotonic. 
    That is, all $d$ queries should be either non-increasing or non-decreasing between neighboring datasets.
    This assumption clearly holds for our setting since we, like \citep{WilkinsKZK24}, study counting queries.
\end{proof}

\section{Deterministic Error Bound for Heavy Hitter Algorithm}
\label{app:double_threshold}

Here we present a variant of our heavy hitters algorithm with deterministic error bound. We modify \Cref{alg:sparse_adaptive_queries} to always output $\top$ for queries that are sufficiently above the threshold.
We use the add-the-deltas technique discussed in \cite{WilkinsKZK24} to analyze the privacy guarantees.

\begin{algorithm}[h]
    \caption{GaussianSparseTwosidedThreshold (GSTT)}\label{alg:sparse_adaptive_queries_twosided}
    \begin{algorithmic}[1]
        \REQUIRE Parameters $\sigma$, $\tau$, and $\Delta$. 
        \REQUIRE Dataset $D$, (adaptive) counting queries $q_1, q_2, q_3, \dots$
        \FORALL{$i \in \{1,2,3,\dots\}$}
        \STATE Sample $Z_i \sim \N\left(0, \sigma^2\right)$.
        \IF{$q_i(D) \geq \tau + \Delta + 1$ or ($q_i(D) > \tau - \Delta - 1$ and $q_i(D) + Z_i > \tau$)}
        \STATE \textbf{Release} $\top$. 
        \ELSE
        \STATE \textbf{Release} $\bot$.
        \ENDIF
        \ENDFOR
    \end{algorithmic}
\end{algorithm}

\begin{lemma}
    \label{lem:privacy_double_threshold}
    Let $(q_1, \dots,q_d)$ be (adaptively chosen) counting queries such that for any $x \in \mathcal{X}$, we have $q_i(x) = 1$ for at most $m$ values of $i \in [d]$ and $q_j(x) = 0$ for all other queries. Then \cref{alg:sparse_adaptive_queries_twosided} with parameters $\sigma$ and $\Delta$ satisfies $(\varepsilon, \delta_{\text{Gauss}} + \delta_{\text{inf}})$-differential privacy where
    \begin{align*}
        \delta_{\text{Gauss}} &=  \Phi \left( \frac{\sqrt{m}}{2\sigma} - \frac{\varepsilon \sigma}{\sqrt{m}} \right) - e^\varepsilon \Phi \left( - \frac{\sqrt{m}}{2\sigma} - \frac{\varepsilon \sigma}{\sqrt{m}} \right), \quad  \text{and} \quad
        \delta_{\text{inf}} =  1 - \Phi \left( \frac{\Delta}{\sigma} \right)^{m} \,.
    \end{align*}
\end{lemma}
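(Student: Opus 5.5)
We would prove \Cref{lem:privacy_double_threshold} with the add-the-deltas argument: couple \Cref{alg:sparse_adaptive_queries_twosided} on neighbors $D \sim D'$ with an idealized Gaussian mechanism, and charge every discrepancy to a ``bad'' event of probability at most $\delta_{\text{inf}}$. Fix $D' = D \cup \{x\}$ (the removal case is symmetric). Because the queries are counting queries, each $q_i$ satisfies $q_i(D') - q_i(D) \in \{0,1\}$, and at most $m$ indices $i$ have difference $1$. For an index with difference $0$, the $i$-th output has the same conditional distribution under $D$ and $D'$ given the past outputs, so it contributes no privacy loss. Adaptivity is handled by working with the sequence of conditional output distributions, as in standard adaptive composition.

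Define the idealized mechanism $\mathcal{M}^\ast$ that, for every query, releases $\top$ iff $q_i + Z_i > \tau$, with no truncation on either side. It is a post-processing of an adaptive Gaussian mechanism with $\ell_2$-sensitivity $\sqrt{m}$ and per-coordinate noise $\sigma$. By \Cref{lem:gaussian-mech}, rescaled to sensitivity $\sqrt m$ with adaptive composition via zCDP or the exact privacy-loss distribution, $\mathcal{M}^\ast$ is $(\varepsilon,\delta_{\text{Gauss}})$-DP. The two parameters are exactly $\sqrt m/(2\sigma)$ and $\varepsilon\sigma/\sqrt m$. Adaptivity does not hurt, because the worst case remains the pair of Gaussians whose means differ by $\sqrt m$.

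Next, couple the real mechanism with $\mathcal{M}^\ast$ using the same $Z_i$, on both $D$ and $D'$. The outputs differ only if some query hits a truncation branch that disagrees with the noisy comparison. This happens in two ways. First, $q_i \le \tau-\Delta-1$ while $q_i+Z_i>\tau$, which requires $Z_i > \Delta+1$. Second, $q_i \ge \tau+\Delta+1$ while $q_i+Z_i\le\tau$, which requires $Z_i < -\Delta-1$. So on each dataset the outputs agree with $\mathcal{M}^\ast$ whenever the relevant tails do not occur. The main obstacle is that there are unboundedly many queries, so we cannot take a union bound over all of them. The fix is to apply the coupling only to the at most $m$ queries that differ between $D$ and $D'$. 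On every other query, the real mechanism's output distribution is identical under $D$ and $D'$, since the truncation rule is a deterministic function of the same $q_i$ and the same noise. We therefore define a hybrid mechanism that uses the real rule on unaffected queries and the untruncated rule on affected ones. This hybrid is $(\varepsilon,\delta_{\text{Gauss}})$-DP by the same Gaussian argument: unaffected coordinates carry zero privacy loss under conditioning.

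For an affected query, the real and hybrid outputs coincide unless $|Z_i|$ exceeds $\Delta$ in the relevant direction. Concretely, only one of the two truncation boundaries can be crossed between neighbors, namely $\tau-\Delta-1$ or $\tau+\Delta+1$. That gives a one-sided event of probability at most $1-\Phi(\Delta/\sigma)$, and the margin of $+1$ absorbs the unit shift. The noises are independent, so the probability that none of the at most $m$ affected queries triggers is at least $\Phi(\Delta/\sigma)^m$, and the failure probability is at most $\delta_{\text{inf}}$. The most delicate bookkeeping is that the set of affected queries is itself adaptive. We handle this by fixing a sequence of outputs and noting that the $j$-th affected query's noise is fresh and independent of the past.

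Finally, for any event $S$ we bound $\Pr[\mathcal{M}(D)\in S] \le \Pr[\mathcal{H}(D)\in S]+\delta_{\text{inf}} \le e^\varepsilon\Pr[\mathcal{H}(D')\in S]+\delta_{\text{Gauss}}+\delta_{\text{inf}}$, where $\mathcal{H}$ is the hybrid. We then argue that $\Pr[\mathcal{H}(D')\in S]\le\Pr[\mathcal{M}(D')\in S]+(\text{same bad event})$ can be avoided by coupling only on the $D$ side. If that fails, we would choose the direction of truncation per side so that the symmetric bound still sums to $\delta_{\text{Gauss}}+\delta_{\text{inf}}$, following \cite{WilkinsKZK24}.
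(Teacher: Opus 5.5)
\textbf{There is a genuine gap in your last step.} Your first two links are fine. Coupling $\mathcal{M}(D)$ with your hybrid $\mathcal{H}(D)$ under shared noise costs at most $1-\Phi((\Delta+1)/\sigma)^m\le\delta_{\text{inf}}$. The comparison of $\mathcal{H}(D)$ with $\mathcal{H}(D')$ is an adaptive Gaussian comparison with at most $m$ unit shifts, which gives $\delta_{\text{Gauss}}$.

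The problem is the link from $\mathcal{H}(D')$ back to $\mathcal{M}(D')$. You leave it open, and it cannot be ``avoided by coupling only on the $D$ side.'' Your $\mathcal{H}$ switches to the untruncated rule on every affected query for \emph{both} datasets. So $\mathcal{H}(D')$ and $\mathcal{M}(D')$ genuinely differ whenever an affected query has $q_i(D')$ in a deterministic region. For example, take a single query with $q_i(D)\in[\tau+\Delta,\tau+\Delta+1)$ and $q_i(D')=q_i(D)+1$, and let $E=\{\bot\}$. Then $\mathcal{M}(D')$ outputs $\top$ with probability one, while $\mathcal{H}(D')$ outputs $\bot$ with positive probability. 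Hence $\Pr[\mathcal{M}(D)\in E]\le e^\varepsilon\Pr[\mathcal{H}(D')\in E]+\delta_{\text{Gauss}}$ does not bound $\Pr[\mathcal{M}(D)\in E]$ in terms of $\Pr[\mathcal{M}(D')\in E]=0$ without an extra term. Any second coupling on the $D'$ side puts its failure probability behind the factor $e^\varepsilon$. Your route therefore only yields something like $\delta_{\text{Gauss}}+(1+e^\varepsilon)\delta_{\text{inf}}$, not the claimed $\delta_{\text{Gauss}}+\delta_{\text{inf}}$. It also charges affected queries that are deterministic on both datasets, which in reality cost nothing.

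\textbf{The missing idea is a one-sided intermediate, so that the Gaussian step lands exactly on $\mathcal{M}(D')$.} The paper runs the \emph{real} truncated mechanism on fictitious counts $\hat q$. It sets $\hat q_i=q_i(D)$ when both $q_i(D)$ and $q_i(D')$ lie in the noisy band $(\tau-\Delta-1,\tau+\Delta+1)$, and $\hat q_i=q_i(D')$ otherwise. Then $\hat q$ has exactly the deterministic pattern of $D'$, which gives two clean comparisons.
\begin{itemize}
\item $\mathcal{M}(\hat q)$ versus $\mathcal{M}(D')$ is a pure Gaussian comparison on at most $m$ noisy coordinates, with no truncation cost.
\item $\mathcal{M}(D)$ and $\mathcal{M}(\hat q)$ differ only on queries where exactly one of $q_i(D),q_i(D')$ is deterministic. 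There the noisy count sits within one unit of the deterministic region. Under shared noise, a disagreement requires $|Z_i|\ge\Delta$ with a sign fixed by the query; this is why $\Delta$, not $\Delta+1$, appears. The total variation is therefore at most $\delta_{\text{inf}}$.
\end{itemize}
Chaining gives $\Pr[\mathcal{M}(D)\in E]\le\Pr[\mathcal{M}(\hat q)\in E]+\delta_{\text{inf}}\le e^\varepsilon\Pr[\mathcal{M}(D')\in E]+\delta_{\text{Gauss}}+\delta_{\text{inf}}$. All of the coupling cost sits outside the $e^\varepsilon$ factor. Your fallback of ``choosing the direction of truncation per side'' gestures at this, but it is precisely the step that makes the stated bound hold, and it has to be carried out.
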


\begin{proof}
    By \Cref{def:differential-privacy}, we have to show that for any pair of neighboring datasets $D$ and $D'$ and all sets of outputs $Z$ we have
    \[
        \Pr[\textsc{GSTT}(q_1(D),\dots,q_d(D)) \in Z] \leq e^\varepsilon \Pr[\textsc{GSTT}(q_1(D'),\dots,q_d(D')) \in Z] + \delta_{\text{Gauss}} + \delta_{\text{inf}} \,.
    \]
    
    We prove that the inequality above by introducing a series of values $(\hat q_1,\dots,\hat q_d)$ that "lies between" $(q_1(D),\dots,q_d(D))$ and $(q_1(D'),\dots,q_d(D'))$.
    Consider the set of all possible counting queries $\mathcal{Q}$.
    Let $S \subseteq \mathcal{Q}$ denote the set of counting queries that are \emph{not} classified deterministically for $D$. That is, $q(D) \in [\tau - \Delta - 1, \tau + \Delta + 1]$ if and only if $q \in S$ and define similarly $S'$ and $\hat S$.
    We construct the intermediate values such that $S' = \hat S$. 
    For all $q_i \in (S \cap S')$ we set $\hat q_i = q_i(D)$ and for all $q_i \notin (S \cap S')$ we set $\hat q_i = q_i(D')$. 
    In other words, we construct $(\hat{q}_1,\dots,\hat{q}_d)$ such that 
    (1) $(q_1(D'),\dots,q_d(D'))$ and $(\hat{q}_1,\dots,\hat{q}_d)$ only differ in entries that are in both $S$ and $S'$
    (2) $(q_1(D),\dots,q_d(D))$ and $(\hat{q}_1,\dots,\hat{q}_d)$ only differ in entries that are in only one of $S$ and $S'$.
    This allows us to analyze each case separately to derive our values for $\delta_{\text{Gauss}}$ and $\delta_{\text{inf}}$.
    Note that the values $(\hat{q}_1,\dots,\hat{q}_d)$ might not correspond to evaluating counting queries on any existing dataset $\hat D$. 
    We just use the values only for the proof and utilize the fact that \Cref{alg:sparse_adaptive_queries_twosided} only access the count.
    Notice that we cannot have any query where $q_i(D) < \tau - \Delta - 1$ and $q_i(D') > \tau + \Delta + 1$ or vice versa. 
    As such, whenever $q_i \notin (S \cup S')$ the deterministic output of \Cref{alg:sparse_adaptive_queries_twosided} is the same for both datasets. Therefore we only have to consider queries that are deterministic for exactly one or for none of the datasets. The intermediate values allow us to analyze the two cases separately.

    First we show that
    \[
    \Pr[\textsc{GSTT}(q_1(D),\dots,q_d(D)) \in Z] \leq \Pr[\textsc{GSTT}(\hat{q}_1,\dots,\hat{q}_d) \in Z] + \delta_{\text{inf}} \,.
    \]
    Since $q_i(D) \neq \hat q_i$ for at most $m$ queries any change in output distribution is due to those queries. For each such query, one of the outputs is deterministic. Without loss of generality, assume that $q_i(D) < \tau - \Delta - 1$ such that the output is always $\bot$. The probability that the output is $\bot$ with $\hat q_i$ is at least $\Phi \left( \frac{\Delta}{\sigma} \right)$, since $Z_i$ must be larger than $\Delta$ to change the output to $\top$.
    The probability that all the $m$ queries are identical is therefore at least $\Phi \left( \frac{\Delta}{\sigma} \right)^m$ which gives the bound for $\delta_{\text{inf}}$.

    Next consider the case of $(\hat{q}_1,\dots,\hat{q}_d)$ and $(q_1(D'),\dots,q_d(D'))$. These inputs only differ for queries in $[\tau -\Delta - 1, \tau + \Delta + 1]$. There are at most $m$ differing queries, so the $\ell_2$-distance is bound by $\sqrt{m}$. The algorithm can be seen as running the Gaussian mechanism on values in $[\tau -\Delta - 1, \tau + \Delta + 1]$ and post-processing the values. The tight bound for the Gaussian mechanism (\Cref{lem:gaussian-mech}) gives us 
    \[
    \Pr[\textsc{GSTT}(\hat{q}_1,\dots,\hat{q}_d) \in Z] \leq e^\varepsilon\Pr[\textsc{GSTT}(q_1(D'),\dots,q_d(D')) \in Z] + \delta_{\text{Gauss}} \,.
    \]
    Note that we technically cannot apply \Cref{lem:gaussian-mech} directly, since the Gaussian mechanism is typically analyzed in the static setting, while we allow adaptively chosen queries.
    A more rigorous proof would fix the neighboring datasets and study the input as a series of adaptive queries rather than a fixed sequence. 
    Under Gaussian Differential Privacy~\citep{Dong22GDP} we (informally) allow an analyst to make $m$ queries that are each $1/\sigma$-GDP and an unlimited number of $0$-GDP queries (those where the counts are the same in our original setup). 
    Using the adaptive composition property and conversion from $\mu$-GDP to $(\varepsilon,\delta)$-DP recovers the inequality above. We leave the details as an exercise for the reader.

    Combining the two inequalities above give us the desired bound 
    \begin{align*}
        \Pr[\textsc{GSTT}(q_1(D),\dots,q_d(D)) \in Z] &\leq \Pr[\textsc{GSTT}(\hat{q}_1,\dots,\hat{q}_d) \in Z] + \delta_{\text{inf}} \\ 
        &\leq e^\varepsilon \Pr[\textsc{GSTT}(q_1(D'),\dots,q_d(D')) \in Z] + \delta_{\text{Gauss}}  + \delta_{\text{inf}} \,.
    \end{align*}
\end{proof}

Similarly to \Cref{lem:theorem-simpler}, we give a simple upper bound for the values of $\sigma$ and $\Delta$.

\begin{corollary}
    \label{cor:deterministic-simpler}
    When $\varepsilon < 1$ then the parameters 
    \[
        \sigma = \frac{\sqrt{2m\ln(2.5/\delta)}}{\varepsilon} \quad \quad \quad
        \Delta = \sigma \cdot \sqrt{2\ln(2 m/\delta)} = O\left(\frac{\sqrt{m}\log(1/\delta)}{\varepsilon}\right)
    \]
    satisfy the condition in \Cref{lem:privacy_double_threshold} for $(\varepsilon, \delta)$-DP. In the last equality we assume that $m \ll 1/\delta$.
\end{corollary}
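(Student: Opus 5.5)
We split the target $\delta$ evenly, $\delta_{\text{Gauss}} \leq \delta/2$ and $\delta_{\text{inf}} \leq \delta/2$, so that \Cref{lem:privacy_double_threshold} gives $(\varepsilon, \delta)$-DP. This is the same add-the-deltas approach as in the proof of \Cref{lem:theorem-simpler}. Since \Cref{lem:privacy_double_threshold} already separates the two contributions, we only need to check each one against its chosen $\sigma$ or $\Delta$.

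\textbf{Gaussian part.} The expression $\delta_{\text{Gauss}}$ is exactly the tight $(\varepsilon,\delta)$ curve of the Gaussian mechanism (\Cref{lem:gaussian-mech}) with sensitivity $\sqrt{m}$ and noise standard deviation $\sigma$. The classical calibration of \citep[Appendix~A]{DworkR14} says that for $\varepsilon < 1$, noise with standard deviation $\sqrt{2\ln(1.25/\delta')}\cdot \sqrt{m}/\varepsilon$ yields $(\varepsilon,\delta')$-DP. We take $\delta' = \delta/2$, so $\ln(1.25/\delta') = \ln(2.5/\delta)$, which matches the stated $\sigma$. The tight curve is dominated by this classical bound, so $\delta_{\text{Gauss}} \leq \delta/2$. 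This is the only place the assumption $\varepsilon<1$ is used.

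\textbf{Infinite-loss part.} We need $1 - \Phi(\Delta/\sigma)^m \leq \delta/2$. By Bernoulli's inequality, $\Phi(t)^m \geq 1 - m(1-\Phi(t))$, so it suffices to show $1-\Phi(\Delta/\sigma) \leq \delta/(2m)$. With $\Delta/\sigma = \sqrt{2\ln(2m/\delta)}$, the Gaussian tail bound $1-\Phi(t) \leq e^{-t^2/2}$ gives exactly $\delta/(2m)$. The cruder form $1-\Phi(t)\le e^{-t^2/2}$ holds for $t\ge 0$: the standard Chernoff bound gives it directly, and it also follows from $\tfrac12 e^{-t^2/2}$ bounds. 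Equivalently, by a union bound, all $m$ noise samples stay below $\Delta$ except with probability $\delta/2$.

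\textbf{Asymptotic form.} Substituting $\sigma$ into $\Delta$ gives $\Delta = \frac{\sqrt{m}}{\varepsilon}\cdot 2\sqrt{\ln(2.5/\delta)\ln(2m/\delta)}$. When $m \ll 1/\delta$, $\ln(2m/\delta) = O(\log(1/\delta))$, so the product of square roots is $O(\log(1/\delta))$ and $\Delta = O(\sqrt{m}\log(1/\delta)/\varepsilon)$.

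The main point requiring care is invoking the \citep{DworkR14} calibration, whose standard proof covers $\varepsilon<1$, and confirming that it upper-bounds the tight $\delta_{\text{Gauss}}$ expression. The remaining steps are routine.
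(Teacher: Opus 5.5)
Your proposal is correct and follows essentially the same route as the paper. The paper proves this corollary with the add-the-deltas argument from the proof of \Cref{lem:theorem-simpler}: it calibrates the Gaussian to $(\varepsilon,\delta/2)$-DP with sensitivity $\sqrt{m}$ via \citep[Appendix~A]{DworkR14}, and bounds $\delta_{\text{inf}}$ using $\Pr[\mathcal{N}(0,\sigma^2)\ge\Delta]\le\delta/(2m)$ together with a union bound over the $m$ queries, which your Bernoulli-inequality step makes explicit.
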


Since \Cref{alg:sparse_adaptive_queries_twosided} clearly can only misclassify queries in the interval $[\tau - \Delta - 1, \tau + \Delta + 1]$, it follows that we achieve a deterministic error bound for the heavy hitters problem.

\begin{corollary}
    \label{cor:deterministic-tree-error}
    Let $\mathcal{A}$ denote an algorithm that takes as input $k$ trees with maximum height $h$. The algorithm runs \Cref{alg:heavy_hitters} for each tree where $\textsc{CheckThreshold}$ is instantiated as \Cref{alg:sparse_adaptive_queries_twosided} with parameters $\sigma$, $\tau$, and $\Delta$. Then $\mathcal{A}$ satisfies $(\varepsilon,\delta)$-DP with values specified by \Cref{lem:privacy_double_threshold}. 
    Let $\alpha \in \mathbb{R}$ denote the smallest value such that all nodes with at least count $\tau + \alpha$ are marked as \textit{Heavy} and all nodes with at most count $\tau - \alpha$ are marked \textit{Light}.
    Then it holds with probability 1 that $\alpha \leq 1 + \Delta$.
    Furthermore, when $\tau \geq 1 + \Delta$ we can implement the algorithm to run in time and space $O(knh)$. \looseness=-1
\end{corollary}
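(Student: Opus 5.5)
The corollary has three parts: privacy, a deterministic error bound, and an efficient implementation. I will prove them in that order, mirroring the proof of \Cref{thm:build-forest} with \Cref{alg:sparse_adaptive_queries_twosided} in place of \Cref{alg:sparse_adaptive_queries}.

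\textbf{Privacy.} View $\mathcal{A}$ as one adaptive sequence of counting queries issued by running \Cref{alg:heavy_hitters} on all $k$ trees, each query answered by \Cref{alg:sparse_adaptive_queries_twosided}. By \Cref{lem:bin_search_sensitivity}, within a single tree any $x \in \mathcal{X}$ lies in at most $1+\floor{\log_2 h}$ queried nodes. Summing over the $k$ trees, each $x$ has $q_i(x)=1$ for at most $m = k(1+\floor{\log_2 h})$ queries and $q_i(x)=0$ for all others. This is exactly the hypothesis of \Cref{lem:privacy_double_threshold}, so $\mathcal{A}$ satisfies $(\varepsilon, \delta_{\text{Gauss}}+\delta_{\text{inf}})$-DP. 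The marking and propagation steps are post-processing of the released bits. The only subtlety is that which nodes get queried depends on earlier answers, but \Cref{lem:privacy_double_threshold} already allows adaptively chosen queries, and \Cref{lem:bin_search_sensitivity} bounds the count for every execution path.

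\textbf{Deterministic error.} Examine the two branches of \Cref{alg:sparse_adaptive_queries_twosided} for each query:
\begin{itemize}
\item if $q_i(D) \geq \tau+\Delta+1$, the first disjunct fires and the output is $\top$ regardless of $Z_i$;
\item if $q_i(D) \leq \tau-\Delta-1$, the guard $q_i(D) > \tau-\Delta-1$ fails and the output is $\bot$.
\end{itemize}
So every call with count $\geq \tau+(1+\Delta)$ returns $\top$ and every call with count $\leq \tau-(1+\Delta)$ returns $\bot$, with probability one. Applying \Cref{lem:error_generic} with both of its error parameters set to $1+\Delta$ extends this to every node of every tree, including nodes never queried directly. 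Hence $\alpha \leq 1+\Delta$ surely.

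\textbf{Efficiency.} When $\tau \geq 1+\Delta$, we have $\tau+\Delta+1 > 0$, so an empty node (count $0 \le \tau - \Delta - 1$) never triggers the first disjunct and is always answered $\bot$ deterministically. Consequently, entire subtrees of empty nodes can be marked \emph{Light} without being inspected, and the output distribution is unchanged. Each data point lies in $h$ nodes per tree, so there are at most $nh$ non-empty nodes per tree and $knh$ in total. Using the same implementation as for \Cref{thm:build-forest} (\Cref{app:implementation}), whose cost is linear in the number of non-empty nodes, gives time and space $O(knh)$.

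I expect the main obstacle to be the privacy step, specifically justifying that the $m$-query bound holds under adaptivity across the whole forest. I would handle it by noting that the per-tree bound of \Cref{lem:bin_search_sensitivity} holds for every realization of the algorithm's randomness, so the joint bound $m$ holds pathwise. That is the form \Cref{lem:privacy_double_threshold} requires.
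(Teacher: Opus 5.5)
Your proposal is correct and takes essentially the same approach as the paper: privacy comes from \Cref{lem:privacy_double_threshold} with $m = k(1+\floor{\log_2 h})$ counted via \Cref{lem:bin_search_sensitivity}, the sure bound $\alpha \le 1+\Delta$ comes from the two deterministic branches of \Cref{alg:sparse_adaptive_queries_twosided} lifted to all nodes by \Cref{lem:error_generic}, and the $O(knh)$ implementation is obtained as in \Cref{thm:build-forest}. The paper justifies the corollary in a single sentence, and your write-up, including the pathwise treatment of adaptivity, fills in that argument along the intended lines.
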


We note that it is possible to get tighter results using a case by case analysis similar to \Cref{lem:privacy-adaptive}.

\section{Proof of Lower Bound~(\Cref{lem:informal-lower-bound})}
\label{app:lower_bound}

Here we discuss the lower bound for deep trees presented in \Cref{lem:informal-lower-bound}.
We use the lower bound of \cite{BalcerV19} which we restate below. Note that Balcer and Vadhan use a different definition of differential privacy.
They study the problem under the replacement definition for neighboring datasets where the size of the dataset is known (bounded DP) while we use the add/remove (unbounded DP) definition.
However, we can still use their result since any DP mechanism under add/remove satisfies DP with slightly worse parameters under bounded DP by a simple group privacy argument.

\begin{lemma}
    \label{lem:add-remove-to-replacement}
    A heavy hitters algorithm $\mathcal{M}$ that satisfies $(\varepsilon,\delta)$-DP
    under the add/remove neighboring relation satisfies $(2\varepsilon,(1 + e^\varepsilon)\delta)$-DP under the replacement definition.
\end{lemma}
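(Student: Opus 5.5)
Replacement neighbors differ by swapping one record: $D$ and $D'$ have the same size, and $D'$ is obtained from $D$ by replacing some $x$ with $x'$. The plan is to go through the intermediate dataset $D'' = D \setminus \{x\}$, which is an add/remove neighbor of both $D$ and $D'$. The mechanism $\mathcal{M}$ is defined on datasets of all sizes, since it is add/remove DP. So we may simply evaluate it on $D''$, even though $D''$ has a different size from $D$ and $D'$.

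Fix any measurable set of outputs $Z$. First apply the add/remove guarantee to the pair $(D, D'')$:
\[
\Pr[\mathcal{M}(D) \in Z] \le e^{\varepsilon}\Pr[\mathcal{M}(D'') \in Z] + \delta .
\]
Then apply it to the pair $(D'', D')$:
\[
\Pr[\mathcal{M}(D'') \in Z] \le e^{\varepsilon}\Pr[\mathcal{M}(D') \in Z] + \delta .
\]
Substituting the second inequality into the first gives
\[
\Pr[\mathcal{M}(D) \in Z] \le e^{2\varepsilon}\Pr[\mathcal{M}(D') \in Z] + e^{\varepsilon}\delta + \delta ,
\]
which is exactly $(2\varepsilon,(1+e^\varepsilon)\delta)$-DP under the replacement relation. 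The same argument with the roles of $D$ and $D'$ exchanged handles the other direction, so the bound holds for every replacement pair.

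The argument is the standard two-step group-privacy chain, so the only point needing care is well-definedness. The intermediate input $D''$ has size $n-1$. In the heavy hitters setting the tree $\T$ is fixed and independent of the data, and $\mathcal{M}$ accepts datasets of any size. Hence evaluating $\mathcal{M}$ on $D''$ is legitimate, and no further properties of heavy hitters algorithms are used. I expect no real obstacle beyond stating this explicitly.
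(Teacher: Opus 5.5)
Your proposal is correct and is essentially the paper's proof. Both pass through the intermediate dataset obtained by deleting the replaced record, which is an add/remove neighbor of both $D$ and $D'$, and chain the two $(\varepsilon,\delta)$ inequalities to obtain $e^{2\varepsilon}\Pr[\mathcal{M}(D')\in Z] + (1+e^{\varepsilon})\delta$.
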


\begin{proof}
    This is well known in differential privacy. We include a short proof for completeness. Let $i$ be the index that differs between some pair of neighboring datasets $D\sim D'$ under replacement. That is, $D_j = D'_j$ for all $j \neq i$. 
    Then define an intermediate dataset $\hat D = D \setminus \{D_i\} = D' \setminus \{D'_i\}$. We have
    \begin{align*}
        \Pr[\mathcal{M}(D) \in Z] & \leq e^\varepsilon \Pr[\mathcal{M}(\hat D) \in Z] + \delta \\
        & \leq e^\varepsilon (e^\varepsilon\Pr[\mathcal{M}(D') \in Z] + \delta) + \delta \\
        & = e^{2\varepsilon} \Pr[\mathcal{M}(D') \in Z] + (1 + e^\varepsilon) \delta \,,
    \end{align*}
    where the inequalities follow from the fact that $D\sim\hat D$ and $\hat D \sim D'$ are pairs of neighboring datasets under the add/remove definition for which $\mathcal{M}$ satisfies $(\varepsilon,\delta)$-DP.
\end{proof}

\begin{lemma}[Theorem~7.2 of \cite{BalcerV19}]
    \label{app:lem:BV19lower}
    Let $\mathcal{M}$ be a $(\varepsilon, \delta)$-differentially private algorithm (under bounded DP) that takes as input any dataset of size $n$ and $d$ counting queries $(q_1(D), \dots, q_d(D))$ where each data point satisfies exactly one counting query. 
    Assume that $\mathcal{M}$ has the following error guarantee
    \[
        \forall D \in \mathcal{X}^n \quad \forall i \in [d] \quad \Pr[\vert \mathcal{M}(D)_i - q_i(D) \vert \leq \alpha] \geq 1 - \beta
    \]
    with $\beta \in (0, 1/2]$.
    If $\mathcal{M}$ outputs at most $\hat{n}$ non-empty bins.
    Then 
    \[
        \alpha \geq \frac{1}{2} \cdot \min \left(\frac{1}{2\varepsilon} \ln\left(\frac{d}{16 \beta \hat{n}}\right) - 1, \frac{1}{\varepsilon} \ln\left(\frac{\varepsilon}{4\delta}\right) - 1, n\right) \,.
    \]
\end{lemma}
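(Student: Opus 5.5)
This is the packing lower bound of \cite{BalcerV19}. I would re-derive it by combining \emph{group privacy} with a counting argument on the number of non-empty bins; I do not expect to recover the exact constants. Fix $k = \floor{2\alpha} + 1$ and assume $k \leq n$; otherwise the $n$ term in the minimum already gives the claim. Let $D_0$ be a dataset of size $n$ in which every point satisfies query $q_1$. For each $j \in \{2,\dots,d\}$, let $D_j$ be obtained from $D_0$ by moving $k$ of the points so that they satisfy $q_j$ instead. Then $D_0$ and $D_j$ differ in $k$ records under the replacement relation, $q_j(D_0) = 0$, and $q_j(D_j) = k > 2\alpha$.

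First I would use accuracy on $D_j$. With probability at least $1-\beta \geq 1/2$, the estimate satisfies $\mathcal{M}(D_j)_j \geq k - \alpha > \alpha$. In particular bin $j$ is non-empty, and its value exceeds $\alpha$. Let $E_j$ denote this event. Applying $(\varepsilon,\delta)$-DP along a chain of $k$ replacements gives group privacy:
\[
\Pr[\mathcal{M}(D_0) \in E_j] \;\geq\; e^{-k\varepsilon}\Big(\Pr[\mathcal{M}(D_j)\in E_j] - \tfrac{e^{k\varepsilon}-1}{e^{\varepsilon}-1}\,\delta\Big) \;\geq\; e^{-k\varepsilon}\Big(\tfrac12 - \tfrac{e^{k\varepsilon}}{\varepsilon}\,\delta\Big).
\]

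Next I would sum over $j$ and count. Since $\mathcal{M}(D_0)$ has at most $\hat n$ non-empty bins, $\sum_j \Pr[\mathcal{M}(D_0)_j \neq 0] \leq \hat n$. This by itself would give a bound of the form $\ln(d/\hat n)$. The $\beta$ inside the logarithm comes from also using accuracy on $D_0$: each fixed bin $j$ has $\Pr[\mathcal{M}(D_0)_j > \alpha] \leq \beta$, because $q_j(D_0)=0$. The intended step is to combine these two facts, for instance by restricting to a subset of bins or by truncating the expected count, to obtain $\sum_j \Pr[E_j] \lesssim \beta\hat n$ up to constants.

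To conclude I would split into two cases.
\begin{itemize}
\item If the $\delta$ term is at most $1/4$, i.e.\ $e^{k\varepsilon}\delta/\varepsilon \leq 1/4$, then $d\, e^{-k\varepsilon}/4 \lesssim \beta \hat n$. This gives $k\varepsilon \gtrsim \ln\!\big(d/(16\beta\hat n)\big)$, and hence $\alpha \geq k/2 - 1$ is at least the first term of the minimum. The factor $1/(2\varepsilon)$ accounts for the ``$2\varepsilon$'' lost in the add/remove-to-replacement conversion of \Cref{lem:add-remove-to-replacement}.
\item Otherwise $e^{k\varepsilon} > \varepsilon/(4\delta)$, which gives $k > \ln(\varepsilon/(4\delta))/\varepsilon$, the second term of the minimum.
\end{itemize}
Dividing by two and subtracting one to pass from $k$ back to $\alpha$ then yields the stated minimum.

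The main obstacle is the step that brings $\beta$ into the logarithm with exactly the constant $16$. A plain packing argument only gives $\ln(d/\hat n)$. I would first try the truncation idea described above. If that does not reproduce the constants, I would fall back on the statement as cited from \cite{BalcerV19}, since only the asymptotic form is needed for \Cref{lem:informal-lower-bound}.
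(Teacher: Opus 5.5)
The paper gives no proof of this statement. It is quoted from \cite[Theorem~7.2]{BalcerV19} and used as a black box in \Cref{app:lem:lower-bound}, so your fallback is exactly what the paper does. Your own derivation is sound in its setup, the group-privacy step, and the $\delta$- and $n$-cases. However, it has a genuine gap at the step you flagged, and the repair you propose cannot work. You have two facts about $\mathcal{M}(D_0)$: $\Pr[\mathcal{M}(D_0)_j>\alpha]\le\beta$ for each $j\ge 2$, and $\sum_j\Pr[\mathcal{M}(D_0)_j\neq 0]\le\hat n$. Together they only give $\sum_j\Pr[\mathcal{M}(D_0)\in E_j]\le\min(\beta(d-1),\hat n)$, never a bound of order $\beta\hat n$. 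To see this, take an output distribution on $D_0$ that reports bin $1$ correctly plus $\hat n-1$ uniformly random other bins with a large value. It satisfies both facts once $\hat n\le\beta(d-1)$, yet $\sum_j\Pr[\mathcal{M}(D_0)\in E_j]=\hat n-1$. No truncation or restriction to a subset of bins changes this.

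The missing idea is to use the two facts separately and multiply the resulting inequalities. In the case $e^{k\varepsilon}\delta/\varepsilon\le 1/4$, your group-privacy bound gives $\Pr[\mathcal{M}(D_0)_j>\alpha]\ge e^{-k\varepsilon}/4$ for every $j\ge 2$. A single bin combined with accuracy on $D_0$ gives $e^{-k\varepsilon}/4\le\beta$. Summing over bins gives $d\,e^{-k\varepsilon}/4\le\hat n$. Bin $1$ can be included in this sum, turning your $d-1$ into $d$: since $q_1(D_0)=n\ge k>2\alpha$, bin $1$ is non-zero with probability at least $1-\beta\ge 1/2\ge e^{-k\varepsilon}/4$. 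Multiplying the two inequalities,
\[
e^{2k\varepsilon}\ \ge\ \frac{d}{16\beta\hat n}
\qquad\Longrightarrow\qquad
\alpha\ \ge\ \frac{k-1}{2}\ \ge\ \frac12\left(\frac{1}{2\varepsilon}\ln\left(\frac{d}{16\beta\hat n}\right)-1\right),
\]
which is the first term of the minimum with its exact constants.

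So the factor $\frac{1}{2\varepsilon}$ and the constant $16=4\cdot 4$ come from averaging two bounds: the single-bin bound $k\varepsilon\ge\ln\frac{1}{4\beta}$ and the packing bound $k\varepsilon\ge\ln\frac{d}{4\hat n}$. They do not come from the add/remove-to-replacement conversion of \Cref{lem:add-remove-to-replacement}, as you suggest. The lemma already assumes bounded DP, and each of your $k$ replacements costs $\varepsilon$, which is why the $\delta$-term keeps its $\frac{1}{\varepsilon}$. The paper applies that conversion only afterwards, in \Cref{app:lem:lower-bound}, where $\frac{1}{2\varepsilon}$ becomes $\frac{1}{8\varepsilon}$.
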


Balcer and Vadhan refer to the type of error guarantee from the above lemma as the $(\alpha,\beta)$-Per-Query error. We use the following statement in the proof.

\begin{lemma}[{\cite[Figure~3]{BalcerV19}}]
    \label{lem:per-query-error}
    The mechanism that adds Geometric noise to all counting queries satisfies $(\alpha,\beta)$-Per-Query accuracy (under bounded DP) for any 
    \[
        \alpha \geq \ceil{2/\varepsilon \cdot \ln(1/\beta)} \,.
    \]
\end{lemma}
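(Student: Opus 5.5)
The plan is to fix the mechanism concretely and then bound a single geometric tail; there are no composition or union-bound steps, since per-query accuracy is a statement about each coordinate separately.

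\textbf{Step 1: fix the mechanism and its privacy.} I take the mechanism to output $\mathcal{M}(D)_i = q_i(D) + Z_i$ for every $i \in [d]$, where the $Z_i$ are i.i.d.\ two-sided geometric variables with
\[
\Pr[Z_i = z] = \tfrac{1-p}{1+p}\, p^{|z|}, \qquad p = e^{-\varepsilon/2}.
\]
Under the replacement (bounded DP) relation, changing one record moves exactly one unit of count from one bin to another, because each data point satisfies exactly one counting query. So the $\ell_1$-sensitivity of $(q_1,\dots,q_d)$ is $2$. The standard privacy analysis of the geometric mechanism then gives $\varepsilon$-DP with the parameter $p = e^{-\varepsilon/2}$, which is why the factor $2/\varepsilon$ appears in the statement. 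This step only pins down the calibration; the content of the lemma is accuracy.

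\textbf{Step 2: per-query tail bound.} For an integer $k \geq 1$, summing the geometric series gives $\Pr[Z_i \geq k] = p^k/(1+p)$. By symmetry, for an integer $\alpha \geq 0$,
\[
\Pr\bigl[|Z_i| > \alpha\bigr] = \frac{2p^{\alpha+1}}{1+p} \leq p^{\alpha},
\]
where the inequality uses $2p \leq 1+p$ for $p \leq 1$. Since $|\mathcal{M}(D)_i - q_i(D)| = |Z_i|$, it suffices to ensure $p^{\alpha} = e^{-\varepsilon\alpha/2} \leq \beta$. This holds exactly when $\alpha \geq \tfrac{2}{\varepsilon}\ln(1/\beta)$.

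\textbf{Step 3: integrality and monotonicity.} The errors are integer valued, so I take $\alpha$ to be an integer, and the ceiling $\ceil{2/\varepsilon \cdot \ln(1/\beta)}$ is the smallest admissible integer value. Any larger $\alpha$ only shrinks the failure probability, so the $(\alpha,\beta)$-Per-Query guarantee holds for every $\alpha$ at or above this value, for all $D \in \mathcal{X}^n$ and all $i$.

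The only place needing care is matching conventions rather than any hard estimate. Specifically, I must make sure the noise parameter is calibrated to $\ell_1$-sensitivity $2$ under replacement, not $1$; otherwise the constant would come out as $1/\varepsilon$ instead of $2/\varepsilon$. I also need the tail computed as a strict inequality $|Z_i| > \alpha$, so that the bound $p^{\alpha}$, without a $+1$ in the exponent, is valid.
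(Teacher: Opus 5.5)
Your argument is correct. Calibrating $p = e^{-\varepsilon/2}$ to the replacement $\ell_1$-sensitivity $2$ and using the exact tail $\Pr[|Z_i| > \alpha] = 2p^{\alpha+1}/(1+p) \le p^{\alpha}$ gives exactly the threshold $\lceil (2/\varepsilon)\ln(1/\beta)\rceil$.

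The paper does not prove this lemma; it imports the entry from Figure~3 of Balcer and Vadhan. Your computation is therefore a self-contained verification of that cited bound. It is also consistent with the paper's later use of the lemma at budget $2\varepsilon$, which yields $\lceil (1/\varepsilon)\ln(1/\beta)\rceil$.
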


Now we restate an exact version of our informal lower bound \Cref{lem:informal-lower-bound}.
Note that we do not optimize constants. For simplicity, we state the result only for $\varepsilon < 1$. 
This restriction is only used to easily deal with the $e^\varepsilon$ part of the $(1 + e^\varepsilon)\delta$ term that arises from changing DP definitions. 
For a tighter analysis, one should directly analyze our mechanism under replacement DP or carry out the lower bound proof of \cite{BalcerV19} under add/remove DP.

\begin{lemma}
    \label{app:lem:lower-bound}
    Let $\mathcal{M}$ denote an $(\varepsilon, \delta)$-DP mechanism (for some $\varepsilon < 1$) that takes as input a tree $\T$ of height $h > 6 + \log_2(n^2/\beta^{15})$ and a dataset of size $n$ and outputs a set of at most $n^2$ nodes per layer of $\T$. Suppose that with probability at least $1 - \beta$, $\mathcal{M}$ outputs all nodes in $\T$ with count above some value $\hat\tau \in \mathbb{R}$. 
    Then 
    \[
        \hat \tau \geq \frac{1}{2} \cdot \min \left(\frac{1}{8\varepsilon} \ln\left(\frac{2^{h-1}}{32 \beta n^2}\right) - 1, \frac{1}{4\varepsilon} \ln\left(\frac{\varepsilon}{4\delta}\right) - 1, n\right) \,.
    \].
\end{lemma}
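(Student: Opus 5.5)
The plan is a reduction: from $\mathcal{M}$ we build a mechanism for the sparse histogram problem that satisfies the hypotheses of \Cref{app:lem:BV19lower}, and then read off the bound. Take $d = 2^{h-1}$ bins and identify them with the leaves of a complete binary tree $\T$ of height $h$, so each data point satisfies exactly one counting query, namely its leaf. Given a dataset $D$ of size $n$, run $\mathcal{M}$ on $(\T, D)$; by \Cref{lem:add-remove-to-replacement} this is $(2\varepsilon,(1+e^\varepsilon)\delta)$-DP under replacement. Since $\varepsilon<1$, this is at most $(2\varepsilon, 4\delta)$-DP. Let $S$ be the set of leaves that $\mathcal{M}$ outputs; this set has size at most $n^2$.

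To turn $S$ into a histogram, spend a second $(2\varepsilon)$-DP budget (replacement) on releasing noisy counts for the leaves in $S$ only. We use the geometric mechanism of \Cref{lem:per-query-error} at scale $\varepsilon$, which gives per-query error $\lceil (2/\varepsilon)\ln(1/\beta)\rceil$. On leaves outside $S$ we output $0$. This yields at most $\hat n = n^2$ nonempty bins.

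We then control the error on each bin $i$. If $q_i(D) > \hat\tau$, then with probability $1-\beta$ leaf $i$ lies in $S$, and its error is at most the geometric error except with probability $\beta$. If $q_i(D)\le \hat\tau$ and $i\notin S$, the error is at most $\hat\tau$. So every bin has error at most $\alpha = \max(\hat\tau, \lceil(2/\varepsilon)\ln(1/\beta)\rceil)$ with failure probability $2\beta$. By basic composition (\Cref{lem:basic-composition}) the whole mechanism is $(4\varepsilon, 4\delta)$-DP.

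Now apply \Cref{app:lem:BV19lower} with $\varepsilon\to 4\varepsilon$, $\delta\to 4\delta$, $\beta\to 2\beta$, $\hat n = n^2$ and $d=2^{h-1}$. This gives
\[
\alpha \ge \tfrac12\min\Big(\tfrac{1}{8\varepsilon}\ln\tfrac{2^{h-1}}{32\beta n^2}-1,\ \tfrac1{4\varepsilon}\ln\tfrac{4\varepsilon}{16\delta}-1,\ n\Big),
\]
which is exactly the claimed expression, since $\tfrac{4\varepsilon}{16\delta}=\tfrac{\varepsilon}{4\delta}$.

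The main obstacle is the $\max$ in $\alpha$: the bound must fall on $\hat\tau$ and not on the geometric error. I would handle this with the height assumption $h > 6+\log_2(n^2/\beta^{15})$. It makes $\tfrac{1}{16\varepsilon}\ln\tfrac{2^{h-1}}{32\beta n^2}$ exceed $\tfrac{1}{16\varepsilon}\ln(\beta^{-14}) \approx \tfrac{14}{16\varepsilon}\ln(1/\beta)$. The exponent $15$ appears chosen so that this first term, after the "$-1$" and the factor $\tfrac12$, dominates $\lceil(2/\varepsilon)\ln(1/\beta)\rceil$ up to the constants used. If that comparison turns out slightly off, I would sharpen the geometric-noise accuracy, for example by using a smaller failure probability for that component.

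The other two branches of the $\min$ need a separate case analysis. If the minimum is attained by the $\delta$ term or by $n$ and the geometric error exceeds it, the inequality is not immediate. In that case I would argue that the geometric error can be made at most $\hat\tau$ by rescaling its budget, or else simply note that when $\alpha$ equals the geometric term the statement still follows after absorbing constants. This is the step I expect to require the most care.
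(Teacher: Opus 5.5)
Your reduction is the paper's own: the leaf encoding with $d=2^{h-1}$, the conversion to $(2\varepsilon,4\delta)$-DP under replacement, geometric noise on the leaves output by $\mathcal{M}$ and $0$ elsewhere, a $(4\varepsilon,4\delta)$ mechanism with failure probability $2\beta$, and the same substitution into \Cref{app:lem:BV19lower}, which you carry out correctly. The gap is the step you flagged, and it is genuine. \Cref{app:lem:BV19lower} only gives $\max(\hat\tau,g)\ge R$, where $g$ is the geometric error and $R$ is the claimed right-hand side, so you must show $g<R$. The paper settles this in one sentence (``by our assumption on $h$''). Your version of that comparison does not go through, for two reasons. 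First, your calibration is inconsistent. A $(2\varepsilon,0)$ second step, which your $(4\varepsilon,4\delta)$ total requires and the paper uses, gives $g=\ceil{\ln(1/\beta)/\varepsilon}$ by \Cref{lem:per-query-error}, not $\ceil{2\ln(1/\beta)/\varepsilon}$. Second, the height assumption gives $2^{h-1}/(32\beta n^2)>\beta^{-16}$, not $\beta^{-14}$. So half of the first branch is only guaranteed to exceed $\frac{1}{\varepsilon}\ln\frac{1}{\beta}-\frac12$. That is about half of your $g$, and it is not guaranteed to exceed even the corrected $g$. Strict domination needs a slightly stronger hypothesis, e.g.\ $\ln\frac{2^{h-1}}{32\beta n^2}\ge 16\ln\frac{1}{\beta}+24$.

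More seriously, no assumption on $h$ controls the $\delta$-branch or the $n$-branch of the minimum. Take $\beta=1/10$, $\varepsilon=1/2$, $\delta=10^{-6}$, with $h$ and $n$ large. The minimum is then the $\delta$-branch, so $R\approx 2.4$ while $g=5$. \Cref{app:lem:BV19lower} then yields only the vacuous $\max(\hat\tau,5)\ge 2.4$. Your patches do not help. Enlarging the geometric budget to shrink $g$ raises the privacy parameter fed into \Cref{app:lem:BV19lower}, which shrinks all its branches roughly proportionally, so the stated constants are lost. ``Absorbing constants'' is not available for a statement with explicit constants. The paper's proof does not treat these branches either.

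One way to close the gap is to bypass the histogram reduction and use group privacy directly. Let $D_0$ consist of $n$ points in one leaf. Since at most $n^2$ leaves are output, some other leaf $\ell$ has $p=\Pr[\ell\in\mathcal{M}(D_0)]\le n^2/(2^{h-1}-1)$. If $\hat\tau<n$, move $c=\floor{\hat\tau}+1$ points into $\ell$; this is add/remove distance $2c$. The hypothesis then forces $1-\beta\le e^{2c\varepsilon}(p+\delta/\varepsilon)$. For $\beta\le 1/2$ this gives
$\hat\tau\ge c-1\ge\frac{1}{2\varepsilon}\min\bigl(\ln\frac{2^{h-1}-1}{4n^2},\ln\frac{\varepsilon}{4\delta}\bigr)-1.$
Combine this with $\hat\tau\ge 0$ (forced because $2^{h-1}>n^2$) and the trivial case $\hat\tau\ge n$. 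Under the stated height assumption, this dominates all three branches of $R$.
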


\begin{proof}
    We can encode any sparse histogram~(see \cref{app:lem:BV19lower}) with $d = 2^{h-1}$ queries in the leaves of the tree.
    Label the leaves in arbitrary order and set the count of the first leaf to $q_1(D)$, the count of the second leaf to $q_2(D)$ etc. 
    By \Cref{lem:add-remove-to-replacement}, $\mathcal{M}$ satisfies $(2\varepsilon,(1+e^\varepsilon)\delta)<(2\varepsilon,4\delta)$-DP in the setting of \cite{BalcerV19}.
    We can spend additional privacy budget to add Geometric noise calibrated for $(2\varepsilon,0)$-DP to the all leaves of $\T$ that are part of the output of $\mathcal{M}$. This clearly does not violate the requirement of outputting at most $\hat n = n^2$ non-empty bins. 
    For any leaf that was part of the output of $\mathcal{M}$, the $(\alpha,\beta)$ accuracy is $\ceil{1/\varepsilon \cdot \ln(1/\beta)}$ by \cref{lem:per-query-error}. 
    Thus all queries above $\hat \tau$ have this error with failure probability at most $2\beta$.
    By our assumption on $h$ the error of nodes above $\hat \tau$ is lower than the error required by the lower bound.
    The leaves with value below $\hat \tau$ must therefore increase the $(\alpha, \beta)$-Per-Query error.
    We have no requirements on $\mathcal{M}$ for nodes with count less than $\hat \tau$. 
    If we simply output an estimate of zero for all nodes not in the output of $\mathcal{M}$, these leaves can therefore have expected error up to $\hat \tau$.
    Since the mechanism above is $(4\varepsilon,4\delta)$-DP and has failure probability at most $2\beta$ by \Cref{app:lem:BV19lower} we must have
    \[
        \hat \tau \geq \frac{1}{2} \cdot \min \left(\frac{1}{8\varepsilon} \ln\left(\frac{d}{32 \beta n^2}\right) - 1, \frac{1}{4\varepsilon} \ln\left(\frac{\varepsilon}{4\delta}\right) - 1, n\right) \,.
    \]
\end{proof}

%\cl{Note for self. We need the results of Theorem 7.2 here. And likely some condition such as $h > \log(n)$. Without Theorem 7.2 we can only discuss the maximum error, and then we have to deal with bounds on the the $O(n^2)$ samples. Theorem 7.2 is per-query error so we can just use half of $\varepsilon$ for Laplace noise.}

\section{Implementation Details}
\label{app:implementation}

\textbf{Tree construction.}
In this section we present implementation details of our algorithm. In particular, we discuss how to implement \Cref{alg:heavy_hitters} to run in time and space $O(\ell)$, where $\ell$ is the number of nodes with true count at least $\tau - \Delta - 1$. 
Since each data point increments the count for a single node in each layer of the tree we clearly have $\ell = O(nh)$ as claimed in \Cref{thm:build-forest}.
Note that in the pseudocode we explicitly mark all nodes either \textit{Heavy} or \textit{Light} which is infeasible for large trees. In practice, we implement the algorithm to return the set of nodes marked \textit{Heavy}. Any nodes that are not explicitly marked by the algorithm are not returned as they are implicitly \textit{Light}.

First, we briefly discuss how we build the unpruned decision tree in time $O(nh)$ starting with the root. We assume that we always select a splitting feature and threshold in time $O(1)$.
We then split a node with $m$ points in time $O(m)$ using a simple pass over the nodes to classify them for the left or right child node.
Since each data point is part of exactly one split in each layer, the total running time is $O(nh)$.
We do not split empty nodes since all nodes in empty subtrees are eventually marked \textit{Light} when $\tau > 1 + \Delta$. If the threshold is larger, we can similarly stop splitting nodes with less than $\tau - \Delta - 1$ data points.

Next, we show that each part of \Cref{alg:heavy_hitters} can be implemented in linear running time in the size of the tree $O(\ell)$.
In Phase 1 we query all nodes in $L_{\text{mid}}$ and mark ancestors or descendants. 
Marking ancestor of a node takes up to time $O(h)$. However, we stop early when we encounter a node which was already marked, so in total over the entire run of the algorithm this takes at most time $O(\ell)$.
We do not actually need to explicitly mark all descendants of \textit{Light} nodes, since we skip those recursive calls in Phase 2. 
After we finish marking nodes in the tree we find the set of heavy hitters by adding nodes starting from the root. Since we stop exploring any path when we encounter a \textit{Light} node, we never explore the unmarked subtrees.

Now, notice that we call $\textsc{CheckThreshold}_\tau(\#count(u))$ for any node $u$ at most once for the entire execution of all recursive calls. 
If we can access all non-zero nodes in $L_{\text{mid}}$ in time $O(\vert L_{\text{mid}} \vert)$ the entire algorithm therefore runs in $O(\ell)$ (We assume that we sample a Gaussian in time $O(1)$).
However, simply traversing the tree from the root to find $L_{\text{mid}}$ takes time up to $O(h \vert L_{\text{mid}}\vert )$. 
We reduce the running time with some algorithmic engineering.

We label the nodes (including missing empty nodes) in top-down and left-to-right order. The root has label $1$, the layer below the root has labels $2$ and $3$, the next layer contains labels $4$ to $7$ etc.
For any node with label $i$, the left child has label $2 \cdot i$ and the right child has label $2 \cdot i + 1$. This is a standard technique for storing a binary tree in an array without using pointers. 
This labeling is convenient, because when the top node in $\mathcal{T}$ has label $i$, then $L_{\text{mid}}$ are exactly the nodes with labels from $i \cdot 2^{\text{mid}}$ to $(i + 1) \cdot 2^{\text{mid}} -1$, inclusive.
However, for large trees most of these nodes do not exist, so we need to efficiently find all such nodes without explicitly querying each label. Ideally, we do not want to use any advanced data structure to avoid adding unnecessary complexity.

Notice that our algorithm always explores the tree in left-to-right order. 
In particular, if we recursively call the algorithm in ascending label order in Phase 2, then for any particular layer we query the nodes in ascending label order.
As such, when we call $\textsc{CheckThreshold}_\tau(\#count(u))$ we know that no recursive call will ever need to query $\textsc{CheckThreshold}_\tau(\#count(v))$ for any $v$ where $u$ and $v$ has the same depth in the full tree and $\text{label}(v) < \text{label}(u)$.
We utilize this observation to achieve an efficient implementation. We store a queue for each layer in the tree and add all nodes in the tree to the corresponding queue using a single traversal of the tree. 
When we need to access $L_{\text{mid}}$ in a recursive call where the root of the subtree has label $i$ we discard all nodes in the corresponding queue with label $< i \cdot 2^{\text{mid}}$.
We then query all nodes from the queue with label $< (i + 1) \cdot 2^{\text{mid}}$.
Those are exactly the non-empty nodes in $L_{\text{mid}}$ and we query them in left-to-right order as desired.

\textbf{Leaf algorithm.}
We now briefly discuss the privacy analysis of the two leaf algorithms used in our implementation. 
The techniques we use are standard. We use the exponential mechanism for majority vote and Gaussian noise for approximate class distributions. We use zCDP for accounting in both cases.  
For the class distribution we simply add noise from $\mathcal{N}(0,k/(2\rho))$ to each class count. This can create negative counts, but we can clip to non-negative values as post-processing. 

\cite{DPorg-exponential-mechanism-bounded-range} show that the exponential mechanism with privacy parameter $\varepsilon$ satisfies $\frac{1}{8}\varepsilon^2$-zCDP by showing it satisfies a condition known as bounded range $\varepsilon$-bounded range. %\de{Do we need to introduce bounded range or it's conversion to epsilon DP in the preliminaries? It seems very specific.} 
Moreover, in the proof for Lemma 3 in \citep{DPorg-exponential-mechanism-bounded-range}, they make use of the bound
\begin{align*}
    - \Delta \le \ell(y, x') - \ell(y,x) \le \Delta 
\end{align*}
which can be improved in our case, because the scoring function - being the count of each class - is monotonic. Depending on whether we add or remove a row from the dataset, the term $\ell(y, x') - \ell(y,x)$ is in $[0, \Delta]$ or $[-\Delta, 0]$. It follows that the mechanism satisfies $\varepsilon/2$-bounded range and $\frac{1}{32}\varepsilon^2$-zCDP. Monotonicity therefore makes it sufficient to set the $\varepsilon$ parameter of the standard exponential mechanism~\cite{McSherryT07} to $\frac{1}{\sqrt{8\rho / k}}$ instead of $\frac{2}{\sqrt{8\rho / k}}$.

%\de{I probably have to explain which noise scale? But then I would also have to explain that there is an equivalence between EM and noisy max with Gumbel?}
%\cl{I think we can keep it very high level, since the exponential mechanism is standard. I don't know the reference for the monotonicity claim. The only reference I know for zCDP is the post on differentialprivacy.org. You write in the source code that we match opendp. They must have a reference we can use.}

%\newpage
%\input{checklist}

\end{document}